\newcommand*{\dif}{\mathop{}\!\mathrm{d}}
\def\eqref#1{equation~\ref{#1}}
\def\1{\bm{1}}
\DeclareMathAlphabet{\mathsfit}{\encodingdefault}{\sfdefault}{m}{sl}
\SetMathAlphabet{\mathsfit}{bold}{\encodingdefault}{\sfdefault}{bx}{n}
\newtheorem{theorem}{Theorem}
\newtheorem*{theorem1}{Theorem 1}
\newtheorem*{theorem2}{Theorem 2}
\newtheorem*{theorem3}{Theorem 3}
\newtheorem*{theorem4}{Theorem 4}
\newtheorem{definition}{Definition}
\newtheorem{lemma}{Lemma}
\newtheorem{assumption}{Assumption}
\newcommand{\Exp}{\mathrm{\mathbb{E}}}
\newcommand{\Real}{\mathbb{R}}
\newcommand{\nt}{{n_t}}
\title{Cycle Self-Training for Domain Adaptation}
\begin{document}

\author{
	Hong Liu\\
	Dept of Electronic Engineering\\
  Tsinghua University\\
	{\small\texttt{hongliu9903@gmail.com}}
  \And
  Jianmin Wang\\
	School of Software, BNRist\\
  Tsinghua University\\
	{\small\texttt{jimwang@tsinghua.edu.cn}}
  \And
  Mingsheng Long\thanks{Corresponding author: Mingsheng Long (\texttt{mingsheng@tsinghua.edu.cn})}\\
	School of Software, BNRist\\
  Tsinghua University\\
	{\small\texttt{mingsheng@tsinghua.edu.cn}}
}

\maketitle

\begin{abstract}
	Mainstream approaches for unsupervised domain adaptation (UDA) learn domain-invariant representations to narrow the domain shift, which are empirically effective but theoretically challenged by the hardness or impossibility theorems. Recently, self-training has been gaining momentum in UDA, which exploits unlabeled target data by training with target pseudo-labels. However, as corroborated in this work, under distributional shift, the pseudo-labels can be unreliable in terms of their large discrepancy from target ground truth. In this paper, we propose \textit{Cycle Self-Training} (CST), a principled self-training algorithm that explicitly enforces pseudo-labels to generalize across domains. CST cycles between a forward step and a reverse step until convergence. In the forward step, CST generates target pseudo-labels with a source-trained classifier. In the reverse step, CST trains a target classifier using target pseudo-labels, and then updates the shared representations to make the target classifier perform well on the source data. We introduce the Tsallis entropy as a confidence-friendly regularization to improve the quality of target pseudo-labels. We analyze CST theoretically under realistic assumptions, and provide hard cases where CST recovers target ground truth, while both invariant feature learning and vanilla self-training fail. Empirical results indicate that CST significantly improves over the state-of-the-arts on visual recognition and sentiment analysis benchmarks.
\end{abstract}

\vspace{-10pt}
\section{Introduction}
\vspace{-5pt}

Transferring knowledge from a source domain with rich supervision to an unlabeled target domain is an important yet challenging problem. Since deep neural networks are known to be sensitive to subtle change in underlying distributions~\cite{cite:NIPS14CNN}, models trained on one labeled dataset often fail to generalize to another unlabeled dataset~\citep{szegedy2013intriguing,2018Deep}. Unsupervised domain adaptation (UDA) addresses the challenge of distributional shift by adapting the source model to the unlabeled target data~\citep{cite:MIT2009Dataset,cite:TKDE10TLSurvey}.

The mainstream paradigm for UDA is \emph{feature adaptation}, a.k.a. \emph{domain alignment}. By reducing the distance of the source and target feature distributions, these methods learn \textit{invariant representations} to facilitate knowledge transfer between domains~\citep{cite:ICML15DAN,cite:JMLR17DANN,cite:ICML17JAN,cite:CVPR18MCD,cite:NIPS18CDAN,pmlr-v97-zhang19i}, with successful applications in various areas such as computer vision~\citep{cite:CVPR17ADDA,cite:ICML18CYCADA,10.1007/978-3-030-01219-9_18} and natural language processing~\citep{ziser-reichart-2018-pivot,qu-etal-2019-adversarial}. Despite their popularity, the impossibility theories \cite{10.1007/978-3-642-34106-9_14} uncovered intrinsic limitations of learning invariant representations when it comes to label shift~\citep{pmlr-v97-zhao19a,Li2020RethinkingDM} and shift in the support of domains \citep{pmlr-v89-johansson19a}.

\begin{figure}[htbp]
	\centering
	\includegraphics[width=0.90\columnwidth]{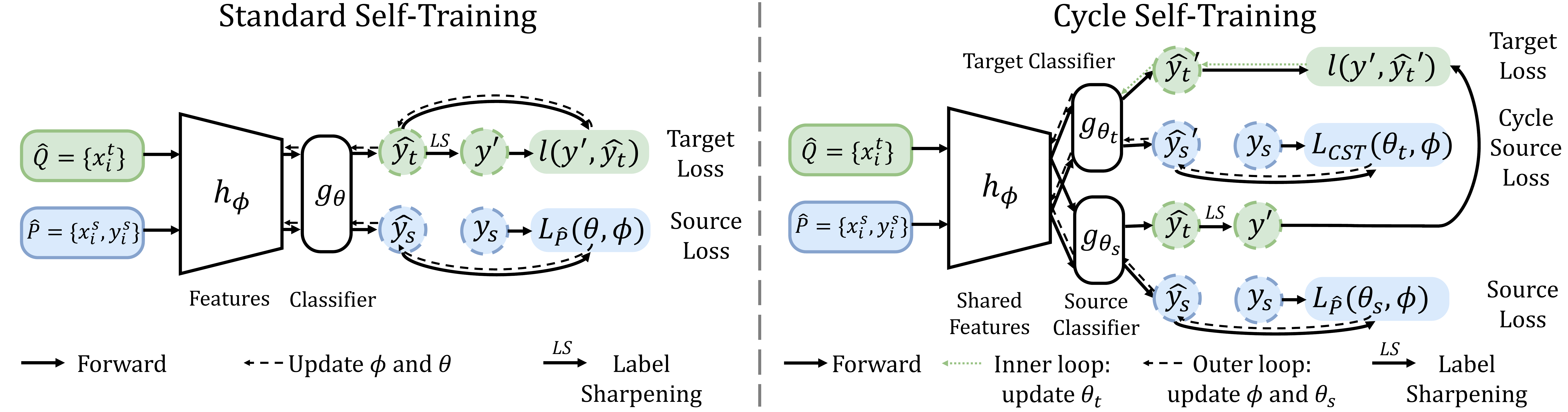}
	\vspace{-5pt}
	\caption{\small\textbf{Standard self-training vs. cycle self-training.} In standard self-training, we generate target pseudo-labels with a source model, and then train the model with both source ground-truths and target pseudo-labels. In cycle self-training, we train a target classifier with target pseudo-labels in the \textbf{inner loop}, and make the target classifier perform well on the source domain by updating the shared representations in the \textbf{outer loop}.}
	\vspace{-10pt}
	\label{fig:rst}
\end{figure}

Recently, \textit{self-training} (a.k.a. \emph{pseudo-labeling})~\citep{french2018selfensembling,Zou_2019_ICCV,pmlr-v119-kumar20c,Li2020RethinkingDM,prabhu2020sentry,xie2020innout} has been gaining momentum as a promising alternative to feature adaptation. Originally tailored to semi-supervised learning, self-training generates pseudo-labels of unlabeled data, and jointly trains the model with source labels and target pseudo-labels~\citep{article-pl,sl2016A,pmlr-v119-kumar20c}. However, the \emph{distributional shift} in UDA makes pseudo-labeling more difficult. Directly using all pseudo-labels is risky due to accumulated error and even trivial solution~\citep{8953748}. Thus previous works tailor self-training to UDA by selecting trustworthy pseudo-labels. Using confidence threshold or reweighting, recent works try to alleviate the negative effect of domain shift in standard self-training~\citep{Zou_2019_ICCV,prabhu2020sentry}, but they can be brittle and require expensive tweaking of the threshold or weight for different tasks, and their performance gain is still inconsistent.

In this work, we first analyze the quality of pseudo-labels with or without domain shift to delve deeper into the difficulty of standard self-training in UDA. On popular benchmark datasets, when the source and target are the same, our analysis indicates that the pseudo-label distribution is almost identical to the ground-truth distribution. However, with distributional shift, their discrepancy can be \emph{very large} with examples of several classes mostly misclassified into other classes. We also study the difficulty of selecting correct pseudo-labels with popular criteria under domain shift. Although entropy and confidence are reasonable selection criteria for correct pseudo-labels without domain shift, the domain shift makes their accuracy decrease sharply.

Our analysis shows that domain shift makes pseudo-labels \emph{unreliable} and that self-training on selected target instances with accurate pseudo-labels is less successful. Thereby, more principled improvement of standard self-training should be tailored to UDA and address the domain shift explicitly.
In this work, we propose \textit{Cycle Self-Training} ({CST}), a principled self-training approach to UDA, which overcomes the limitations of standard self-training (see Figure~\ref{fig:rst}). Different from previous works to select target pseudo-labels with hard-to-tweak protocols, CST learns to generalize the pseudo-labels across domains. Specifically, CST \textit{cycles} between the use of target pseudo-labels to train a target classifier, and the update of shared representations to make the target classifier perform well on the source data. In contrast to the standard Gibbs entropy that makes the target predictions over-confident, we propose a confidence-friendly uncertainty measure based on the \textit{Tsallis entropy} in information theory, which adaptively minimizes the uncertainty without manually tuning or setting thresholds. Our method is simple and generally applicable to vision and language tasks with various backbones.

We empirically evaluate our method on a series of standard UDA benchmarks. Results indicate that CST outperforms previous state-of-the-art methods in 21 out of 25 tasks for object recognition and sentiment classification. Theoretically, we prove that the minimizer of CST objective is endowed with general guarantees of target performance. We also study hard cases on specific distributions, showing that CST recovers target ground-truths while both feature adaptation and standard self-training fail.

\vspace{-5pt}
\section{Preliminaries}\label{sec:pre}
\vspace{-5pt}

We study {unsupervised domain adaptation} (UDA). Consider a source distribution $P$ and a target distribution $Q$ over the input-label space $\mathcal{X} \times \mathcal{Y}$. We have access to $n_s$ labeled \emph{i.i.d.}~samples $\widehat{P}=\{x_i^s,y_i^s\}_{i=1}^{n_s}$ from $P$ and $n_t$ unlabeled \emph{i.i.d.}~samples $\widehat{Q}=\{x_i^t\}_{i=1}^{n_t}$ from $Q$. The model $f$ comprises a feature extractor $h_\phi$ parametrized by $\phi$ and a head (linear classifier) $g_\theta$ parametrized by $\theta$, i.e. $f_{\theta,\phi}(x) = g_\theta(h_\phi(x))$. The loss function is $\ell(\cdot,\cdot)$. Denote by $L_{P}(\theta,\phi):=\Exp_{(x,y)\sim P}\ell(f_{\theta,\phi}(x),y)$ the expected error on $P$. Similarly, we use $L_{\widehat P}(\theta,\phi)$ to denote the empirical error on dataset $\widehat P$.

We discuss two mainstream UDA methods and their formulations: feature adaptation and self-training. 

\textbf{Feature Adaptation}
trains the model $f$ on the source dataset $\widehat P$, and simultaneously matches the source and target distributions in the representation space $\mathcal{Z} = h(\mathcal{X})$:
\begin{align}
\mathop{\min}_{\theta,\phi} L_{\widehat P}(\theta,\phi) + d(h_\sharp{\widehat P},h_\sharp{\widehat Q}).
\end{align}
Here, $h_\sharp{\widehat P}$ denotes the pushforward distribution of $\widehat P$, and $d(\cdot,\cdot)$ is some distribution distance. For instance, \citet{cite:ICML15DAN} used maximum mean discrepancy $d_{\textup{MMD}}$, and \citet{cite:JMLR17DANN} approximated the $\mathcal{H}\Delta\mathcal{H}$-distance $d_{\mathcal{H}\Delta\mathcal{H}}$~\citep{cite:ML10DAT} with adversarial training. Despite its pervasiveness, recent works have shown the intrinsic limitations of feature adaptation under real-world situations~\citep{10.1007/978-3-642-34106-9_14,pmlr-v97-zhao19a,pmlr-v97-liu19b,Li2020RethinkingDM,pmlr-v89-johansson19a}.

\textbf{Self-Training} is considered a promising alternative to feature adaptation. In this work we mainly focus on pseudo-labeling~\citep{article-pl,pmlr-v119-kumar20c}. Stemming from semi-supervised learning, standard self-training trains a source model $f_s$ on the source dataset $\widehat P$: $\mathop{\min}_{\theta_s,\phi_s} L_{\widehat P}(\theta_s,\phi_s)$. The \textit{target pseudo-labels} are then generated by $f_s$ on the target dataset $\widehat Q$.
To leverage unlabeled target data, self-training trains the model on the source and target datasets together with source ground-truths and target pseudo-labels:
\begin{align}\label{eqn:Ljoint}
\mathop{\min}_{\theta,\phi} L_{\widehat P}(\theta,\phi) + \Exp_{x\sim \widehat Q}\ell(f_{\theta,\phi}(x),\mathop{\arg\max}_i\{f_{\theta_s,\phi_s}(x)_{[i]}\}).
\end{align}
Self-training also uses label-sharpening as a standard protocol \citep{article-pl,fixmatch}. Another popular variant of pseudo-labeling is the teacher-student model~\citep{NIPS2014_66be31e4,NIPS2017_68053af2}, which iteratively improves the quality of pseudo-labels via alternatively replacing $\theta_s$ and $\phi_s$ with $\theta$ and $\phi$ of the previous iteration. 

\vspace{-5pt}
\subsection{Limitations of Standard Self-Training}\label{sec:limitations}
\vspace{-5pt}

Standard self-training with pseudo-labels uses unlabeled data efficiently for semi-supervised learning~\citep{article-pl,sl2016A,fixmatch}. Here we carry out exploratory studies on the popular VisDA-2017~\citep{DBLP:journals/corr/abs-1710-06924} dataset using ResNet-50 backbones. We find that domain shift makes the pseudo-labels biased towards several classes and thereby unreliable in UDA. See Appendix~\ref{sec:add11} for details and results on more datasets.

\textbf{Pseudo-label distributions with or without domain shift.} We resample the original VisDA-2017 to simulate different relationship between source and target domains: 1) \emph{i.i.d.}, 2) covariate shift, and 3) label shift. We train the model on the three variants of source dataset and use it to generate target pseudo-labels. We show the distributions of target ground-truths and pseudo-labels in Figure~\ref{fig:pseudo} (Left). When the source and target distributions are identical, the distribution of pseudo-labels is almost the same as ground-truths, indicating the reliability of pseudo-labels. In contrast, when exposed to label shift or covariate shift, the distribution of pseudo-labels is significantly different from target ground-truths. Note that classes 2, 7, 8 and 12 appear rarely in the target pseudo-labels in the covariate shift setting, indicating that the pseudo-labels are biased towards several classes due to domain shift. Self-training with these pseudo-labels is risky since it may lead to misalignment of distributions and misclassify many examples of classes 2, 7, 8 and 12.

\begin{figure*}[t]
  \centering 
   \subfigure{
    \includegraphics[width=0.28\textwidth]{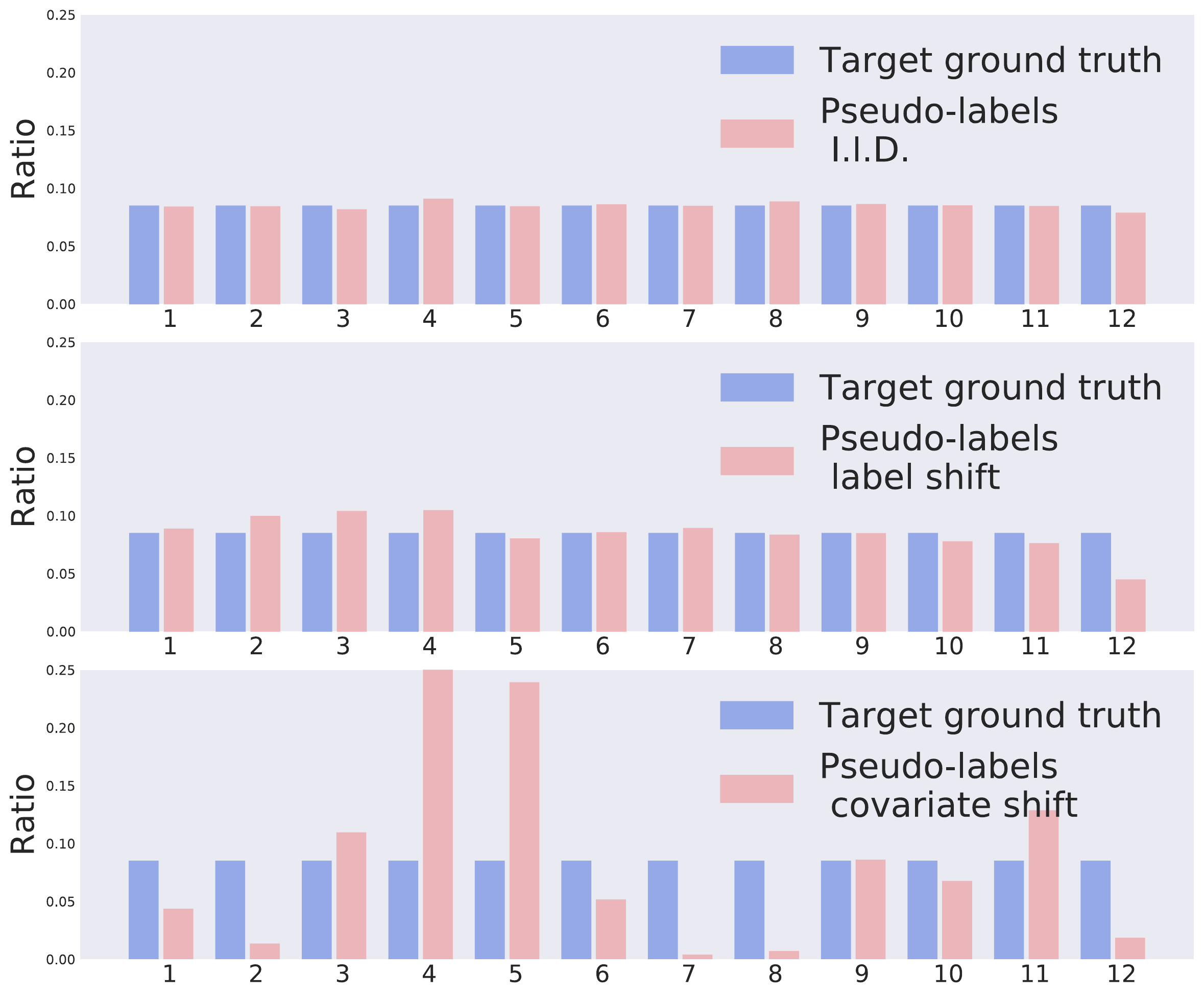} 
    }
    \hfil
   \subfigure{
    \includegraphics[width=0.378\textwidth]{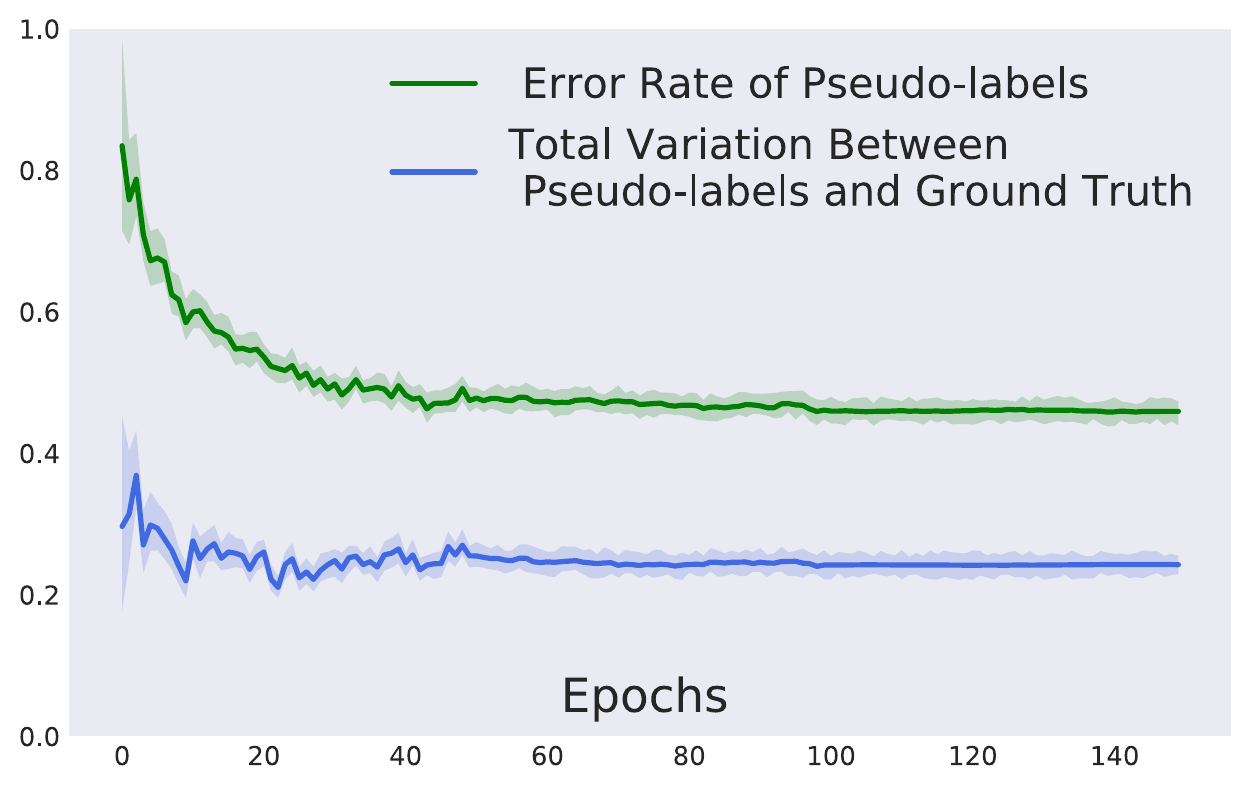}
    }
    \hfil
   \subfigure{
    \includegraphics[width=0.121\textwidth]{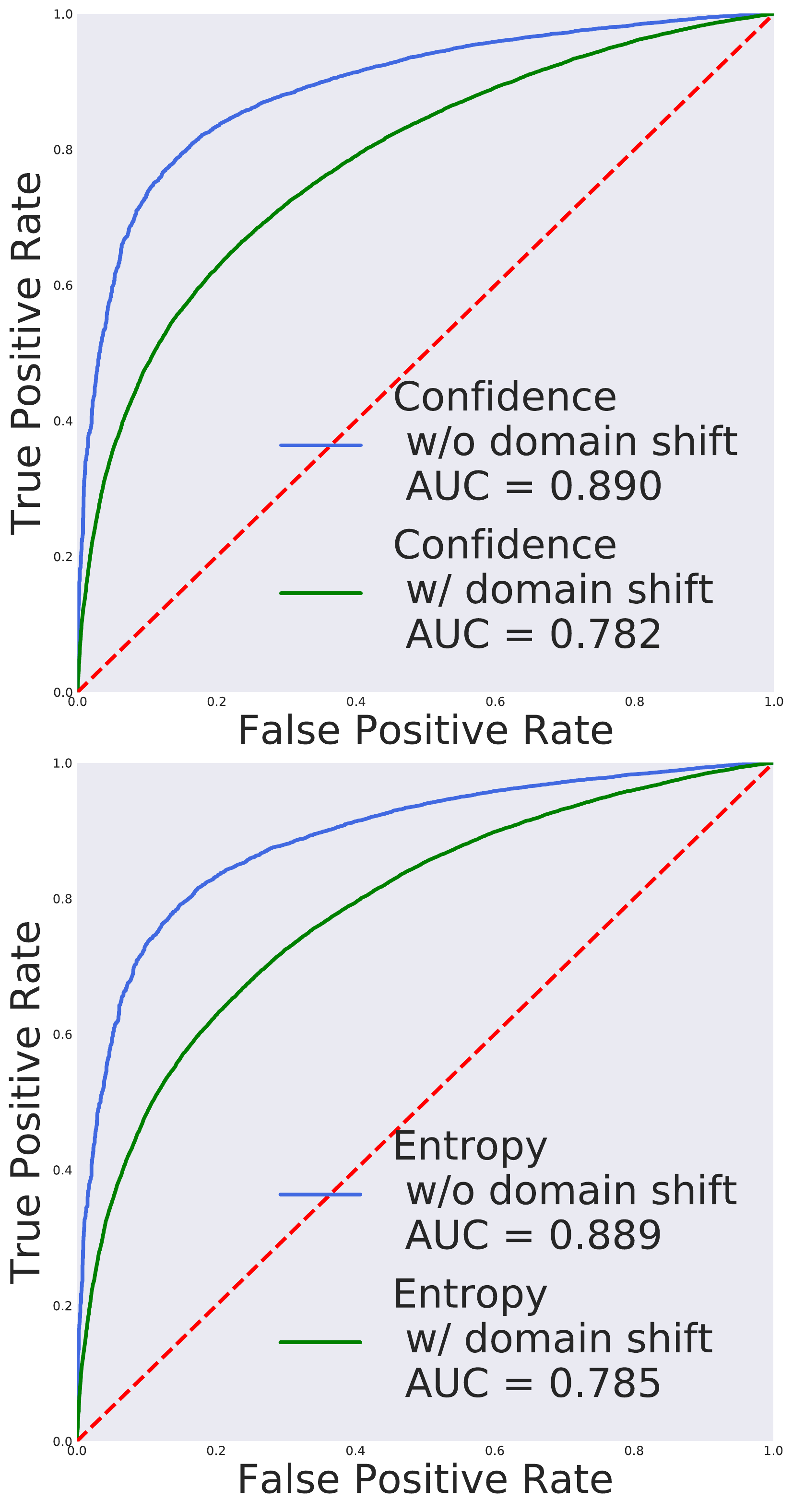}
    }
    \vspace{-5pt}
  \caption{\small\textbf{Analysis of pseudo-labels under domain shift on VisDA-2017. }Left: Pseudo-label distributions with and without domain shift. Middle: Changes of pseudo-label distributions throughout training. Right: Quality of pseudo-labels under different pseudo-label selection criteria. } 
  \label{fig:pseudo}
  \vspace{-10pt}
\end{figure*}

\textbf{Change of pseudo-label distributions throughout training.} To further study the change of pseudo-labels in standard self-training, we compute the total variation (TV) distance between target ground-truths and target pseudo-labels: $d_\textup{TV}(c,c') = \frac{1}{2}\sum_i\|c_i - c_i'\|$, where $c_i$ is the ratio of class $i$. We plot its change during training in Figure~\ref{fig:pseudo} (Middle). Although the error rate of pseudo-labels continues to decrease, $d_\textup{TV}$ remains almost unchanged at $0.26$ throughout training. Note that $d_\textup{TV}$ is the lower bound of the error rate of the pseudo-labels (shown in Appendix~\ref{sec:add11}). If $d_\textup{TV}$ converges to $0.26$, then the accuracy of pseudo-labels is upper-bounded by $0.74$. This indicates that the important denoising ability \cite{wei_2021_ICLR} of pseudo-labels in standard self-training is hindered by domain shift.

\textbf{Difficulty of selecting reliable pseudo-labels under domain shift.} To mitigate the negative effect of false pseudo-labels, recent works proposed to select correct pseudo-labels based on thresholding the entropy or confidence criteria~\citep{cite:NIPS16RTN,french2018selfensembling,cite:NIPS18CDAN,fixmatch}. However, it remains unclear whether these strategies are still effective under domain shift. Here we compare the quality of pseudo-labels selected by different strategies with or without domain shift. For each strategy, we compute False Positive Rate and True Positive Rate for different thresholds and plot its ROC curve in Figure~\ref{fig:pseudo} (Right). When the source and target distributions are identical, both entropy and confidence are reasonable strategies for selecting correct pseudo-labels (AUC=0.89). However, when the target pseudo-labels are generated by the source model, the quality of pseudo-labels decreases sharply under domain shift (AUC=0.78).

\vspace{-5pt}
\section{Approach}\label{sec:method}
\vspace{-5pt}

We present Cycle Self-Training (CST) to improve pseudo-labels under domain shift. An overview of our method is given in Figure~\ref{fig:rst}. Cycle Self-Training iterates between a \textit{forward step} and a \textit{reverse step} to make self-trained classifiers generalize well on both target and source domains.

\vspace{-8pt}
\subsection{Cycle Self-Training}\label{sec:cst}
\vspace{-5pt}

\textbf{Forward Step.} 
Similar to standard self-training, we have a source classifier $\theta_s$ trained on top of the shared representations $\phi$ on the labeled source domain, and use it to generate target pseudo-labels as 
\begin{align}\label{eq:source}
y' = \mathop{\arg\max}_i\{f_{\theta_s,\phi}(x)_{[i]}\},
\end{align}
for each $x$ in the target dataset $\widehat{Q}$. Traditional self-training methods use confidence thresholding or reweighting to select reliable pseudo-labels. For example, \citet{fixmatch} select pseudo-labels with softmax value and \citet{cite:NIPS18CDAN} add entropy reweighting to rely on examples with more confidence prediction. However, the output of deep networks is usually miscalibrated~\citep{pmlr-v70-guo17a}, and is not necessarily related to the ground-truth confidence even on the same distribution. In domain adaptation, as shown in Section~\ref{sec:limitations}, the discrepancy between the source and target domains makes pseudo-labels even more unreliable, and the performance of commonly used selection strategies is also unsatisfactory. Another drawback is the expensive tweaking in order to find the optimal confidence threshold for new tasks. To better apply self-training to domain adaptation, we expect that the model can gradually refine the pseudo-labels by itself without the cumbersome selection or thresholding.

\textbf{Reverse Step.} 
We design a complementary step with the following insights to improve self-training. Intuitively, the labels on the source domain contain both useful information that can transfer to the target domain and harmful information that can make pseudo-labels incorrect. Similarly, \textit{reliable pseudo-labels} on the target domain can transfer to the source domain in turn, while models trained with incorrect pseudo-labels on the target domain cannot transfer to the source domain. In this sense, if we explicitly train the model to make target pseudo-labels informative of the source domain, we can gradually make the pseudo-labels more accurate and learn to generalize to the target domain.

Specifically, with the pseudo-labels $y'$ generated by the source classifier $\theta_s$ at hand as in \eqref{eq:source}, we train a target head $\hat\theta_t(\phi)$ on top of the representation $\phi$ with pseudo-labels on the target domain $\widehat Q$,
\begin{align}\label{eq:target}
\hat\theta_t(\phi) = \mathop{\arg\min}_{\theta}\Exp_{x\sim \widehat Q}\ell(f_{\theta,\phi}(x),y').
\end{align}
We wish to make the target pseudo-labels informative of the source domain and gradually refine them. To this end, we update the shared feature extractor $\phi$ to predict accurately on the source domain and jointly \textit{enforce the target classifier $\hat\theta_t(\phi)$ to perform well on the source domain}. This naturally leads to the objective of \textbf{Cycle Self-Training}:
\begin{align}\label{eq:overall}
\mathop{\textup{minimize}}_{\theta_s,\phi}L_{\textup{Cycle}}({\theta_s,\phi}):= L_{\widehat P}(\theta_s,\phi)+L_{\widehat P}(\hat\theta_t(\phi),\phi).
\end{align}

\vspace{-10pt}
\textbf{Bi-level Optimization.} 
The objective in \eqref{eq:overall} relies on the solution $\hat\theta_t(\phi)$ to the objective in \eqref{eq:target}. Thus, CST formulates a \emph{bi-level} optimization problem. In the \textbf{inner loop} we generate target pseudo-labels with the source classifier (\eqref{eq:source}), and train a target classifier with target pseudo-labels (\eqref{eq:target}). After each inner loop, we update the feature extractor $\phi$ for one step in the \textbf{outer loop} (\eqref{eq:overall}), and start a new inner loop again. However, since the inner loop of the optimization in \eqref{eq:target} only involves the light-weight linear head $\theta_t$, we propose to calculate the analytical form of $\hat\theta_t(\phi)$ and directly back-propagate to the feature extractor $\phi$ instead of calculating the second-order derivatives as in MAML~\citep{pmlr-v70-finn17a}. The resulting framework is as fast as training two heads jointly. Also note that the solution $\hat\theta_t(\phi)$ relies on $\theta_s$ implicitly through $y'$. However, both standard self-training and our implementation use label sharpening, making $y'$ not differentiable. Thus we follow vanilla self-training and \textit{do not} consider the gradient of $\hat\theta_t(\phi)$ w.r.t. $y'$ in the outer loop optimization. We defer the derivation and implementation of bi-level optimization to Appendix~\ref{sec:bilevel}.

\vspace{-8pt}
\subsection{Tsallis Entropy Minimization}\label{sec:tsallis}
\vspace{-5pt}

Gibbs entropy is widely used by existing semi-supervised learning methods to regularize the model output and minimize the uncertainty of predictions on unlabeled data~\citep{cite:NIPS04SSLEM}. In this work, we generalize Gibbs entropy to Tsallis entropy~\cite{1988Possible} in information theory. 
Suppose the softmax output of a model is $y\in\Real^K$, then the $\alpha$-\textit{Tsallis entropy} is defined as
\vspace{-5pt}
\begin{align}
S_\alpha(y) = \frac{1}{\alpha-1} \left(1 - \sum y_{[i]}^\alpha \right),
\end{align} 
where $\alpha>0$ is the \textit{entropic-index}. Note that $\operatorname{lim}_{\alpha\rightarrow 1} S_\alpha(y) = \sum_i -y_{[i]}\textup{log}(y_{[i]})$ which exactly recovers the Gibbs entropy. When $\alpha = 2$, $S_\alpha(y)$ becomes the Gini impurity $1-\sum_{i}y_{[i]}^2$.

\begin{algorithm}[bp]
\caption{Cycle Self-Training (CST)}
\label{alg:CST}
\begin{algorithmic}[1]
\STATE {\bfseries Input:} source dataset $\widehat P$ and target dataset $\widehat Q$.
\FOR{$\textup{epoch}=0$ {\bfseries to} \texttt{MaxEpoch}} 
	\STATE  Select $\hat\alpha$ as~\eqref{eq:alpha} at the start of each epoch.
	\FOR{$t=0$ {\bfseries to} \texttt{MaxIter}} 
	\STATE \textbf{Forward Step}
	\STATE Generate pseudo-labels on the target domain with $\phi$ and $\theta_s$: $y' = \mathop{\arg\max}_i\{f_{\theta_s,\phi}(x)_{[i]}\}$.
	\STATE \textbf{Reverse Step}
	\STATE Train a target head $\hat\theta_t(\phi)$ with target pseudo-labels $y'$ on the feature extractor $\phi$:
	\vspace{-5pt}
	\begin{align}
	\hat\theta_t(\phi) = \mathop{\arg\min}_{\theta}\Exp_{x\sim \widehat Q}\ell(f_{\theta,\phi}(x),y').\nonumber
	\end{align} 
	\vspace{-12pt}
	\STATE Update the feature extractor $\phi$ and the source head $\theta_s$ to make $\hat\theta_t(\phi)$ perform well on the source dataset and minimize the $\hat\alpha$-Tsallis entropy on the target dataset:
	\vspace{-5pt}
	\begin{align}
	\phi \leftarrow \phi &- \eta \nabla_{\phi} [L_{\widehat P}(\theta_s,\phi)
	+L_{\widehat P}(\hat\theta_t(\phi),\phi) + L_{\widehat Q, \textup{Tsallis},\hat\alpha}({\theta_s,\phi})].\label{opt1}\\
	&\theta_s \leftarrow \theta_s  - \eta \nabla_{\theta_s} [L_{\widehat P}(\theta_s,\phi)
	+ L_{\widehat Q, \textup{Tsallis},\hat\alpha}({\theta_s,\phi})].\label{opt2}
	\end{align}
	\vspace{-23.5pt}
\ENDFOR
\ENDFOR
\end{algorithmic}
\end{algorithm}

We propose to control the uncertainty of target pseudo-labels based on \textbf{Tsallis entropy minimization}:
\begin{align}
L_{\widehat Q, \textup{Tsallis},\alpha}({\theta,\phi}):= \Exp_{x\sim \widehat Q}S_\alpha(f_{\theta,\phi}(x)).
\end{align} 

\begin{wrapfigure}{r}{6cm}
\centering
\vspace{-12pt}
\includegraphics[width=0.42\columnwidth]{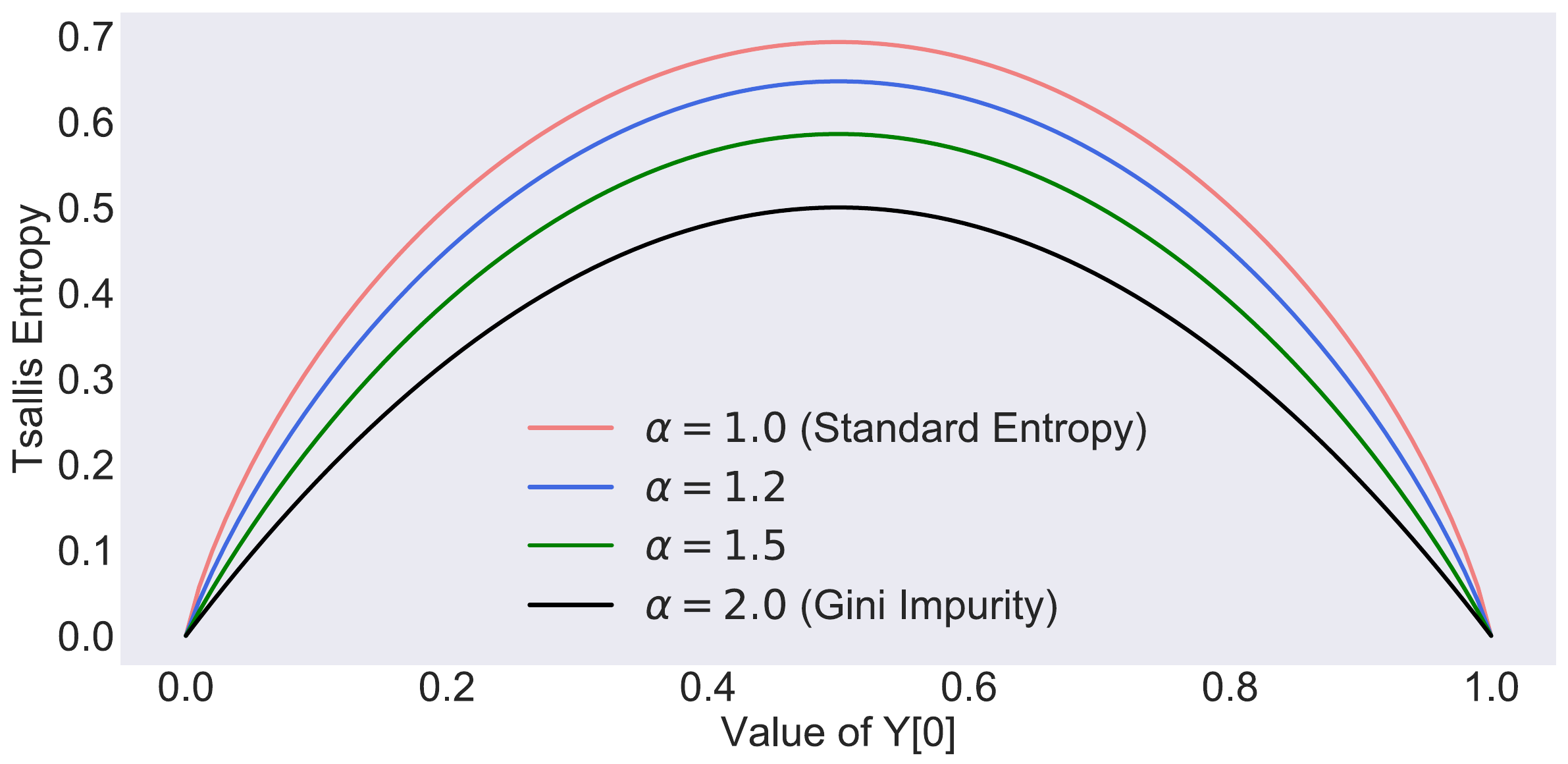}
\vspace{-8pt}
\caption{\small Tsallis entropy vs. entropic-index $\alpha$.}
\vspace{-22pt}
\label{fig:entt}
\end{wrapfigure}

Figure~\ref{fig:entt} shows the change of Tsallis entropy with different entropic-indices $\alpha$ for binary problems. Intuitively, smaller $\alpha$ exerts more penalization on uncertain predictions and larger $\alpha$ allows several scores $y_i$'s to be similar. This is critical in self-training since an overly small $\alpha$ (as in Gibbs entropy) will make the incorrect dimension of pseudo-labels close to $1$ and have no chance to be corrected throughout training. In Section~\ref{analysis}, we further verify this property with experiments.

An important improvement of the Tsallis entropy over Gibbs entropy is that it can choose the suitable measure of uncertainty for different systems to avoid over-confidence caused by overly penalizing the uncertain pseudo-labels. To automatically find the suitable $\alpha$, we adopt a similar strategy as Section~\ref{sec:cst}. The intuition is that if we use the suitable entropic-index $\alpha$ to train the source classifier $\theta_{s,\alpha}$, the target pseudo-labels generated by $\theta_{s,\alpha}$ will contain desirable knowledge of the source dataset, i.e. a target classifier $\theta_{t,\alpha}$ trained with these pseudo-labels will perform well on the source domain. Therefore, we semi-supervisedly train a classifier $\hat{\theta}_{s,\alpha}$ on the source domain with the $\alpha$-Tsallis entropy regularization $L_{\widehat Q, \textup{Tsallis},\alpha}$ on the target domain as: $\hat{\theta}_{s,\alpha} = \mathop{\arg\min}_\theta L_{\widehat P}(\theta,\phi)+L_{\widehat Q, \textup{Tsallis},\alpha}({\theta,\phi})$, from which we obtain the target pseudo-labels. Then we train another head $\hat{\theta}_{t,\alpha}$ with target pseudo-labels. We automatically find $\alpha$ by minimizing the loss of $\hat{\theta}_{t,\alpha}$ on the source data:
\begin{align}\label{eq:alpha}
\hat{\alpha} = \mathop{\arg\min}_{\alpha\in[1,2]}L_{\widehat P}(\hat{\theta}_{t,\alpha},\phi)
\end{align}
To solve \eqref{eq:alpha}, we discretize the feasible region $[1,2]$ of $\alpha$ and use discrete optimization to lower computational cost. We also update $\alpha$ at the start of each epoch, since we found more frequent update leads to no performance gain. Details are deferred to Appendix~\ref{sec:select}.
Finally, with the optimal $\hat\alpha$ found, we add the $\hat\alpha$-Tsallis entropy minimization term $L_{\widehat Q, \textup{Tsallis},\hat\alpha}$ to the overall objective:
\begin{align}\label{eqn:cst_loss}
\mathop{\textup{minimize}}_{\theta_s,\phi}L_{\textup{Cycle}}({\theta_s,\phi})+ L_{\widehat Q, \textup{Tsallis},\hat\alpha}({\theta_s,\phi}).
\end{align}
In summary, Algorithm~\ref{alg:CST} depicts the complete training procedure of Cycle Self-Training (CST).

\vspace{-5pt}
\section{Theoretical Analysis}\label{sec:theory}
\vspace{-5pt}

We analyze the properties of CST theoretically. First, we prove that the minimizer of the CST loss $L_{\textup{CST}}(f_s,f_t)$ will lead to small target loss $\textup{Err}_{Q}(f_s)$ under a simple but realistic expansion assumption. Then, we further demonstrate a concrete instantiation where cycle self-training provably recovers the target ground truth, but both feature adaptation and standard self-training fail. \emph{Due to space limit, we state the main results here and defer all proof details to Appendix~\ref{sec:proof}}.

\vspace{-6pt}
\subsection{CST Provably Works under the Expansion Assumption}\label{sec:expansion}
\vspace{-5pt}

We start from a $K$-way classification model, $f: \mathcal{X} \rightarrow [0,1]^K \in \mathcal{F}$ and $\tilde {f}(x) := \arg\max_{i} f(x)_{[i]}$ denotes the prediction. Denote by $P_i$ the conditional distribution of $P$ given $y = i$. Assume the supports of $P_i$ and $P_j$ are disjoint for $i \neq j$. The definition is similar for $Q_i$. We further Assume $P(y = i) = Q(y = i)$. For any $x \in \mathcal X$, $\mathcal{N}(x)$ is defined as the \emph{neighboring set} of $x$ with a proper metric $d(\cdot,\cdot)$, $\mathcal{N}(x) = \{x':d(x,x') \leq \xi\}$. $\mathcal{N}(A) := \cup_{x \in A}\mathcal{N}(x)$. Denote the expected error on the target domain by $\text{Err}_Q(f) := \Exp_{(x,y)\sim Q}\mathbb{I}(\tilde {f}(x)\neq y)$. 

We study the CST algorithm under the \textit{expansion assumption} of the mixture distribution~\citep{wei_2021_ICLR,cai2021theory}. Intuitively, this assumption indicates that the conditional distributions $P_i$ and $Q_i$ are closely located and regularly shaped, enabling knowledge transfer from the source domain to the target domain.

\begin{definition}[\textbf{$(q,\epsilon)$-constant expansion}~\citep{wei_2021_ICLR}]\label{def:exp} We say $P$ and $Q$ satisfy $(q,\epsilon)$-constant expansion for some constant $q,\epsilon \in (0,1)$, if for any set $A \in \mathcal{X}$ and any $i \in [K]$ with $\frac{1}{2} > P_{\frac{1}{2}(P_i + Q_i)}(A) > q$, we have $P_{\frac{1}{2}(P_i + Q_i)}(\mathcal {N}(A) \backslash A) > \mathop{\min}\{\epsilon, P_{\frac{1}{2}(P_i + Q_i)}(A)\}$.
\end{definition}

Based on this expansion assumption, we consider a \emph{robustness-constrained} version of CST. Later we will show that the robustness is closely related to the uncertainty. Denote by $f_s$ the source model and $f_t$ the model trained on the target with pseudo-labels. Let $R(f_t) := P_{\frac{1}{2}(P + Q)}(\{x:\exists x'\in \mathcal {N}(x), \tilde {f}_t(x) \neq \tilde {f}_t(x')\})$ represent the robustness~\cite{wei_2021_ICLR} of $f_t$ on $P$ and $Q$. Suppose $\Exp_{(x,y)\sim Q}\mathbb{I}(\tilde {f}_s(x)\neq \tilde {f}_t(x)) \le c$ and $R(f_t) \le \rho$. The following theorem states that when $f_s$ and $f_t$ behave similarly on the target domain $Q$ and $f_t$ is robust to local changes in input, the minimizer of the cycle source error $\textup{Err}_P(f_t)$ will guarantee low error of $f_s$ on the target domain $Q$.

\begin{theorem}\label{theorem_gen}
Suppose Definition~\ref{def:exp} holds for $P$ and $Q$. For any $f_s,f_t$ satisfying $\Exp_{(x,y)\sim Q}\mathbb{I}(\tilde {f}_s(x)\neq \tilde {f}_t(x)) \le c$ and $R(f_t) \le \rho$, the expected error of $f_s$ on the target domain $Q$ is bounded,
\begin{align}
\textup{Err}_Q(f_s) \le \textup{Err}_P(f_t) + c + 2q + \frac{\rho}{\min\{\epsilon , q\}}.
\end{align}
\end{theorem}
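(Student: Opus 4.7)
The plan is to insert $f_t$ via a triangle-type inequality and then propagate $f_t$'s low source error to low target error using the class-wise expansion assumption together with the robustness budget. First, from the hypothesis $\mathbb{E}_{(x,y)\sim Q}\mathbb{I}(\tilde f_s(x)\neq\tilde f_t(x))\le c$ and the pointwise bound $\mathbb{I}(\tilde f_s(x)\neq y)\le \mathbb{I}(\tilde f_s(x)\neq \tilde f_t(x))+\mathbb{I}(\tilde f_t(x)\neq y)$, I would obtain $\textup{Err}_Q(f_s)\le c+\textup{Err}_Q(f_t)$, which reduces the problem to showing $\textup{Err}_Q(f_t)\le \textup{Err}_P(f_t)+2q+\rho/\min\{\epsilon,q\}$.

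Next I would fix a class $i$ and let $A_i=\{x\in\textup{supp}(P_i\cup Q_i):\tilde f_t(x)\neq i\}$ be the mistake set of $f_t$ on class $i$, writing $\mu_i=(P_i+Q_i)/2$, $m_i=\mu_i(A_i)$, $\alpha_i=P_i(A_i)$, $\beta_i=Q_i(A_i)$, so that $m_i=(\alpha_i+\beta_i)/2$, $\textup{Err}_P(f_t)=\sum_i P(y=i)\alpha_i$, and $\textup{Err}_Q(f_t)=\sum_i P(y=i)\beta_i$. The key observation is that any $x\in\mathcal{N}(A_i)\setminus A_i$ is labeled $i$ by $\tilde f_t$ yet has a neighbor $x'\in A_i$ with $\tilde f_t(x')\neq \tilde f_t(x)$, so $\mathcal{N}(A_i)\setminus A_i$ lies in the non-robust set that defines $R(f_t)$. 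Weighting by $P(y=i)$ and summing over classes therefore yields the global robustness budget $\sum_i P(y=i)\,\mu_i(\mathcal{N}(A_i)\setminus A_i)\le R(f_t)\le \rho$, against which the expansion inequality must compete.

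To finish, I would split the classes by the size of $A_i$ and rewrite $\textup{Err}_Q(f_t)-\textup{Err}_P(f_t)=\sum_i P(y=i)(\beta_i-\alpha_i)$. For classes with $m_i\le q$, $\beta_i\le 2m_i\le 2q$, so their total contribution to the gap is at most $2q$. For classes with $q<m_i<1/2$, Definition~\ref{def:exp} gives $\mu_i(\mathcal{N}(A_i)\setminus A_i)>\min\{\epsilon,m_i\}\ge \min\{\epsilon,q\}$; dividing the robustness budget by $\min\{\epsilon,q\}$ bounds the total $P(y=i)$-mass of these classes by $\rho/\min\{\epsilon,q\}$ and hence their combined contribution to the gap by $\rho/\min\{\epsilon,q\}$. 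Classes with $m_i\ge 1/2$ I would handle by applying the expansion instead to the complement $A_i^c\subset\textup{supp}(P_i\cup Q_i)$, whose $\mu_i$-measure is $1-m_i\le 1/2$; a symmetric argument shows $\mathcal{N}(A_i^c)\setminus A_i^c$ lies in the same non-robust set, while the residual catastrophic mass (where $1-m_i\le q$, so $\alpha_i\ge 2m_i-1\ge 1-2q$) is already absorbed into $\textup{Err}_P(f_t)$ directly.

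The hard part will be this third case: Definition~\ref{def:exp} is only stated for $A$ with $\mu_i(A)<1/2$, so one must carefully switch to the complement when $m_i\ge 1/2$ and verify that $\mathcal{N}(A_i^c)\setminus A_i^c$ still sits inside the non-robust set underlying $R(f_t)$, otherwise the robustness budget cannot absorb this mass. Once this boundary case is cleanly dispatched, summing the three contributions yields $\textup{Err}_Q(f_t)\le \textup{Err}_P(f_t)+2q+\rho/\min\{\epsilon,q\}$, and chaining with the triangle step from the first paragraph produces the stated bound.
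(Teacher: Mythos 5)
Your proposal is correct and follows essentially the same route as the paper: after the triangle inequality reduces the claim to bounding $\textup{Err}_Q(f_t)-\textup{Err}_P(f_t)$, you run the same per-class argument, applying the expansion property to the mistake set (or to its complement when its mixture mass is at least $1/2$) and charging the resulting boundary mass to the robustness budget $\rho$, which is exactly the content of the paper's lemmas on robust sub-populations. The only difference is bookkeeping—you partition classes by the mixture mass $m_i$ of the mistake set and sum the budget directly, while the paper first isolates classes with per-class robustness above $\min\{\epsilon,q\}$ and then proves the $2q$-dichotomy on the rest—and the boundary case where the relevant set has mass exactly $1/2$ is handled no less carefully in your sketch than in the paper's own proof.
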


To further relate the expected error with the CST training objective and obtain finite-sample guarantee, we use the multi-class margin loss: $l_{\gamma}(f(x),y):=\psi_{\gamma}(-\mathcal{M}(f(x),y))$, where $\mathcal{M}(v,y) = v_{[y]} - \max_{y'\neq y}v_{[y']}$ and $\psi_{\gamma}$ is the ramp function. We then extend the margin loss: $\mathcal{M}(v) = \max_{y}(v_{[y]} - \max_{y' \neq y}v_{[y']})$ (The difference between the largest and the second largest scores in $v$), and $l_{\gamma}(f_t(x),f_s(x)):=\psi_{\gamma}(-\mathcal{M}(f_t(x), \tilde{f}_s(x)))$. Further suppose $f_{[i]}$ is $L_{f}$-Lipschitz w.r.t. the metric $d(\cdot,\cdot)$ and $\tau:= 1 - 2L_{f}\xi\min\{\epsilon , q\}> 0$. Consider the following training objective for CST, denoted by $L_{\textup{CST}}(f_s,f_t)$, where $L_{\widehat P, \gamma}(f_t):=\Exp_{(x,y) \sim {\widehat P}} l_{\gamma}(f_t(x),y)$ corresponds to the cycle source loss in~\eqref{eq:overall}, $L_{\widehat Q, \gamma}(f_t, f_s) := \Exp_{(x,y) \sim {\widehat Q}} l_{\gamma}(f_t(x),f_s(x))$ is consistent with the target loss in~\eqref{eq:target}, and $\mathcal{M}(f_t(x))$ is closely related to the uncertainty of predictions in~\eqref{eqn:cst_loss}.
\begin{align}\label{eq:genalg1} 
\mathop{\min}L_{\textup{CST}}(f_s,f_t) := L_{\widehat P, \gamma}(f_t) +L_{\widehat Q, \gamma}(f_t, f_s) + \frac{1 -\Exp_{(x,y)\sim \frac{1}{2}(\widehat P+\widehat Q)}\mathcal{M}(f_t(x))}{\tau}. 
\end{align}
The following theorem shows that the minimizer of the training objective $L_{\textup{CST}}(f_s,f_t)$ guarantees low population error of $f_s$ on the target domain $Q$.
\begin{theorem}\label{theorem_gen1}
$\widehat{\mathcal{R}}(\mathcal{F}|_{\widehat P})$ denotes the empirical Rademacher complexity of function class $\mathcal{F}$ on dataset $\widehat{P}$. For any solution of~\eqref{eq:genalg1} and $\gamma > 0$, with probability larger than $1 - \delta$, 
\begin{align}
\textup{Err}_Q(f_s) &\le L_{\textup{CST}}(f_s,f_t) + 2q + \frac{4K}{\gamma} \left[\widehat{\mathcal{R}}(\mathcal{F}|_{\widehat P}) +\widehat{\mathcal{R}}(\tilde{\mathcal{F}}\times\mathcal{F}|_{\widehat Q})\right] + \frac{2}{\tau} \left[\widehat{\mathcal{R}}(\mathcal{F}|_{\widehat P}) +\widehat{\mathcal{R}}(\mathcal{F}|_{\widehat Q})\right] +\zeta,\nonumber
\end{align}
where $\zeta = O\left(\sqrt{\textup{log}(1 / \delta) / {n_s}}+\sqrt{\textup{log}(1 / \delta) / {n_t}}\right)$ is a low-order term. $\tilde{\mathcal{F}} \times \mathcal{F}$ refers to the function class $\{x \rightarrow f(x)_{[\tilde{f'}(x)]}: f,f' \in \mathcal{F}\}$. 
\end{theorem}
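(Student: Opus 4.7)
The plan is to start from the population-level guarantee of Theorem~\ref{theorem_gen} and then convert each right-hand-side quantity into its empirical margin-loss counterpart via uniform-convergence bounds. Concretely, I would instantiate Theorem~\ref{theorem_gen} with $c = \Exp_{(x,y)\sim Q}\mathbb{I}(\tilde f_s(x) \neq \tilde f_t(x))$ and $\rho = R(f_t)$, which yields
$$\textup{Err}_Q(f_s) \le \textup{Err}_P(f_t) + c + 2q + \frac{\rho}{\min\{\epsilon, q\}}.$$
The remaining work is to upper bound the three data-dependent terms on the right-hand side by the pieces of $L_{\textup{CST}}(f_s,f_t)$ plus Rademacher complexities.

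For the first two terms I would use the standard ramp-margin surrogate. The inequality $\mathbb{I}(\tilde f_t(x)\neq y)\le l_\gamma(f_t(x),y)$ gives $\textup{Err}_P(f_t)\le \Exp_P l_\gamma(f_t,y)$, and by the definition of the extended margin $\mathcal{M}(\cdot,\cdot)$ and the ramp function $\psi_\gamma$, $\mathbb{I}(\tilde f_t\neq \tilde f_s)\le l_\gamma(f_t(x),f_s(x))$, so $c\le \Exp_Q l_\gamma(f_t,f_s)$. For the robustness term, the $L_f$-Lipschitzness of each coordinate of $f$ implies that whenever $\mathcal{M}(f_t(x))\ge 2L_f\xi$ the argmax of $f_t$ is constant on $\mathcal{N}(x)$; hence $R(f_t)\le P_{\frac{1}{2}(P+Q)}(\mathcal{M}(f_t(x))<2L_f\xi)$. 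Upper bounding the indicator by a ramp in $(1-\mathcal{M}(f_t))$ and combining with the expansion factor $\min\{\epsilon,q\}$ that is already in the denominator, one obtains
$$\frac{\rho}{\min\{\epsilon,q\}}\ \le\ \frac{1-\Exp_{\frac{1}{2}(P+Q)}\mathcal{M}(f_t(x))}{\tau},$$
which is exactly the margin-regularization term appearing in~\eqref{eq:genalg1}.

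Finally I would move each population expectation to an empirical average through the standard multi-class margin-based Rademacher bound (Mohri--Rostamizadeh--Talwalkar, Theorem~9.2). For $\Exp_P l_\gamma(f_t,y)$ uniform convergence over $\mathcal{F}$ costs $\frac{2K}{\gamma}\widehat{\mathcal{R}}(\mathcal{F}|_{\widehat P})$; for $\Exp_Q l_\gamma(f_t,f_s)$ the ``label'' $\tilde f_s(x)$ is itself trained, so the natural function class is the compound $\tilde{\mathcal{F}}\times\mathcal{F}$, giving $\frac{2K}{\gamma}\widehat{\mathcal{R}}(\tilde{\mathcal{F}}\times\mathcal{F}|_{\widehat Q})$; the overall factor $\frac{4K}{\gamma}$ then follows from simultaneously controlling deviations for both $f_s$ and $f_t$. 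For the margin functional $1-\mathcal{M}(f_t(x))$, which is $1$-Lipschitz in the vector of scores, a Talagrand contraction and a split of the mixture $\frac{1}{2}(\widehat P+\widehat Q)$ yields the $\frac{2}{\tau}[\widehat{\mathcal{R}}(\mathcal{F}|_{\widehat P})+\widehat{\mathcal{R}}(\mathcal{F}|_{\widehat Q})]$ term. The residual $O\bigl(\sqrt{\log(1/\delta)/n_s}+\sqrt{\log(1/\delta)/n_t}\bigr)$ comes from McDiarmid's inequality, and a union bound over the three deviation events absorbs the logarithmic factor into $\zeta$.

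The main obstacle is the robustness-to-margin conversion in the middle paragraph: the naive Markov bound $P(\mathcal{M}<2L_f\xi)\le \Exp[1-\mathcal{M}]/(1-2L_f\xi)$ yields a denominator $(1-2L_f\xi)\min\{\epsilon,q\}$ rather than $\tau=1-2L_f\xi\min\{\epsilon,q\}$, so one has to exploit $\min\{\epsilon,q\}\le 1$ and a carefully chosen ramp that mixes the expansion and Lipschitz constants together, rather than applying Lipschitzness and expansion in two separate steps. A secondary technical issue is verifying that the Rademacher complexity of $\tilde{\mathcal{F}}\times\mathcal{F}$ on $\widehat Q$ indeed upper bounds the uniform deviation for $\ell_\gamma(f_t(x),f_s(x))$; this follows because $\psi_\gamma$ is $1/\gamma$-Lipschitz and the extended margin $\mathcal{M}(f_t(x),\tilde f_s(x))$ is a coordinate-selection of $f_t$ indexed by $\tilde f_s$, so the usual $K$-class contraction argument goes through with $\mathcal{F}$ replaced by $\tilde{\mathcal{F}}\times\mathcal{F}$.
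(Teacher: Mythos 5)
Your plan is essentially the paper's own proof: instantiate Theorem~\ref{theorem_gen} with $c$ and $\rho$, replace $\textup{Err}_P(f_t)$ by the empirical ramp-margin loss via the standard multi-class Rademacher bound, bound the disagreement term via a margin bound over the compound class $\tilde{\mathcal{F}}\times\mathcal{F}$, convert robustness into the confidence functional $1-\mathcal{M}(f_t)$ through $L_f$-Lipschitzness, add two-sided uniform convergence of $\mathcal{M}$ over $\frac{1}{2}(\widehat P+\widehat Q)$, and finish with a union bound --- exactly the roles played by the paper's Lemmas~\ref{lem:rob_ent}--\ref{lem:rad_bound}.

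The only substantive divergence is the step you single out as the main obstacle, and your proposed repair does not work: $\min\{\epsilon,q\}\le 1$ gives $(1-2L_f\xi)\min\{\epsilon,q\}\le 1-2L_f\xi\min\{\epsilon,q\}$, so a bound with denominator $(1-2L_f\xi)\min\{\epsilon,q\}$ is \emph{weaker}, not stronger, than one with denominator $1-2L_f\xi\min\{\epsilon,q\}$; no choice of ramp mixing the two constants fixes this direction. The resolution is that no repair is attempted in the paper: its proof simply composes Lemma~\ref{lem:rob_ent}, $R(f_t)\le\bigl(1-\Exp_{\frac{1}{2}(P+Q)}\mathcal{M}(f_t)\bigr)/(1-2L_f\xi)$, with the factor $1/\min\{\epsilon,q\}$ from Theorem~\ref{theorem_gen}, so the denominator that actually arises is the product $(1-2L_f\xi)\min\{\epsilon,q\}$, and $\tau$ must be read (or corrected) as that product --- under the literal reading $\tau=1-2L_f\xi\min\{\epsilon,q\}$ the plug-in would not be valid for anyone. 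With that reading, your middle paragraph is exactly the paper's lemma and nothing further is needed. Your constant bookkeeping also differs slightly (you get $2K/\gamma$ per term and double it by handling $f_s$ and $f_t$ simultaneously, whereas the paper's source-term lemma already carries $4K/\gamma$ and the target term carries $2K/\gamma$, both loosely absorbed into the stated $4K/\gamma$ via the lemma bounding $\widehat{\mathcal{R}}$ of the union/max classes by $\widehat{\mathcal{R}}(\mathcal{F})$), but this is immaterial to correctness.
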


\textbf{Main insights.} Theorem~\ref{theorem_gen1} justifies CST under the expansion assumption. The generalization error of the classifier $f_s$ on the target domain is bounded with the CST loss objective $L_{\textup{CST}}(f_s,f_t)$, the intrinsic property of the data distribution $q$, and the complexity of the function classes. In our algorithm, $L_{\textup{CST}}(f_s,f_t)$ is minimized by the neural networks and $q$ is a constant. The complexity of the function class can be controlled with proper regularization. 

\vspace{-3pt}
\subsection{Hard Case for Feature Adaptation and Standard Self-Training}\label{sec:hard}
\vspace{-3pt}
 
To gain more insight, we study UDA in a quadratic neural network $f_{\theta,\phi}(x) =\theta^\top (\phi^\top x) ^{\odot 2}$, where $\odot$ is element-wise power. In UDA, the source can have \textit{multiple solutions} but we aim to learn the one working on the target~\citep{cite:ICML15DAN}. We design the underlying distributions $p$ and $q$ in Table~\ref{table:pq} to reflect this. Consider the following $P$ and $Q$. $x_{[1]}$ and $x_{[2]}$ are sampled \emph{i.i.d.} from distribution $p$ on $P$, and from

\begin{wraptable}{r}{4.1cm}
\addtolength{\tabcolsep}{-3pt}
\vspace{-20pt}
\label{table:pq}
\centering
\caption{\small The design of $p$ and $q$.}
\label{table:pq}
\begin{small}
\begin{tabular}{l|c|c|c}
\toprule
Distribution & $-1$ & $+1$ & $0$ \\
\midrule
Source $p$ & $0.05$ & $0.05$ & $0.90$\\
Target $q$ & $0.25$ & $0.25$ & $0.50$\\
\bottomrule
\end{tabular}
\end{small}
\vspace{-12pt}
\end{wraptable}

$q$ on $Q$. For $i\in[3,d]$, $x_{[i]}= \sigma_i x_{[2]}$ on $P$ and $x_{[i]}= \sigma_i  x_{[1]}$ on $Q$. $\sigma_i \in \{\pm 1\}$ are \emph{i.i.d.} and uniform. We also assume realizability: $y = x_{[1]}^2 - x_{[2]}^2$ for both source and target. Note that $y = x_{[1]}^2 - x_{[i]}^2$ for all $i\in[2,d]$ are solutions to $P$ but \textit{only} $y = x_{[1]}^2 - x_{[2]}^2$ works on $Q$. We visualize this specialized setting in Figure~\ref{fig:th_setting}.

\begin{figure}[h]
\vspace{-5pt}
  \centering
  \includegraphics[width=1.0\columnwidth]{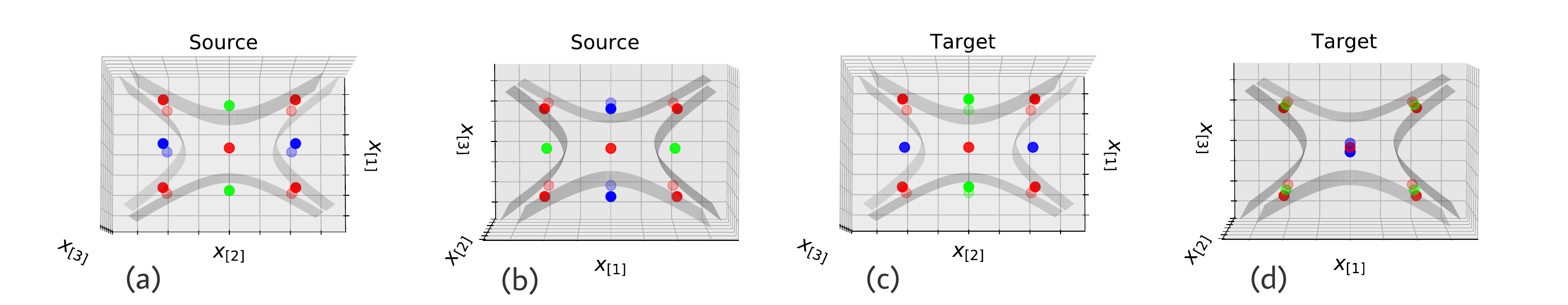}
   \vspace{-15pt}
  \caption{\small\textbf{The hard case where $d = 3$.} Green dots for $y = 1$, red dots for $y = 0$, and blue dots for $y = -1$. The grey curve is the classification boundary of different features. The good feature $x_{[1]}^2 - x_{[2]}^2$ works on the target domain (shown in (a) and (c)), whereas the spurious feature $x_{[1]}^2 - x_{[3]}^2$ only works on the source domain (shown in (b) and (d)). In Section~\ref{sec:hard}, we show that feature adaptation and standard self-training learn $x_{[1]}^2 - x_{[3]}^2$, while CST learns $x_{[1]}^2 - x_{[2]}^2$.} 
  \vspace{-3pt}
  \label{fig:th_setting}
\end{figure}

To make the features more tractable, we study the norm-constrained version of the algorithms (details are deferred to Section~\ref{algorithms}). We compare the features learned by feature adaptation, standard self-training, and CST. Intuitively, feature adaptation fails because the \emph{ideal} target solution $y = x_{[1]}^2 - x_{[2]}^2$ has larger distance in the feature space than other spurious solutions $y = x_{[1]}^2 - x_{[i]}^2$. Standard self-training also fails since it will choose randomly among all solutions. In comparison, CST can recover the ground truth, because it can distinguish the spurious solution resulting in \textit{bad pseudo-labels}. A classifier trained with those pseudo-labels \textit{cannot} work on the source domain in turn. This intuition is rigorously justified in the following two theorems.

\begin{theorem}\label{theorem_fa}
For $\epsilon\in(0,0.5)$, the following statements hold for feature adaptation and self-training:
	\begin{itemize}[leftmargin=*]
		\setlength{\itemsep}{5pt}
		\setlength{\parskip}{-3pt}
		\item{
	For failure rate $\xi>0$, and target dataset size $\nt >\Theta(\log\frac{1}{\xi})$, with probability at least $1-\xi$ over the sampling of target data, the solution $(\hat\theta_\textup{FA},\hat\phi_\textup{FA})$ found by feature adaptation satisfies
         
			\begin{align}
			\textup{Err}_Q(\hat\theta_\textup{FA},\hat\phi_\textup{FA})\ge \epsilon.
			\end{align}	}
			\vspace{-5pt}
		\item{
	With probability at least $1-\frac{1}{d-1}$, the solution $(\hat\theta_\textup{ST},\hat\phi_\textup{ST})$ of standard self-training satisfies
           
			\begin{align}
			\textup{Err}_Q(\hat\theta_\textup{ST},\hat\phi_\textup{ST})\ge \epsilon.
			\end{align}	}
	\end{itemize}
\end{theorem}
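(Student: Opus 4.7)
The proof exploits the structural symmetry built into the hard case: on the source, $x_{[i]}^2 = \sigma_i^2 x_{[2]}^2 = x_{[2]}^2$ for every $i\in[3,d]$, so each choice $\phi_i = [e_1, e_i]$ with head $\theta = (1,-1)^\top$ achieves zero source loss and is indistinguishable on $\widehat P$. On the target, however, $x_{[i]}^2 = x_{[1]}^2$ for $i\geq 3$, so only $\phi_2$ recovers $y=x_{[1]}^2-x_{[2]}^2$; every spurious $\phi_i$ with $i\geq 3$ outputs identically $0$ and therefore has target error $\Pr_{q}(y\neq 0) = 0.5$. The plan is to show that FA \emph{actively prefers} a spurious $\phi_i$, while standard ST is \emph{indifferent} and picks the good feature only with probability $1/(d-1)$.

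\textbf{Feature adaptation.} I would compare the norm-constrained objective $L_{\widehat P}(\theta,\phi) + d(h_\sharp \widehat P, h_\sharp \widehat Q)$ at $\phi_2$ and at $\phi_i$ with $i\geq 3$. Both have zero source term, so the minimizer is decided by $d(\cdot,\cdot)$. At the population level, under $\phi_2$ the source feature distribution has Bernoulli marginals with parameter $0.1$ and the target marginals have parameter $0.5$, giving a strictly positive distance. Under a spurious $\phi_i$ the source feature has the same distribution as under $\phi_2$, but the target feature is degenerate on the diagonal $\{(a,a):a\in\{0,1\}\}$, which turns out to be strictly closer to the source distribution in the metric used (MMD/$\mathcal H\Delta\mathcal H$), and can be pushed even lower by averaging over the spurious coordinates. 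I would formalize this strict gap at the population level, and then use Hoeffding/McDiarmid on the empirical estimator of $d$, which requires $n_t = \Omega(\log(1/\xi))$ target samples to guarantee that the empirical minimizer inherits the population ordering with probability at least $1-\xi$. The minimizer being spurious gives the constant prediction $0$ on target, hence $\textup{Err}_Q \geq 0.5 > \epsilon$.

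\textbf{Standard self-training.} Source pretraining sees only $x_{[1]}^2-x_{[2]}^2$ on $\widehat P$, and this target is realized identically by every $\phi_i$ with $i\in[2,d]$. Under a symmetric (e.g.\ Gaussian) initialization of the weights across coordinates $e_2,\ldots,e_d$, the gradient flow is equivariant under any permutation of those indices, so the source-trained network $\hat\phi_s$ is equally likely to converge to each of the $d-1$ equivalent minima. Thus with probability $1-\tfrac{1}{d-1}$ the source model is a spurious $\phi_i$ with $i\geq 3$. On target such a model predicts $x_{[1]}^2 - x_{[i]}^2 \equiv 0$, so every pseudo-label is $0$. The joint self-training objective $L_{\widehat P}(\theta,\phi) + \Exp_{x\sim\widehat Q}\ell(f_{\theta,\phi}(x),0)$ is minimized at the same spurious feature (both terms vanish), so the final predictor still outputs $0$ on target and $\textup{Err}_Q = 0.5 > \epsilon$.

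\textbf{Main obstacle.} The delicate step is the FA part: one must commit to a specific distance $d(\cdot,\cdot)$ and verify the strict population-level inequality between spurious and good features for the norm-constrained quadratic network, including ruling out convex combinations that might close the gap in a non-obvious way; this is where the norm constraint deferred to the appendix becomes essential, since without it one could scale $\phi$ to trivialize the distance. The self-training part is cleaner once the symmetry argument for uniform tie-breaking among the $d-1$ source minima is made precise, which follows from permutation equivariance of gradient descent under a symmetric initialization.
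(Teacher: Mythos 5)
Your proposal is correct and follows essentially the same route as the paper's proof: a characterization of the norm-constrained source solutions as single-coordinate features $x_{[1]}^2-x_{[l]}^2$ (the paper's Lemma~\ref{lem:source_gt} is exactly the ``rule out combinations via the norm constraint'' step you flag), an explicit comparison of feature-distribution distances showing the spurious diagonal target features are strictly closer to the source features than the product ones (the paper commits to $d_\textup{TV}$ and verifies the gap whenever the empirical frequency of $x_{[l]}=0$ lies in $(0.36,0.64)$, guaranteed by Hoeffding with $n_t=\Omega(\log(1/\xi))$), and for self-training a uniform choice among the $d-1$ equivalent source minima (which the paper simply asserts, while your permutation-equivariance argument supplies a justification) followed by the observation that all-zero pseudo-labels keep the spurious feature, giving target error $0.5\ge\epsilon$. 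The only unexecuted pieces in your sketch are precisely these two computations, and the claims you assert about them are consistent with what the paper proves.
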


\begin{theorem}\label{theorem_transfer}
For failure rate $\xi>0$, and target dataset size $\nt >\Theta(\log\frac{1}{\xi})$, with probability at least $1-\xi$, the solution of CST ($\hat\phi_{\textup{CST}},\hat\theta_\textup{CST})$ recovers the ground truth of the target dataset: 
	
	\begin{align}
	\ \ \ \quad \textup{Err}_Q(\hat\theta_\textup{CST}, \hat\phi_{\textup{CST}})= 0.
	\end{align}
\end{theorem}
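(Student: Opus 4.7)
}

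The plan is to show that, within the norm-constrained hypothesis class, the CST objective of \eqref{eq:overall} is uniquely minimized (up to symmetries) by the feature map that extracts the ground-truth coordinates $e_1, e_2$, and that this minimizer drives the empirical target error to zero. The argument exploits the asymmetry baked into the construction: on the source, $x_{[j]}^2 = x_{[2]}^2$ for every $j\geq 3$, so many features are compatible with the source labels $y$; whereas on the target, $x_{[j]}^2 = x_{[1]}^2$, so every spurious feature collapses to a rank-one target representation from which $y$ cannot be recovered.

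First I would enumerate the feature maps that can drive the source loss $L_{\widehat P}(\theta_s,\phi)$ to zero. Expanding $(\phi^\top x)^{\odot 2}$ on the source and using $x_{[i]} = \sigma_i x_{[2]}$, a column of the form $\alpha e_2 + \beta e_j$ yields a source feature whose coefficient depends on the random sign $\sigma_j$; a per-sample identifiability argument, combined with the norm budget, forces any source-optimal column to be a pure $e_j$ for some $j\in\{2,\dots,d\}$. Up to symmetries, the source-optimal candidates are therefore $\phi = (e_1, e_j)$ with source head $\theta_s = (1,-1)$, and the analysis reduces to comparing the cycle source loss of the good choice $j=2$ against the spurious choices $j\geq 3$.

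Second I would evaluate the cycle source loss $L_{\widehat P}(\hat\theta_t(\phi),\phi)$ on each candidate. For the good $\phi^\star = (e_1,e_2)$, the target feature is $(x_{[1]}^2,x_{[2]}^2)$, so the pseudo-labels $y' = \theta_s^\top (\phi^{\star\top} x)^{\odot 2}$ coincide with the target ground truth, the inner minimizer of \eqref{eq:target} reproduces $\hat\theta_t(\phi^\star) = \theta_s$, and both the source loss and the cycle source loss vanish on the empirical distribution. A Hoeffding bound over $\nt$ target samples — which is where the $\nt > \Theta(\log(1/\xi))$ requirement enters — preserves this behaviour with probability at least $1-\xi$, and the same classifier incurs zero target error. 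Third, and this is the key step, for any spurious candidate $\phi = (e_1, e_j)$ with $j \geq 3$ one has $(\phi^\top x)^{\odot 2} = (x_{[1]}^2, x_{[1]}^2)$ on the target, so the pseudo-labels degenerate to $y' \equiv 0$. The inner problem in \eqref{eq:target} then has a rank-one feature matrix paired with a constant label, and the norm-constrained (equivalently, Tsallis-regularized) minimizer is the trivial head $\hat\theta_t(\phi) = \vzero$; transferred back to the source this predicts the constant-zero class, but under $P$ one has $y = \pm 1$ with probability $0.18$, so the cycle source loss is $\Omega(1)$ with high probability over the source sample, strictly worse than the good feature. Consequently the outer optimization rejects every spurious candidate.

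The main obstacle is pinning down $\hat\theta_t(\phi)$ in the degenerate case where pseudo-labels concentrate on a single class. Without regularization the inner problem admits a whole family of minimizers $\theta_t = (c,-c)$, and the accidental choice $c=1$ would recover $y$ on the source and defeat the separation argument. The norm-constrained formulation is precisely what breaks this degeneracy by singling out the trivial head, and the proof must make that selection explicit and quantify how strictly the resulting cycle-source-loss gap propagates through the outer optimization to identify $\phi^\star$; once that is done, the remaining bookkeeping — standard concentration for the $\Theta(\log(1/\xi))$ target samples plus a union bound over the finite set of candidate $\phi$'s — is routine.
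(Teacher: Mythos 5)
Your proposal follows essentially the same route as the paper's proof: first characterize the norm-minimal source solutions as pure-coordinate features $(e_1,e_l)$ (the paper's Lemma~\ref{lem:source_gt}), then show that for spurious $l\ge 3$ the target pseudo-labels collapse to $0$ so the minimum-norm inner head is $\hat\theta_t(\phi)=0$ and the cycle source loss is $\Exp_P y^2=0.18>0$, while for $l=2$ the inner head recovers $\theta_s$ and the CST loss vanishes once the target sample is rich enough, which is exactly where the $\nt>\Theta(\log\frac{1}{\xi})$ condition enters (the paper uses a union bound giving probability $1-4(3/4)^{\nt}$ rather than Hoeffding, a cosmetic difference). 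Your identification of the degenerate family $(c,-c)$ and of the minimum-norm selection as the mechanism that breaks it is precisely the step the paper relies on, so the plan is correct and matches the paper's argument.
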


\vspace{-10pt}
\section{Experiments}\label{sec:exp}
\vspace{-5pt}

We test the performance of the proposed method on both vision and language datasets. Cycle Self-Training (CST) consistently outperforms state-of-the-art feature adaptation and self-training methods. Code is available at \url{https://github.com/Liuhong99/CST}.

\vspace{-5pt}
\subsection{Setup}
\vspace{-5pt}

\textbf{Datasets.} We experiment on visual object recognition and linguistic sentiment classification tasks:
\textit{Office-Home} \citep{cite:CVPR17OfficeHome} has $65$ classes from four kinds of environment with large domain gap: \textit{Artistic} (\textbf{Ar}), \textit{Clip Art} (\textbf{Cl}), \textit{Product} (\textbf{Pr}), and \textit{Real-World} (\textbf{Rw}); \textit{VisDA-2017}~\citep{DBLP:journals/corr/abs-1710-06924} is a large-scale UDA dataset with two domains named \textbf{Synthetic} and \textbf{Real}. The datasets consist of over 200k images from 12 categories of objects; \textit{Amazon Review}~\citep{blitzer-etal-2007-biographies} is a linguistic sentiment classification dataset of product reviews in four products: \emph{Books} (\textbf{B}), \emph{DVDs} (\textbf{D}), \emph{Electronics} (\textbf{E}), and \emph{Kitchen} (\textbf{K}).

\textbf{Implementation.} We use \textbf{ResNet-50} \citep{cite:CVPR16DRL} (pretrained on ImageNet~\citep{cite:ILSVRC15}) as feature extractors for vision tasks, and \textbf{BERT} \citep{devlin-etal-2019-bert} for linguistic tasks. On VisDA-2017, we also provide results of ResNet-101 to include more baselines.
We use cross-entropy loss for classification on the source domain. When training the target head $\hat\theta_t$ and updating the feature extractor with CST, we use squared loss to get the analytical solution of $\hat\theta_t$ directly and avoid calculating second order derivatives as meta-learning~\citep{pmlr-v70-finn17a}. Details on adapting squared loss to multi-class classification are deferred to Appendix~\ref{sec:implementation}. We adopt SGD with initial learning rate $\eta_0=2e-3$ for image classification and $\eta_0=5e-4$ for sentiment classification. Following standard protocol in \citep{cite:CVPR16DRL}, we decay the learning rate by $0.1$ each $50$ epochs until $150$ epochs. We run all the tasks $3$ times and report mean and deviation in top-1 accuracy. For VisDA-2017, we report the mean class accuracy. Following Theorem~\ref{theorem_gen1}, we also enhance CST with sharpness-aware regularization~\citep{foret2021sharpnessaware} (CST+SAM), which help regularize the Lipschitzness of the function class. Due to space limit, we report mean accuracies in Tables~\ref{Office-Home} and \ref{Sentiment} and defer standard deviation to Appendix~\ref{sec:add_exp}.

\vspace{-5pt}
\subsection{Baselines}
\vspace{-2pt}
We compare with two lines of works in domain adaptation: feature adaptation and self-training. We also compare with more complex state-of-the-arts and create stronger baselines by combining feature adaptation and self-training.

\textbf{Feature Adaptation:} DANN~\citep{cite:JMLR17DANN}, MCD~\citep{cite:CVPR18MCD}, CDAN~\citep{cite:NIPS18CDAN} (which improves DANN with pseudo-label conditioning), MDD~\citep{pmlr-v97-zhang19i} (which improves previous domain adaptation with margin theory), Implicit Alignment (IA)~\citep{pmlr-v119-jiang20d} (which improves MDD to deal with label shift).

\textbf{Self-Training.} 
We include VAT~\citep{cite:TPAMI18VAT}, MixMatch~\citep{berthelot2019mixmatch} and FixMatch~\citep{fixmatch} in the semi-supervised learning literature as self-training methods. We also compare with self-training methods for UDA:
CBST~\citep{10.1007/978-3-030-01219-9_18}, which considers class imbalance in standard self-training, and KLD~\citep{Zou_2019_ICCV}, which improves CBST with label regularization. However, these methods involve tricks specified for convolutional networks. Thus, in sentiment classification tasks where we use BERT backbones, we compare with other consistency regularization baselines: VAT~\citep{cite:TPAMI18VAT}, VAT+Entropy Minimization.

\textbf{Feature Adaptation + Self-Training.} DIRT-T~\citep{shu2018a} combines DANN, VAT, and entropy minimization. We also create more powerful baselines: CDAN+VAT+Entropy and MDD+Fixmatch.

\textbf{Other SOTA.} AFN~\citep{2020Larger} boosts transferability by large norm. STAR~\citep{lu2020stochastic} aligns domains with stochastic classifiers. SENTRY~\citep{Prabhu_2021_ICCV} selects confident examples with a committee of random augmentations.

\begin{table*}[tbp]
\vspace{-5pt}
\addtolength{\tabcolsep}{-4.8pt} 
\centering
\caption{Accuracy (\%) on {Office-Home} for unsupervised domain adaptation (\texttt{ResNet-50}).}
\label{Office-Home}
\begin{small}
\begin{tabular}{l|cccccccccccc|cr}
\toprule
Method & Ar-Cl&Ar-Pr&Ar-Rw&Cl-Ar&Cl-Pr&Cl-Rw&Pr-Ar&Pr-Cl&Pr-Rw&Rw-Ar&Rw-Cl&Rw-Pr&Avg. \\
\midrule
DANN \citep{cite:JMLR17DANN} &45.6&59.3&70.1&47.0&58.5&60.9&46.1&43.7&68.5&63.2&51.8&76.8&57.6\\
CDAN~\citep{cite:NIPS18CDAN} & 50.7 & 70.6 & 76.0 & 57.6 & 70.0 & 70.0 & 57.4 & 50.9 & 77.3 & 70.9 & 56.7 & 81.6 & 65.8\\
CDAN+VAT+Entropy & 52.2 & 71.5 & 76.4 & 61.1 & 70.3 & 67.8 & 59.5 & 54.4 & 78.6 & 73.2 & 59.0 & 82.7 & 67.3\\
FixMatch~\citep{fixmatch} & 51.8 & 74.2 & \underline{80.1} & 63.5 & \underline{73.8} & 61.3 & 64.7 & 51.4 & 80.0 & 73.3 & 56.8 & 81.7 & 67.7 \\
MDD~\citep{pmlr-v97-zhang19i} & 54.9 & 73.7 & 77.8 & 60.0 & 71.4 & 71.8 & 61.2 & 53.6 & 78.1 & 72.5 & 60.2 & 82.3 & 68.1\\
MDD+IA~\citep{pmlr-v119-jiang20d} & 56.2 & 77.9 & 79.2 & 64.4 & 73.1 & 74.4 & 64.2 & 54.2 & 79.9 & 71.2 & 58.1 & 83.1 & 69.5\\
SENTRY~\citep{Prabhu_2021_ICCV} & \textbf{61.8} & \underline{77.4} & \underline{80.1} & \underline{66.3} & 71.6 & \underline{74.7} & \underline{66.8} & \textbf{63.0} & \underline{80.9} & \underline{74.0} & \textbf{66.3} & \underline{84.1} & 72.2\\
\midrule

\textbf{CST} & \underline{59.0} & \textbf{79.6} & \textbf{83.4} & \textbf{68.4} & \textbf{77.1} & \textbf{76.7} & \textbf{68.9} & \underline{56.4} & \textbf{83.0} & \textbf{75.3} & \underline{62.2} & \textbf{85.1} &\textbf{73.0}\\
\bottomrule
\end{tabular}
\end{small}
\vskip -0.1in
\end{table*}

\begin{table*}[tbp]
\vspace{-5pt}
\addtolength{\tabcolsep}{-1.0pt} 
\caption{Accuracy (\%) on {Multi-Domain Sentiment Dataset} for domain adaptation with \texttt{BERT}.}
\label{Sentiment}
\begin{small}
\begin{tabular}{l|cccccccccccc|cr}
\toprule
Method & B-D&B-E&B-K&D-B&D-E&D-K&E-B&E-D&E-K&K-B&K-D&K-E&Avg. \\
\midrule
Source-only & 89.7 & 88.4 & 90.9 & 90.1 & 88.5 & 90.2 & 86.9 & \underline{88.5} & 91.5 & 87.6 & 87.3 & 91.2 & 89.2\\
DANN~\citep{cite:JMLR17DANN} & 90.2 & 89.5 & 90.9 & \underline{91.0} & 90.6 & 90.2 & 87.1 & 87.5 & \underline{92.8} & 87.8 & 87.6 & \underline{93.2} & 89.9\\
VAT~\citep{cite:TPAMI18VAT} & \underline{90.6} & 91.0 & 91.7 & 90.8 & 90.8 & 92.0 & 87.2 & 86.9 & 92.6 & 86.9 & 87.7 & 92.9 & 90.1\\
VAT+Entropy & 90.4 & \underline{91.3} & 91.5 & \underline{91.0} & \underline{91.1} & \underline{92.4} & \underline{87.5} & 86.3 & 92.4 & 86.5 & 87.5 & 93.1 & 90.1\\
MDD~\citep{pmlr-v97-zhang19i} & 90.4 & 90.4 & \underline{91.8} & 90.2 & 90.9 & 91.0 & \underline{87.5} & 86.3 & 92.5 & \textbf{89.0} & \underline{87.9} & 92.1 & 90.0\\
\midrule
\textbf{CST} & \textbf{91.5} & \textbf{92.9} & \textbf{92.6} & \textbf{91.9} & \textbf{92.6} & \textbf{93.5} & \textbf{90.2} & \textbf{89.4} & \textbf{93.8} & \underline{87.9} & \textbf{88.3} & \textbf{93.5} & \textbf{91.5}\\
\bottomrule
\end{tabular}
\end{small}
\end{table*}
\begin{table}[t]
\vskip -0.15in
\addtolength{\tabcolsep}{0.3pt} 
\centering 
\caption{Mean Class Accuracy (\%) for unsupervised domain adaptation on VisDA-2017.}
\vspace{-5pt}
\label{visda}
\begin{small}
\begin{tabular}{l|rr|l|rr}
\toprule
Method & \texttt{ResNet-50} & \texttt{ResNet-101} & Method & \texttt{ResNet-50} & \texttt{ResNet-101}\\
\midrule
DANN~\citep{cite:JMLR17DANN} & 69.3 & 79.5 &
CBST~\citep{10.1007/978-3-030-01219-9_18} & -- & 76.4 $\pm$ 0.9\\
VAT~\citep{cite:TPAMI18VAT} & 68.0 $\pm$ 0.3 & 73.4 $\pm$ 0.5 &
KLD~\citep{Zou_2019_ICCV} & -- & 78.1 $\pm$ 0.2\\
DIRT-T~\citep{shu2018a} & 68.2 $\pm$ 0.3 & 77.2 $\pm$ 0.5 &
MDD~\citep{pmlr-v97-zhang19i} &{74.6} & 81.6 $\pm$ 0.3\\
MCD~\citep{cite:CVPR18MCD} &69.2 & 77.7 &
AFN~\citep{2020Larger} & -- & 76.1\\
CDAN~\citep{cite:NIPS18CDAN} & 70.0 & 80.1 &
MDD+IA~\citep{pmlr-v119-jiang20d} & 75.8 & --\\
CDAN+VAT+Entropy &{76.5} $\pm$ 0.5& 80.4 $\pm$ 0.7 &
MDD+FixMatch &{77.8} $\pm$ 0.3& 82.4 $\pm$ 0.4 \\
MixMatch &{69.3} $\pm$ 0.4 & 77.0 $\pm$ 0.5 &STAR~\citep{lu2020stochastic} & -- & 82.7 \\
FixMatch~\citep{fixmatch} & 74.5 $\pm$ 0.2 & 79.5 $\pm$ 0.3 & SENTRY~\citep{Prabhu_2021_ICCV} & 76.7 & -- \\
\midrule
\textbf{CST} & \underline{79.9} $\pm$ 0.5 & \underline{84.8} $\pm$ 0.6 & \textbf{CST+SAM} & \textbf{80.6} $\pm$ 0.5 & \textbf{86.5} $\pm$ 0.7 \\
\bottomrule
\end{tabular}
\end{small}
\vskip -0.15in
\end{table}

\vspace{-5pt}
\subsection{Results}
\vspace{-5pt}

Results on 12 pairs of \emph{Office-Home} tasks are shown in Table \ref{Office-Home}. When domain shift is large, standard self-training methods such as VAT and FixMatch suffer from the decay in pseudo-label quality. \textbf{CST} outperforms feature adaptation and self-training methods significantly in 9 out of 12 tasks. Note that CST does not involve manually setting confidence threshold or reweighting.

Table~\ref{visda} shows the results on \emph{VisDA-2017}. \textbf{CST} surpasses state-of-the-arts with ResNet-50 and ResNet-101 backbones. We also combine feature adaptation and self-training (DIRT-T, CDAN+VAT+entropy and MDD+FixMatch) to test if feature adaptation alleviates the negative effect of domain shift in standard self-training. Results indicate that CST is a better solution than simple combination.

While most traditional self-training methods include techniques specified for ConvNets such as Mixup~\citep{zhang2018mixup}, \textbf{CST} is a \emph{universal} method and can directly work on sentiment classification by simply replacing the head and training objective of BERT~\citep{devlin-etal-2019-bert}. In Table~\ref{Sentiment}, most feature adaptation baselines improve over source only marginally, but \textbf{CST} outperforms all baselines on most tasks significantly.

\vspace{-5pt}
\subsection{Analysis}\label{analysis}
\vspace{-5pt}

\begin{wraptable}{r}{5.7cm}
\addtolength{\tabcolsep}{0pt} 
\vspace{-22pt}
\centering 
\caption{\small Ablation on VisDA-2017.}
\label{ablation}
\begin{small}
\begin{tabular}{l|r|r}
\toprule
Method & Accuracy $\uparrow$ & $d_{\textup{TV}}$ $\downarrow$ \\
\midrule
FixMatch~\citep{fixmatch} & 74.5 $\pm$ 0.2 & 0.22 \\
Fixmatch+Tsallis & 76.3 $\pm$ 0.8 & 0.15\\
CST w/o Tsallis & 72.0 $\pm$ 0.4 & 0.16 \\
CST+Entropy &76.2 $\pm$ 0.6 & 0.20 \\
\midrule
\textbf{CST} &\textbf{79.9} $\pm$ 0.5 & 0.12 \\  
\bottomrule
\end{tabular}
\end{small}
\vspace{-10pt}
\end{wraptable} 

\textbf{Ablation Study.} We study the role of each part of CST in self-training. CST w/o Tsallis removes the Tsallis entropy $L_{\textup{Tsallis},\alpha}$. CST+Entropy replaces the Tsallis entropy with standard entropy. FixMatch+Tsallis adds $L_{\textup{Tsallis},\alpha}$ to standard self-training. Observations are shown in Table~\ref{ablation}. CST+Entropy performs $3.7\%$ worse than CST, indicating that Tsallis entropy is a better regularization for pseudo-labels than standard entropy. CST performs $5.4\%$ better than FixMatch, indicating that CST is better adapted to domain shift than standard self-training. While FixMatch+Tsallis outperforms FixMatch, it is still $3.6\%$ behind CST, with much larger total variation distance $d_\textup{TV}$ between pseudo-labels and ground-truths, indicating that CST makes pseudo-labels more reliable than standard self-training under domain shift.

\begin{figure*}[t]
  \centering 
   \subfigure{
    \includegraphics[width=0.45\columnwidth]{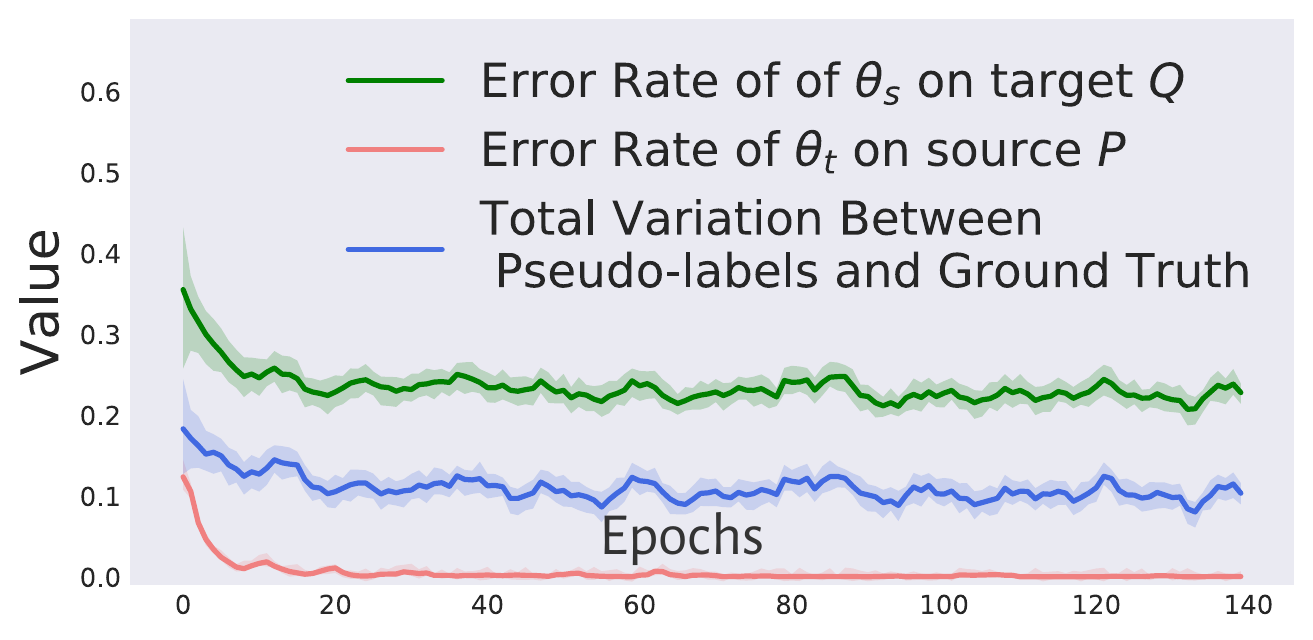}
    }
   \hfil
   \subfigure{
    \includegraphics[width=0.44\columnwidth]{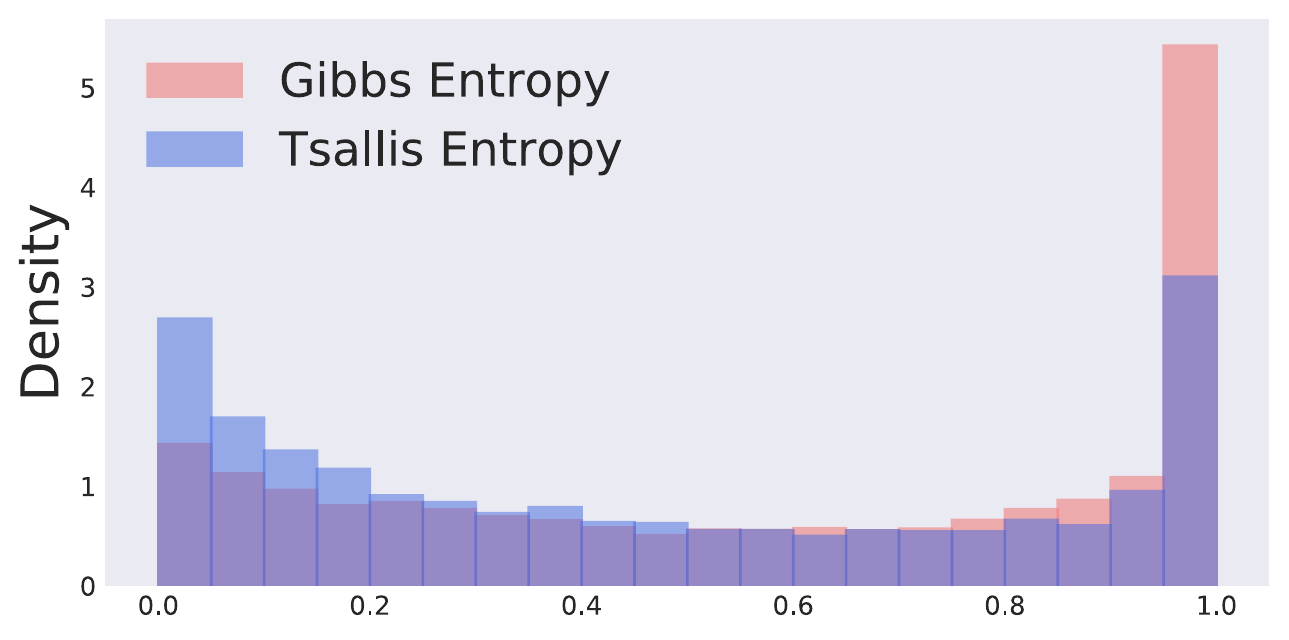}
    }
  \vspace{-10pt}
  \caption{\small\textbf{Analysis.} Left: Error of pseudo-labels and reverse pseudo-labels. The error of target classifier $\theta_t$ on the source domain decreases, indicating the quality of pseudo-labels is refined. Right: Histograms of the difference between the largest and the second largest softmax scores. Tsallis entropy avoids over-confidence.}
  \label{fig:rst_epoch}
  \vspace{-10pt}
\end{figure*}

\textbf{Quality of Pseudo-labels.} We visualize the error of pseudo-labels during training on VisDA-2017 in Figure~\ref{fig:rst_epoch} (Left). The error of target classifier $\theta_t$ on the source domain decreases quickly in training, when both the error of pseudo-labels (error of $\theta_s$ on $Q$) and the total variation (TV) distance between pseudo-labels and ground-truths continue to decay, indicating that CST gradually refines pseudo-labels. This forms a clear contrast to standard self-training as visualized in Figure~\ref{fig:pseudo} (Middle), where the distance $d_{\textup{TV}}$ remains nearly unchanged throughout training.

\textbf{Comparison of Gibbs entropy and Tsallis entropy.} We compare the pseudo-labels learned with standard Gibbs entropy and Tsallis entropy on Ar$\rightarrow$Cl with ResNet-50 at epoch 40. We compute the difference between \textit{the largest and the second largest softmax scores} of each target example and plot the histogram in Figure~\ref{fig:rst_epoch} (Right). Gibbs entropy makes the largest softmax output close to 1, indicating over-confidence. In this case, if the prediction is wrong, it can be hard to correct it using self-training. In contrast, Tsallis entropy allows the largest and the second largest scores to be similar.

\vspace{-5pt}
\section{Related Work}
\vspace{-5pt}

\textbf{Self-Training.} Self-training is a mainstream technique for semi-supervised learning~\citep{cite:Book06SSL}. In this work, we focus on pseudo-labeling~\citep{4129456,article-pl,DBLP:journals/corr/abs-1908-02983}, which uses unlabeled data by training on pseudo-labels generated by a source model. Other lines of work study consistency regularization~\citep{NIPS2014_66be31e4,NIPS2015_378a063b,NIPS2016_30ef30b6,cite:TPAMI18VAT}. Recent works demonstrate the power of such methods~\citep{2020Self,fixmatch,ghiasi2021multi}. Equipped with proper training techniques, these methods can achieve comparable results as standard training that uses much more labeled examples~\citep{du-etal-2021-self}. \citet{NEURIPS2020_27e9661e} compare self-training to pre-training and joint training. \citet{vu2021strata,NEURIPS2020_f23d125d} show that task-level self-training works well in few-shot learning. These methods are tailored to semi-supervised learning or general representation learning and do not take domain shift into consideration explicitly. \citet{wei_2021_ICLR,frei2021self} provide the first nice theoretical analysis of self-training based on the expansion assumption.

\textbf{Domain Adaptation.} Inspired by the generalization error bound of~\citet{cite:ML10DAT}, \citet{cite:ICML15DAN,cite:ICLR17CMD} minimize distance measures between source and target distributions to learn domain-invariant features. \citet{cite:JMLR17DANN} (DANN) proposed to approximate the domain distance by adversarial learning. Follow-up works proposed various improvement upon DANN~\citep{cite:CVPR17ADDA,cite:CVPR18MCD,cite:NIPS18CDAN,pmlr-v97-zhang19i,pmlr-v119-jiang20d}. Popular as they are, failure cases exist in situation like label shift~\citep{pmlr-v97-zhao19a,Li2020RethinkingDM}, shift in support of domains~\citep{pmlr-v89-johansson19a}, and large discrepancy between source and target~\citep{pmlr-v97-liu19b}. Another line of works try to address domain adaptation with self-training. \citet{shu2018a} improves DANN with VAT and entropy minimization. \citet{french2018selfensembling,Zou_2019_ICCV,Li2020RethinkingDM} incorporated various semi-supervised learning techniques to boost domain adaptation performance. \citet{pmlr-v119-kumar20c}, \citet{NEURIPS2020_f1298750} and \citet{cai2021theory} showed self-training provably works in domain adaptation under certain assumptions.

\vspace{-5pt}
\section{Conclusion}
\vspace{-5pt}

We propose cycle self-training in place of standard self-training to explicitly address the distribution shift in domain adaptation. We show that our method provably works under the expansion assumption and demonstrate hard cases for feature adaptation and standard self-training. Self-training (or pseudo-labeling) is only one line of works in the semi-supervised learning literature. Future work can delve into the behaviors of other semi-supervised learning techniques including consistency regularization and data augmentation under distribution shift, and exploit them extensively for domain adaptation.

\vspace{-5pt}
\section*{Acknowledgements}
\vspace{-5pt}

This work was supported by the National Natural Science Foundation of China under Grants 62022050 and 62021002, Beijing Nova Program under Grant Z201100006820041, China's Ministry of Industry and Information Technology, the MOE Innovation Plan and the BNRist Innovation Fund.

\bibliography{example_paper}
\bibliographystyle{icml2021}

\onecolumn
\newpage
\appendix
\section{Details in Section~\ref{sec:theory}}\label{sec:proof}
\subsection{Proof of Theorem~\ref{theorem_gen}}
In Section~\ref{sec:expansion}, we study CST theoretically. In Theorem~\ref{theorem_gen}, we show that when the population error of the target classifier $f_t$ on the source domain $P$ is low and $f_t$ is locally consistent, the source classifier $f_s$ is guaranteed to perform well on the target domain $Q$. We further show that the consistency (robustness) is guaranteed by the confidence of the model (Lemma~\ref{lem:rob_ent}). Finally, we show in Theorem~\ref{theorem_gen1} that the minimizer of an objective function consistent with the CST objective in Section~\ref{sec:cst} leads to small target loss of the source classifier $\textup{Err}_{Q}(f_s)$.

We first review the assumptions made in Section~\ref{sec:expansion} in order to prove Theorem~\ref{theorem_gen}. Consider a $K$-way classification problem. $f: X \rightarrow [0,1]^K \in \mathcal{F}$ and $\tilde {f}(x) := \arg\max_{i} f(x)_{[i]}$. We first state the properties of source and target distributions $P$ and $Q$. Assume the source and target distributions are composed of $K$ sub-populations, each corresponding to one class, and the sub-populations of different classes have disjoint support. This indicates that a ground truth labeling function exists, which is a common assumption as in~\citep{cite:ML10DAT,wei_2021_ICLR,cai2021theory}. We also assume for simplicity of presentation that $P(y = i) = Q(y = i)$. Note that our techniques can be directly applied to the case where $\frac{P(y = i)}{Q(y = i)}$ is bounded as in~\citep{cai2021theory}.

\begin{assumption}\label{ass:disjoint}Denote by $P_i$ and $Q_i$ the conditional distribution of $P$ and $Q$ given $y = i$. We assume that: (1) $P(y = i) = Q(y = i)$, and (2) the supports of $P_i$ and $P_j$ are disjoint for $i \neq j$.
\end{assumption}

Our analysis relies on the \textit{expansion assumption}~\citep{wei_2021_ICLR,cai2021theory}, which intuitively states that the data distribution has good continuity within each class. Therefore, the subset in the support of a class will connect to its neighborhood, enabling knowledge transfer between domains. \citet{wei_2021_ICLR} justifies this assumption on real-world datasets with BigGAN.

\begin{assumption}[\textbf{$(q,\epsilon)$-constant expansion}~\citep{wei_2021_ICLR}]\label{ass:exp} For any $x \in \mathcal X$, $\mathcal{N}(x)$ is defined as the \emph{neighboring set} of $x$, $\mathcal{N}(x) = \{x':d(x,x') \leq \xi\}$, where $d$ is a proper metric. $\mathcal{N}(A) := \cup_{x \in A}\mathcal{N}(x)$. We say $P$ and $Q$ satisfy $(q,\epsilon)$-constant expansion for some constants $q,\epsilon \in (0,1)$, if for any set $A \in \mathcal{X}$ and any $i \in [K]$ with $\frac{1}{2} > P_{\frac{1}{2}(P_i + Q_i)}(A) > q$, we have $P_{\frac{1}{2}(P_i + Q_i)}(\mathcal {N}(A) \backslash A) > \mathop{\min}\{\epsilon, P_{\frac{1}{2}(P_i + Q_i)}(A)\}$.
\end{assumption}

Based on this expansion assumption, we consider a \emph{robustness-constrained} version of CST for now. In Theorem~\ref{theorem_gen1}, we will show that the population robustness loss is closely related to the uncertainty of of CST. Denote by $f_s$ the source model and $f_t$ the model trained on the target with pseudo-labels. Let $R(f_t) := P_{\frac{1}{2}(P + Q)}(\{x:\exists x'\in \mathcal {N}(x), \tilde {f}_t(x) \neq \tilde {f}_t(x')\})$ represent the robustness~\cite{wei_2021_ICLR} of $f_t$ on $P$ and $Q$. Suppose $\Exp_{(x,y)\sim Q}\mathbb{I}(\tilde {f}_s(x)\neq \tilde {f}_t(x)) \le c$ and $R(f_t) \le \rho$. Theorem~\ref{theorem_gen} states that when $f_s$ and $f_t$ behave similarly on $Q$ ($f_t$ fits the pseudo-labels generated by $f_s$ on the target domain) and $f_t$ is robust to local changes in input, the minimizer of the cycle source error $\textup{Err}_P(f_t)$ will guarantee low error on the target domain $Q$.

\begin{theorem1}
Suppose Assumption~\ref{ass:disjoint} and Assumption~\ref{ass:exp} hold for $P$ and $Q$. For any $f_s,f_t$ satisfying $\Exp_{(x,y)\sim Q}\mathbb{I}(\tilde {f}_s(x)\neq \tilde {f}_t(x)) \le c$ and $R(f_t) \le \rho$, the expected error of $f_s$ on the target domain $Q$ is bounded,
\begin{align}
\textup{Err}_Q(f_s) \le \textup{Err}_P(f_t) + c + 2q + \rho \ / \ \min\{\epsilon , q\}.
\end{align}
\end{theorem1}

We now turn to the proof of Theorem~\ref{theorem_gen}. We want to show the error of $f_t$ on the source domain $P$ is close to the error of $f_s$ on the target domain $Q$. We first show that when the robustness error $R(f_t)$ is controlled, the error of $f_t$ on the source and the target will be close. This is done by analyzing the error on each sub-population $P_i$ and $Q_i$ separately. Then we use the fact that the losses of $f_s$ and $f_t$ are also close when their disagreement on the target domain is controlled to obtain the final result.
\begin{lemma}[Robustness on sub-populations]\label{lem:robust_sub}
Divide $[K]$ into $S_1$ and $S_2$, where for every $i \in S_1$, $\Exp_{(x,y) \sim \frac{1}{2}(P_i + Q_i)}\mathbb{I} (\exists x'\in \mathcal {N}(x), \tilde {f}_t(x) \neq \tilde {f}_t(x')\}) < \min\{\epsilon, q\}$, and for every $i \in S_2$, $\Exp_{(x,y) \sim \frac{1}{2}(P_i + Q_i)}\mathbb{I} (\exists x'\in \mathcal {N}(x), \tilde {f}_t(x) \neq \tilde {f}_t(x')\}) \ge \min\{\epsilon, q\}$. Under the condition of Theorem~\ref{theorem_gen}, we have
\begin{align}
\sum_{i \in S_1} P(y = i) \ge 1 - \frac{\rho}{\min\{\epsilon, q\}}.
\end{align}
\end{lemma}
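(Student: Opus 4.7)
The plan is to apply a Markov-inequality style argument to the class-conditional decomposition of the robustness error $R(f_t)$. The only real ingredients are Assumption~\ref{ass:disjoint}, which provides $P(y=i)=Q(y=i)$, and the hypothesis $R(f_t)\le \rho$ inherited from Theorem~\ref{theorem_gen}.

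First, I would rewrite the mixture $\tfrac{1}{2}(P+Q)$ as a convex combination over sub-populations. Using disjoint supports of the conditionals and the matched priors, one has the identity $\tfrac{1}{2}(P+Q)=\sum_{i=1}^{K} P(y=i)\cdot \tfrac{1}{2}(P_i+Q_i)$. Denoting
\[
r_i := \Exp_{(x,y)\sim \tfrac{1}{2}(P_i+Q_i)} \mathbb{I}\bigl(\exists x'\in \mathcal{N}(x),\ \tilde{f}_t(x)\neq \tilde{f}_t(x')\bigr),
\]
the definition of $R(f_t)$ immediately yields the class-wise decomposition $R(f_t)=\sum_{i=1}^K P(y=i)\cdot r_i$.

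Next, I would restrict the sum to the ``bad'' classes and apply the defining inequality of $S_2$. For each $i\in S_2$, by construction $r_i \ge \min\{\epsilon, q\}$, so
\[
\rho \ \ge\ R(f_t) \ \ge\ \sum_{i\in S_2} P(y=i)\cdot r_i \ \ge\ \min\{\epsilon, q\} \cdot \sum_{i\in S_2} P(y=i).
\]
Dividing by $\min\{\epsilon,q\}$ gives $\sum_{i\in S_2} P(y=i)\le \rho/\min\{\epsilon,q\}$, and since $S_1,S_2$ partition $[K]$, the complementary bound $\sum_{i\in S_1} P(y=i)\ge 1-\rho/\min\{\epsilon,q\}$ follows.

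This argument is essentially a one-line Markov inequality, so there is no genuine obstacle; the only point requiring care is the mixture decomposition step, which relies crucially on the matched class priors $P(y=i)=Q(y=i)$ from Assumption~\ref{ass:disjoint}. If this assumption were relaxed to a bounded likelihood ratio $P(y=i)/Q(y=i)$ as in \citet{cai2021theory}, the same proof would go through with an additional multiplicative constant on the right-hand side.
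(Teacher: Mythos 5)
Your proof is correct and is essentially the paper's argument: the paper proves the same Markov-type bound by contradiction (assuming $\sum_{i\in S_2}P(y=i) > \rho/\min\{\epsilon,q\}$ and deriving $R(f_t)>\rho$), which is just the contrapositive of your direct chain of inequalities using the same class-wise decomposition of $R(f_t)$ under the matched priors $P(y=i)=Q(y=i)$. No gaps; your remark about relaxing to a bounded prior ratio is a reasonable aside but not needed here.
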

\begin{proof}[Proof of Lemma~\ref{lem:robust_sub}]
Suppose $\sum_{i \in S_1} P(y = i) < 1 - \frac{\rho}{\min\{\epsilon, q\}}$. Then we have $\sum_{i \in S_2} P(y = i) > \frac{\rho}{\min\{\epsilon, q\}}$, which implies
\begin{align*}
&\Exp_{(x,y) \sim \frac{1}{2}(P + Q)}\mathbb{I} (\exists x'\in \mathcal {N}(x), \tilde {f}_t(x) \neq \tilde {f}_t(x')\})\\
 &= \sum_{i \in [K]}\Exp_{(x,y) \sim \frac{1}{2}(P_i + Q_i)}\mathbb{I} (\exists x'\in \mathcal {N}(x), \tilde {f}_t(x) \neq \tilde {f}_t(x')\}) P(y = i)\\
&\ge \sum_{i \in S_2} \Exp_{(x,y) \sim \frac{1}{2}(P_i + Q_i)}\mathbb{I} (\exists x'\in \mathcal {N}(x), \tilde {f}_t(x) \neq \tilde {f}_t(x')\}) P(y = i) \\
&> \min\{\epsilon, q\} \sum_{i \in S_2} P(y = i)\\
&= \rho.
\end{align*}
Since we have $R(f_t) = \Exp_{(x,y) \sim \frac{1}{2}(P + Q)}\mathbb{I} (\exists x'\in \mathcal {N}(x), \tilde {f}_t(x) \neq \tilde {f}_t(x')\}) < \rho$, this forms a contradiction.
\end{proof}
We have established that for a large proportion of the sub-populations, the robustness is guaranteed. The next lemma shows that for each sub-population where the robustness is guaranteed, $\textup{Err}_{P_i}(f_t)$ and $\textup{Err}_{Q_i}(f_t)$ is close to each other by invoking the expansion assumption~\cite{wei_2021_ICLR}.

\begin{lemma}[Accuracy propagates on robust sub-populations]\label{lem:robust_sub1}
Under the condition of Theorem~\ref{theorem_gen}, if the sub-populations $P_i$ and $Q_i$ satisfy $\Exp_{(x,y) \sim \frac{1}{2}(P_i + Q_i)}\mathbb{I} (\exists x'\in \mathcal {N}(x), \tilde {f}_t(x) \neq \tilde {f}_t(x')\}) < \min\{\epsilon, q\}$, we have
\begin{align}
\left|\textup{Err}_{P_i}(f_t) - \textup{Err}_{Q_i}(f_t)\right| \le 2q.
\end{align}
\end{lemma}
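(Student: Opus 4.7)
My plan is to reduce the claim to a dichotomy on the mass of the class-$i$ misclassification set under the mixture $\frac{1}{2}(P_i+Q_i)$, rule out the intermediate range using the expansion assumption, and then unpack the mixture to recover bounds on the individual sub-population errors.

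Concretely, I would first set $A_i := \{x : \tilde{f}_t(x) \neq i\}$ and use Assumption~\ref{ass:disjoint} (disjoint class supports) to write $\textup{Err}_{P_i}(f_t) = P_i(A_i)$ and $\textup{Err}_{Q_i}(f_t) = Q_i(A_i)$, together with the mixture identity
\begin{align}
P_{\frac{1}{2}(P_i+Q_i)}(A_i) = \tfrac{1}{2}\bigl(P_i(A_i) + Q_i(A_i)\bigr).\nonumber
\end{align}
The structural claim is the dichotomy: either $P_{\frac{1}{2}(P_i+Q_i)}(A_i) \le q$ or $P_{\frac{1}{2}(P_i+Q_i)}(A_i) \ge 1-q$. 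Granting this, the first branch forces both $P_i(A_i)$ and $Q_i(A_i)$ to be at most $2q$ and the second forces both to be at least $1-2q$; in either case $|P_i(A_i) - Q_i(A_i)| \le 2q$, which is the conclusion.

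To establish the dichotomy I would argue by contradiction: suppose $q < P_{\frac{1}{2}(P_i+Q_i)}(A_i) < 1-q$. Then at least one of $A_i$ or its complement (restricted to the support of $P_i+Q_i$) has mixture mass strictly inside $(q,1/2)$; call this set $B$. Applying Assumption~\ref{ass:exp} to $B$ at class $i$ gives
\begin{align}
P_{\frac{1}{2}(P_i+Q_i)}\bigl(\mathcal{N}(B)\setminus B\bigr) > \min\{\epsilon,\, q\}.\nonumber
\end{align}
Since $B$ and its complement within the support are distinguished precisely by the value of $\tilde{f}_t$ (namely $\tilde{f}_t \neq i$ on one side and $\tilde{f}_t = i$ on the other), every $x \in \mathcal{N}(B)\setminus B$ lying in the support has a neighbor $x' \in B$ with $\tilde{f}_t(x) \neq \tilde{f}_t(x')$, and is therefore a robustness-violating point. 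This yields $\Exp_{(x,y)\sim \frac{1}{2}(P_i+Q_i)}\mathbb{I}(\exists x'\in\mathcal{N}(x),\, \tilde{f}_t(x)\neq\tilde{f}_t(x')) > \min\{\epsilon,q\}$, contradicting the sub-population hypothesis of Lemma~\ref{lem:robust_sub1}.

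The main obstacle is the boundary case $P_{\frac{1}{2}(P_i+Q_i)}(A_i) = 1/2$, where Assumption~\ref{ass:exp} requires the strict inequality $< 1/2$ and hence does not apply directly to either $A_i$ or its complement. This is a measure-zero technicality I would handle by shaving off an arbitrarily small portion of $B$ to drop its mass strictly below $1/2$, applying expansion to the shaved set, and taking a limit; the strict gap $\min\{\epsilon,q\}$ in the robustness bound is preserved. A secondary check is that the set $\mathcal{N}(B)\setminus B$ produced by expansion coincides with the robustness-violation set on sub-population $i$, which is immediate from the disjoint-support structure in Assumption~\ref{ass:disjoint}.
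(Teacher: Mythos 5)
Your proof follows essentially the same route as the paper's: establish the dichotomy $\textup{Err}_{\frac{1}{2}(P_i+Q_i)}(f_t)\le q$ or $\ge 1-q$ by applying the expansion assumption to the misclassification set (or its complement) and deriving a contradiction with the sub-population robustness bound, then unpack the mixture identity to conclude $|\textup{Err}_{P_i}(f_t)-\textup{Err}_{Q_i}(f_t)|\le 2q$. Your treatment is correct, and in fact slightly more careful than the paper's, which silently applies expansion in its second case to a set whose mass may equal exactly $\tfrac{1}{2}$, the boundary situation you explicitly patch with the shaving-and-limit argument (valid because the robustness hypothesis is a strict inequality).
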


\begin{proof}[Proof of Lemma~\ref{lem:robust_sub1}]
We claim that either $\textup{Err}_{\frac{1}{2}(P_i+ Q_i)}(f_t) \le q$ or $\textup{Err}_{\frac{1}{2}(P_i+ Q_i)}(f_t) \ge 1 - q$. On the one hand, if $\frac{1}{2} > \textup{Err}_{\frac{1}{2}(P_i+ Q_i)}(f_t) > q$, by the $(q,\epsilon)$-expansion property (Definition~\ref{def:exp}), $P_{\frac{1}{2}(P_i+ Q_i)}(\mathcal{N}(\{x:\tilde {f}_t(x) \neq i\})\backslash \{x:\tilde {f}_t(x) \neq i\}) > \min\{\epsilon, q\}$. Note that in $\mathcal{N}(\{x:\tilde {f}_t(x) \neq i\})\backslash \{x:\tilde {f}_t(x) \neq i\}$, $\tilde {f}_t(x) = i$. Thus, for $x$ in the set $\mathcal{N}(\{x:\tilde {f}_t(x) \neq i\})\backslash \{x:\tilde {f}_t(x) \neq i\}$, there exists $x' \in \mathcal{N}(x)$, $\tilde {f}_t(x') \neq \tilde {f}_t(x) = i$.
\begin{align*}
R(f_t) &= \Exp_{(x,y) \sim \frac{1}{2}(P_i + Q_i)} \mathbb{I} (\exists x'\in \mathcal {N}(x), \tilde {f}_t(x) \neq \tilde {f}_t(x')\})\\
&\ge \Exp_{(x,y) \sim \frac{1}{2}(P_i + Q_i)} \mathbb{I} (\exists x'\in \mathcal {N}(x), \tilde {f}_t(x) \neq \tilde {f}_t(x')\})\mathbb{I}(x \in \mathcal{N}(\{x:\tilde {f}_t(x) \neq i\})\backslash \{x:\tilde {f}_t(x) \neq i\})\\
&= P_{\frac{1}{2}(P_i+ Q_i)}(\mathcal{N}(\{x:\tilde {f}_t(x) \neq i\})\backslash \{x:\tilde {f}_t(x) \neq i\})\\
&>  \min\{\epsilon, q\}, 
\end{align*}
which contradicts the condition that $R(f_t) < \min\{\epsilon, q\}$. 

On the other hand, if $\frac{1}{2} \le \textup{Err}_{\frac{1}{2}(P_i+ Q_i)}(f_t) < 1 - q$, the argument is similar. By the $(q,\epsilon)$-expansion property (Definition~\ref{def:exp}), $P_{\frac{1}{2}(P_i+ Q_i)}(\mathcal{N}(\{x:\tilde {f}_t(x) = i\})\backslash \{x:\tilde {f}_t(x) = i\}) > \min\{\epsilon, q\}$. Note that in $\mathcal{N}(\{x:\tilde {f}_t(x) = i\})\backslash \{x:\tilde {f}_t(x) = i\}$, $\tilde {f}_t(x) \neq i$. Thus, for $x$ in the set $\mathcal{N}(\{x:\tilde {f}_t(x) = i\})\backslash \{x:\tilde {f}_t(x) = i\}$, there exists $x' \in \mathcal{N}(x)$, $i = \tilde {f}_t(x') \neq \tilde {f}_t(x)$.
\begin{align*}
R(f_t) &= \Exp_{(x,y) \sim \frac{1}{2}(P_i + Q_i)} \mathbb{I} (\exists x'\in \mathcal {N}(x), \tilde {f}_t(x) \neq \tilde {f}_t(x')\})\\
&\ge \Exp_{(x,y) \sim \frac{1}{2}(P_i + Q_i)} \mathbb{I} (\exists x'\in \mathcal {N}(x), \tilde {f}_t(x) \neq \tilde {f}_t(x')\})\mathbb{I}(x \in \mathcal{N}(\{x:\tilde {f}_t(x) = i\})\backslash \{x:\tilde {f}_t(x) = i\})\\
&= P_{\frac{1}{2}(P_i+ Q_i)}(\mathcal{N}(\{x:\tilde {f}_t(x) = i\})\backslash \{x:\tilde {f}_t(x) = i\})\\
&>  \min\{\epsilon, q\}, 
\end{align*}
which also contradicts the condition that $R(f_t) < \min\{\epsilon, q\}$.

Note that $\textup{Err}_{\frac{1}{2}(P_i+ Q_i)}(f_t) = \frac{1}{2}\textup{Err}_{P_i}(f_t) + \frac{1}{2}\textup{Err}_{Q_i}(f_t)$. Also we have $\textup{Err}_{P_i}(f_t) \in [0,1]$. In consequence, we have either $\textup{Err}_{P_i}(f_t),\textup{Err}_{Q_i}(f_t) \in [0,2q]$ or $\textup{Err}_{P_i}(f_t),\textup{Err}_{Q_i}(f_t) \in [1 - 2q, 1]$, which completes the proof.
\end{proof}

With Lemma~\ref{lem:robust_sub} and Lemma~\ref{lem:robust_sub1} at hand, we can prove Theorem~\ref{theorem_gen} by putting the analysis on each sub-population together.
\begin{proof}[Proof of Theorem~\ref{theorem_gen}]
\begin{align*}
\textup{Err}_{Q}(f_t) &= \sum_{i \in [K]}\textup{Err}_{Q_i}(f_t)P(y = i)\\
&\le  \sum_{i \in [S_1]}\textup{Err}_{Q_i}(f_t)P(y = i) + \sum_{i \in [S_2]}P(y = i)\\
&\le  \sum_{i \in [S_1]}(\textup{Err}_{P_i}(f_t) + 2q)P(y = i) + \sum_{i \in [S_2]}P(y = i)\\
&\le \textup{Err}_{P}(f_t) + 2q + \frac{\rho}{\min\{\epsilon, q\}},
\end{align*}
where the second inequality holds due to Lemma~\ref{lem:robust_sub1}, and the last holds due to Lemma~\ref{lem:robust_sub}. Also note that $\textup{Err}_{Q}(f_s) \le \textup{Err}_{Q}(f_t) + \Exp_{(x,y)\sim Q}\mathbb{I}(\arg\max_{[i]} f_s(x)_{[i]}\neq \arg\max_{[i]}f_t(x)_{[i]}) $ by the triangle inequality. Adding these two equations results in Theorem~\ref{theorem_gen}.
\end{proof}

\subsection{Proof of Theorem~\ref{theorem_gen1}}
To obtain finite-sample guarantee, we need additional assumptions on the function class $\mathcal{F}$. 
\begin{assumption}\label{ass:fun} The function class $\mathcal{F}$ satisfies the following properties:
(1) $\mathcal{F}$ is closed to permutations of coordinates, (2) $0 \in \mathcal{F}$, and (3) each coordinate of $f$ is $L_f$-Lipschitz w.r.t. $d(\cdot,\cdot)$.
\end{assumption}
This assumption is also standard since common models for multi-class classification are symmetric for each class. Setting all the weight parameters of neural networks to $0$ will result in $0$ output.

We review the definition of terms in Theorem~\ref{theorem_gen1}. The ramp function $\psi_{\gamma}:\Real \rightarrow [0,1]$ is defined
as:
\begin{align}\label{def:ramp}
\psi_{\gamma}(x) = 
\left\{
             \begin{array}{lr}
             1, & x \le 0   \\
             1 - \frac{x}{\gamma}, & 0 < x \le \gamma\\
             0, & x > \gamma  
             \end{array}
\right.
\end{align}
The margin function is defined as $\mathcal{M}(v,y) = v_{[y]} - \max_{y'\neq y}v_{[y']}$ and $\mathcal{M}(v) = \max_y \{v_{[y]} - \max_{y'\neq y}v_{[y']}\}$. For multi-class classification problems, $\mathcal{M}(v)$ is closely related to the confidence, since it is equal to the difference between the largest and the second largest scores. The multi-class margin loss is composed of $\psi_{\gamma}(x)$ and $\mathcal{M}$: $l_{\gamma}(f(x),y):=\psi_{\gamma}(-\mathcal{M}(f(x),y))$. Denote by $L_{\widehat P, \gamma}(f_t)$ the empirical margin loss of $f_t$ on the source dataset $\widehat P$, $L_{\widehat P, \gamma}(f_t) = \Exp_{(x,y)\sim \widehat P}l_{\gamma}(f_t(x),y)$. To measure the inconsistency of $f_s$ and $f_t$, we extend the multi-class margin loss as $l_{\gamma}(f_s(x),f_t(x)):=\psi_{\gamma}(-\mathcal{M}(f_s(x), \tilde{f_t}(x)))$. Denote by $L_{\widehat P, \gamma}(f_t, f_s)$ the empirical margin inconsistency loss of $f_t$ and $f_s$ on the source dataset $\widehat P$, $L_{\widehat P, \gamma}(f_t, f_s) = \Exp_{(x,y)\sim \widehat P}l_{\gamma}(f_t(x), f_s(x))$.

Consider minimizing the following objective:
\begin{align}\label{eq:genalg3} \mathop{\min}L_{\textup{CST}}(f_s,f_t):=\underbrace{L_{\widehat P, \gamma}(f_t)}_{\textup{Cycle Loss}} + \underbrace{L_{\widehat P, \gamma}(f_t, f_s)}_{\textup{Target Loss}} + \underbrace{\nicefrac{1 -\Exp_{(x,y)\sim \frac{1}{2}(\widehat P+\widehat Q)}\mathcal{M}(f_t(x))}{\tau}}_{\textup{Uncertainty Loss}}.
\end{align}
Note that $L_{\widehat P, \gamma}(f_t)$ is the loss of $f_t$ on the source dataset (the cycle loss), and $L_{\widehat P, \gamma}(f_t, f_s)$ is the training error of $f_t$ on the target dataset. $\mathcal{M}(f_t(x))$ equals the difference between the largest and the second largest scores of $f_t(x)$, indicating the confidence of $f_t$. Thus, $1 -\Exp_{(x,y)\sim \frac{1}{2}(\widehat P+\widehat Q)}\mathcal{M}(f_t(x))$ is the uncertainty of $f_t$ on the source and target datasets.

The following theorem shows that the minimizer of the training objective $L_{\textup{CST}}(f_s,f_t)$ guarantees low population error of $f_s$ on the target domain $Q$.
\begin{theorem2}
Under the condition of Theorem~\ref{theorem_gen} and Assumption~\ref{ass:fun}. For any solution of~\eqref{eq:genalg1} and $\gamma > 0$, with probability larger than $1 - \delta$, 
\begin{align}
\textup{Err}_Q(f_s) &\le L_{\textup{CST}}(f_s,f_t) + 2q + \frac{4K}{\gamma} \left[\widehat{\mathcal{R}}(\mathcal{F}|_{\widehat P}) +\widehat{\mathcal{R}}(\tilde{\mathcal{F}}\times\mathcal{F}|_{\widehat Q})\right] + \frac{2}{\tau} \left[\widehat{\mathcal{R}}(\mathcal{F}|_{\widehat P}) +\widehat{\mathcal{R}}(\mathcal{F}|_{\widehat Q})\right] +\zeta,\nonumber
\end{align}
where $\zeta = O\left(\sqrt{\textup{log}(1 / \delta) / {n_s}}+\sqrt{\textup{log}(1 / \delta) / {n_t}}\right)$ is a low-order term. $\tilde{\mathcal{F}} \times \mathcal{F}$ refers to the function class $\{x \rightarrow f(x)_{[\tilde{f'}(x)]}: f,f' \in \mathcal{F}\}$. $\widehat{\mathcal{R}}(\mathcal{F}|_{\widehat P})$ denotes the empirical Rademacher complexity of function class $\mathcal{F}$ on dataset $\widehat{P}$.
\end{theorem2}

We provide the function classes used in the proof. For a function class $f \in \mathcal{F}:\Real^{d} \rightarrow [0,1]^{K}$, $\mathcal{F}_{[i]}$ denotes each coordinate of $\mathcal{F}$: $\mathcal{F}_{[i]} = \{x \rightarrow f(x)_{[i]}: f \in \mathcal{F}\}$. We also need other
function classes based on $\mathcal{F}_{[i]}$. $\cup \mathcal{F}_{[i]}$ denotes the union of $\mathcal{F}_{[i]}$: $\cup \mathcal{F}_{[i]} = \cup_{i \in [K]}\mathcal{F}_{[i]}$. $\max_i \mathcal{F}_{[i]}$ is composed of the maximum coordinate of $f \in \mathcal{F}$ for all $x$: $\max_i \mathcal{F}_{[i]} = \{x \rightarrow \max_i f_{[i]}(x):f \in \mathcal{F}\}$. $\max_{i' \neq \tilde{\mathcal{F}}}\mathcal{F}_{[i']}$ denotes the function class composed of the second largest coordinate of $f \in \mathcal{F}$ for all $x$: $ \max_{i' \neq \tilde{\mathcal{F}}}\mathcal{F}_{[i']} = \{x \rightarrow \max_{i\neq \tilde{f}(x)} f_{[i]}(x):f \in \mathcal{F}\}$, which we require to study the finite sample properties of the confidence loss $\mathcal{M}(f(x))$. $\tilde{\mathcal{F}}\times\mathcal{F}$ denotes the function class $\{x \rightarrow  f_{\tilde{f}'(x)}(x):f, f'\in \mathcal{F}\}$. The Rademacher complexity of $\mathcal{F}_{[i]}$ on set $S = \{x_j\}_{j = 1}^{n}$ of size $n$ is: $\widehat{\mathcal{R}}(\mathcal{F}_{[i]}|_S)=\frac{1}{n}\Exp_{\sigma_j} \mathop{\sup}_{f \in \mathcal{F}}\sum_{j=1}^{n} \sigma_j f(x_j)_{[i]}$. The Rademacher complexity of $\cup \mathcal{F}_{[i]}$ is $\widehat{\mathcal{R}}(\cup\mathcal{F}_{[i]}|_S)=\frac{1}{n}\Exp_{\sigma_j} \mathop{\sup}_{f \in \mathcal{F}, i \in [K]}\sum_{j=1}^{n} \sigma_j f(x_j)_{[i]}$. The Rademacher complexity of $\max_i \mathcal{F}_{[i]}$ is $\widehat{\mathcal{R}}(\max_i\mathcal{F}_{[i]}|_S)=\frac{1}{n}\Exp_{\sigma_j} \mathop{\sup}_{f \in \mathcal{F}, i \in [K]}\sum_{j=1}^{n} \sigma_j \max_i f(x_j)_{[i]}$. We further denote by $\widehat{\mathcal{R}}(\mathcal{F}|_S)$ the sum of the Rademacher complexity of each $\mathcal{F}_{[i]}$, $\widehat{\mathcal{R}}(\mathcal{F}|_S) = \sum_{i = 1}^{K}\widehat{\mathcal{R}}(\mathcal{F}_{[i]}|_S)$.

To prove Theorem~\ref{theorem_gen1}, we first observe the relationship between the confidence objective $\Exp_{(x,y)\sim \frac{1}{2}(\widehat P+\widehat Q)}\mathcal{M}(f_t(x))$ and the robustness constraint $R(f_t) := P_{\frac{1}{2}(P + Q)}(\{x:\exists x'\in \mathcal {N}(x), \max_{[i]}f_t(x) \neq \max_{[i]}f_t(x')\})$ in Theorem~\ref{theorem_gen}. In fact, as shown in Lemma~\ref{lem:rob_ent}, when the output of the model is confident on the source and target dataset, i.e. $\Exp_{(x,y)\sim \frac{1}{2}(\widehat P+\widehat Q)}\mathcal{M}(f_t(x))$ is large, the model is also robust to the change in input. 

\begin{lemma}[Confidence guarantees robustness]\label{lem:rob_ent} Under the conditions of Theorem~\ref{theorem_gen1}, we have
\begin{align}
R(f_t) \le \frac{1 - \Exp_{(x,y)\sim \frac{1}{2}(P + Q)}\mathcal{M}(f_t(x))}{1 - 2L_f \xi} .
\end{align}
\end{lemma}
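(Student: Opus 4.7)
The plan is to reduce robustness to a confidence (margin) statement via Lipschitzness, and then apply a Markov-type tail bound. The key intuition is that if the largest coordinate of $f_t(x)$ exceeds the second largest by more than twice the maximum possible fluctuation over $\mathcal{N}(x)$, then no $x'\in\mathcal{N}(x)$ can induce a label flip.

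First, I would formalize this by showing the set inclusion
\[
\{x:\exists x'\in\mathcal{N}(x),\ \tilde{f}_t(x)\neq\tilde{f}_t(x')\}\ \subseteq\ \{x:\mathcal{M}(f_t(x))\le 2L_f\xi\}.
\]
Fix $x$ with $\mathcal{M}(f_t(x))>2L_f\xi$ and let $y^\star=\tilde{f}_t(x)$. By the definition of $\mathcal{M}$ (taking the argmax coordinate), $f_t(x)_{[y^\star]}-\max_{y'\neq y^\star}f_t(x)_{[y']}=\mathcal{M}(f_t(x))>2L_f\xi$. For any $x'\in\mathcal{N}(x)$, the coordinate-wise $L_f$-Lipschitz assumption (Assumption~\ref{ass:fun}) with $d(x,x')\le\xi$ gives $|f_t(x')_{[i]}-f_t(x)_{[i]}|\le L_f\xi$ for every $i$. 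Hence $f_t(x')_{[y^\star]}\ge f_t(x)_{[y^\star]}-L_f\xi$ and $\max_{y'\neq y^\star}f_t(x')_{[y']}\le\max_{y'\neq y^\star}f_t(x)_{[y']}+L_f\xi$, so the margin at $x'$ for class $y^\star$ remains strictly positive. Therefore $\tilde{f}_t(x')=y^\star=\tilde{f}_t(x)$, which establishes the inclusion above.

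Taking probabilities under $\tfrac{1}{2}(P+Q)$, this gives
\[
R(f_t)\ \le\ P_{\frac{1}{2}(P+Q)}\!\bigl(\mathcal{M}(f_t(x))\le 2L_f\xi\bigr).
\]
Next I would apply a Markov-type inequality to the random variable $M:=\mathcal{M}(f_t(x))$. Since each $f_t(x)_{[i]}\in[0,1]$, we have $M\in[0,1]$, so $1-M\ge 0$. On the event $\{M\le 2L_f\xi\}$ we have $1-M\ge 1-2L_f\xi$ (positive by the hypothesis $\tau>0$ in Theorem~\ref{theorem_gen1}). Taking expectations,
\[
1-\Exp_{\frac{1}{2}(P+Q)}M\ =\ \Exp_{\frac{1}{2}(P+Q)}(1-M)\ \ge\ (1-2L_f\xi)\,P_{\frac{1}{2}(P+Q)}\!\bigl(M\le 2L_f\xi\bigr).
\]
Rearranging and combining with the previous display yields the claimed bound.

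The step that takes the most care is the Lipschitz margin argument, because one must correctly handle the case where the argmax class at $x$ might not be the argmax class in the definition of $\mathcal{M}$; this is resolved by noting that $\mathcal{M}(v)$ is automatically attained at the top class (otherwise the difference would be non-positive, contradicting that a better choice exists). The remaining tail bound is entirely routine once the inclusion is in place.
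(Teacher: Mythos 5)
Your proposal is correct and follows essentially the same route as the paper: the Lipschitz margin argument showing that $\mathcal{M}(f_t(x)) > 2L_f\xi$ forces $\tilde{f}_t$ to be constant on $\mathcal{N}(x)$, followed by the Markov-type bound $1-\Exp\,\mathcal{M}(f_t(x)) \ge (1-2L_f\xi)\,P_{\frac{1}{2}(P+Q)}(\mathcal{M}(f_t(x))\le 2L_f\xi)$ using $\mathcal{M}\in[0,1]$. The only difference is cosmetic (explicit set inclusion and the remark about where the max in $\mathcal{M}$ is attained), so no changes are needed.
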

\begin{proof}[Proof of Lemma~\ref{lem:rob_ent}]
We first note that when $\max_y \{f_t(x)_{[y]} - \max_{y'\neq y}f_t(x)_{[y']}\} > 2 L_f \xi$, the $\arg\max_i f_t(x)_{[i]}$ will not change in the neighborhood $\mathcal{N}(x)$ since $f_{[i]}$ is $L_f$-Lipschitz for all $i$. Suppose $y^* = \arg\max_y f_t(x)_{[y]}$. For all $y'\neq y^*$ and $x' \in \mathcal{N}(x)$,
\begin{align}
f_t(x')_{[y^*]} - f_t(x')_{[y']} & > f_t(x)_{[y^*]} - L_fd(x,x') - (f_t(x')_{[y']} + L_fd(x,x'))\\
& \ge  \max_y \{f_t(x)_{[y]} - \max_{y'\neq y}f_t(x)_{[y']}\} - 2L_fd(x,x') \\
& \ge 0.
\end{align}
Therefore, we have 
\begin{align}
R(f_t) &\le 1 - P_{\frac{1}{2}(P + Q)}\left(\mathcal{M}(f_t(x)) > 2L_f \xi\right)\\
&\le \frac{1 - \Exp_{(x,y)\sim \frac{1}{2}(P + Q)}\mathcal{M}(f_t(x))}{1 - 2L_f \xi},
\end{align}
where the second inequality holds because $f_{[i]} \in [0,1]$, and $\mathcal{M}(f(x))\in [0,1]$.
\end{proof}

To obtain finite sample guarantee, we aim to show that each term in~\eqref{eq:genalg1} is close to its population version. We first present Lemma~\ref{lem:margin_standard}, the classical result for multi-class classification.

\begin{lemma}[Lemma 3.1 of~\citep{mohri2018foundations}]\label{lem:margin_standard}
Suppose $f \in \mathcal{F}$ and $\gamma > 0$, with probability at least $1 - \delta$ over the sampling of $\widehat P$, the following holds for all $f \in \mathcal{F}$ simultaneously,
\begin{align}
\textup{Err}_P(f) \le L_{\widehat {P}, \gamma}(f) + \frac{4K}{\gamma} \widehat{\mathcal{R}}(\cup \mathcal{F}_{[i]}|_{\widehat {P}}) + O\left(\sqrt{\textup{log}(1 / \delta)\ /\ n_s}\right).
\end{align}
\end{lemma}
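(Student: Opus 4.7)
\textbf{Proof proposal for Lemma~\ref{lem:margin_standard}.}

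The plan is to follow the classical multi-class margin bound derivation in three conceptual steps: (i) dominate the $0/1$ error by the ramp-margin surrogate pointwise, (ii) apply a one-sided Rademacher uniform convergence bound to the surrogate class, (iii) peel off $\psi_\gamma$ and the margin operator to reduce the complexity term to $\widehat{\mathcal{R}}(\cup \mathcal{F}_{[i]}|_{\widehat P})$.

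First I would observe that by the definition of $\mathcal{M}(v,y)$ and of $\psi_\gamma$ in~\eqref{def:ramp}, for every $(x,y)$ one has $\mathbb{I}(\tilde f(x)\neq y)\le \mathbb{I}(\mathcal{M}(f(x),y)\le 0)\le \psi_\gamma(-\mathcal{M}(f(x),y))=l_\gamma(f(x),y)$, so that $\textup{Err}_P(f)\le \Exp_P\, l_\gamma(f(x),y)$. Next I would invoke the standard McDiarmid/Rademacher one-sided inequality applied to the loss class $\mathcal{G}:=\{(x,y)\mapsto l_\gamma(f(x),y): f\in\mathcal{F}\}$, whose range lies in $[0,1]$: with probability at least $1-\delta$, simultaneously for all $f\in\mathcal{F}$,
\begin{equation*}
\Exp_P\, l_\gamma(f(x),y)\;\le\; L_{\widehat P,\gamma}(f) + 2\widehat{\mathcal{R}}(\mathcal{G}|_{\widehat P}) + O\!\left(\sqrt{\log(1/\delta)/n_s}\right).
\end{equation*}

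The crux is then to bound $\widehat{\mathcal{R}}(\mathcal{G}|_{\widehat P})$ in terms of $\widehat{\mathcal{R}}(\cup \mathcal{F}_{[i]}|_{\widehat P})$. Here I would first apply Talagrand's contraction lemma: since $\psi_\gamma$ is $1/\gamma$-Lipschitz, $\widehat{\mathcal{R}}(\mathcal{G}|_{\widehat P})\le \frac{1}{\gamma}\widehat{\mathcal{R}}(\mathcal{M}\circ\mathcal{F}|_{\widehat P})$, where $\mathcal{M}\circ\mathcal{F}$ is the class of functions $(x,y)\mapsto \mathcal{M}(f(x),y)=f(x)_{[y]}-\max_{y'\neq y}f(x)_{[y']}$. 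To strip the max, I would write $\max_{y'\neq y}f(x)_{[y']}=\tfrac{1}{2}\sum_{y'\neq y}f(x)_{[y']}+\tfrac{1}{2}\sum_{y'\neq y}|f(x)_{[y']}-\max_{y''\neq y,\,y''\neq y'}f(x)_{[y'']}|$ (the standard identity $\max(a,b)=\tfrac{1}{2}(a+b+|a-b|)$ applied iteratively across the $K-1$ competitors), so that the max is expressed as a linear combination plus Lipschitz-1 absolute values of differences of coordinates. Subadditivity of Rademacher complexity under sums and Talagrand's contraction for $|\cdot|$ then give $\widehat{\mathcal{R}}(\mathcal{M}\circ\mathcal{F}|_{\widehat P})\le 2K\,\widehat{\mathcal{R}}(\cup\mathcal{F}_{[i]}|_{\widehat P})$, since every coordinate $f(x)_{[i]}$ appears inside a single $\pm$ combination a constant number of times and the supremum over the class $\cup \mathcal{F}_{[i]}$ dominates the supremum over any individual coordinate. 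Chaining the three inequalities yields
\begin{equation*}
\textup{Err}_P(f)\;\le\; L_{\widehat P,\gamma}(f) + \tfrac{4K}{\gamma}\widehat{\mathcal{R}}(\cup\mathcal{F}_{[i]}|_{\widehat P}) + O\!\left(\sqrt{\log(1/\delta)/n_s}\right),
\end{equation*}
which is the stated bound.

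The main obstacle is the reduction of $\widehat{\mathcal{R}}(\mathcal{M}\circ\mathcal{F}|_{\widehat P})$ to $K\,\widehat{\mathcal{R}}(\cup \mathcal{F}_{[i]}|_{\widehat P})$ with the correct constant; naively decomposing a max of $K-1$ terms can cost a logarithmic factor or a suboptimal constant. The cleanest route is the iterated $\tfrac{1}{2}(a+b+|a-b|)$ expansion combined with Talagrand's contraction, which gives a multiplicative $K$ (yielding the $4K/\gamma$ in the conclusion); an alternative route, which I would use as a sanity check, is the Ledoux--Talagrand vector-contraction inequality applied directly to the function $(v,y)\mapsto \psi_\gamma(-\mathcal{M}(v,y))$, viewed as a $1/\gamma$-Lipschitz function of $v\in[0,1]^K$ in the $\ell_\infty$ norm, which recovers the same dependence. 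The remaining pieces (the pointwise surrogate bound and the Rademacher uniform convergence step) are standard and contribute only the low-order $O(\sqrt{\log(1/\delta)/n_s})$ term.
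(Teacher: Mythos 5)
The paper does not actually prove this lemma; it imports it verbatim as the classical multi-class margin bound (Lemma 3.1 / Theorem 9.2 of Mohri et al.), and your reconstruction is precisely the standard textbook argument for that result: pointwise domination of the $0/1$ error by the ramp surrogate, a one-sided Rademacher uniform-convergence step, and Talagrand contraction plus max-peeling to reach the $\tfrac{4K}{\gamma}\widehat{\mathcal{R}}(\cup\mathcal{F}_{[i]}|_{\widehat P})$ term. Your outline is correct and matches the intended route; the only step you leave slightly implicit is the label-indexed coordinate $f(x)_{[y]}$, which is not itself a member of $\cup\mathcal{F}_{[i]}$ and is handled in the textbook via the decomposition $f(x)_{[y]}=\sum_{i=1}^{K} f(x)_{[i]}\mathbb{I}\{y=i\}$, contributing part of the factor $K$ you correctly account for.
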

We then extend Lemma~\ref{lem:margin_standard} to study the finite sample properties of $L_{\widehat Q, \gamma}(f_t, f_s)$ and $\Exp\mathcal{M}(f_t(x))$.

\begin{lemma}\label{lem:margin_entropy}
Suppose $f \in \mathcal{F}$ and $\gamma > 0$, with probability at least $1 - \delta$ over the sampling of $\widehat P$, the following holds for all $f \in \mathcal{F}$ simultaneously, 
\begin{align}
\Exp_{(x,y)\sim P}\mathcal{M}(f(x)) \le \Exp_{(x,y)\sim \widehat {P}}\mathcal{M}(f(x)) + 4 \widehat{\mathcal{R}}(\mathcal{F}|_{\widehat{P}}) + O\left(\sqrt{\textup{log}(1 / \delta)\ /\ n_s}\right).
\end{align}
\end{lemma}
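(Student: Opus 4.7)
The plan is to follow the standard symmetrization/Rademacher uniform convergence recipe applied to the scalar class $\mathcal{M}\circ \mathcal{F}:=\{x\mapsto \mathcal{M}(f(x)):f\in\mathcal{F}\}$, and then to reduce its Rademacher complexity back to $\widehat{\mathcal{R}}(\mathcal{F}|_{\widehat P})$, which is the only quantity allowed to appear in the final bound.

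First, I would observe that each coordinate $f_{[i]}\in[0,1]$, so $\mathcal{M}(f(x))\in[0,1]$. By McDiarmid's inequality combined with the standard symmetrization argument (as in Theorem~3.3 of \citet{mohri2018foundations}), with probability at least $1-\delta$ over $\widehat P$ and uniformly over $f\in\mathcal{F}$,
\begin{align*}
\Exp_{(x,y)\sim P}\mathcal{M}(f(x)) \le \Exp_{(x,y)\sim \widehat P}\mathcal{M}(f(x)) + 2\widehat{\mathcal{R}}(\mathcal{M}\circ\mathcal{F}|_{\widehat P}) + O\bigl(\sqrt{\log(1/\delta)/n_s}\bigr).
\end{align*}
So it remains to show $\widehat{\mathcal{R}}(\mathcal{M}\circ\mathcal{F}|_{\widehat P}) \le 2\widehat{\mathcal{R}}(\mathcal{F}|_{\widehat P})$.

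Next, I would decompose $\mathcal{M}$. Note that $\mathcal{M}(v)=\max_i v_{[i]} - \max_{i'\neq \arg\max_i v_{[i]}}v_{[i']}$, so $\mathcal{M}\circ\mathcal{F}\subseteq \max_i\mathcal{F}_{[i]} - \max_{i'\neq\tilde{\mathcal{F}}}\mathcal{F}_{[i']}$. Since $-\sigma$ and $\sigma$ are equidistributed, Rademacher complexity is subadditive under sums and differences, giving
\begin{align*}
\widehat{\mathcal{R}}(\mathcal{M}\circ\mathcal{F}|_{\widehat P}) \le \widehat{\mathcal{R}}(\max_i\mathcal{F}_{[i]}|_{\widehat P}) + \widehat{\mathcal{R}}(\max_{i'\neq\tilde{\mathcal{F}}}\mathcal{F}_{[i']}|_{\widehat P}).
\end{align*}
For each of the two max-classes I would apply the standard bound $\widehat{\mathcal{R}}(\max_i\mathcal{F}_{[i]}|_{\widehat P})\le \sum_{i=1}^{K}\widehat{\mathcal{R}}(\mathcal{F}_{[i]}|_{\widehat P})=\widehat{\mathcal{R}}(\mathcal{F}|_{\widehat P})$, which follows either from viewing $\max$ as a $1$-Lipschitz function of the coordinates and invoking the vector-contraction inequality, or more elementarily by writing $\sup_f\sum_j\sigma_j\max_i f(x_j)_{[i]} \le \sum_i \sup_f \sum_j\sigma_j f(x_j)_{[i]}$ term-by-term after an index split. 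Putting these together yields $\widehat{\mathcal{R}}(\mathcal{M}\circ\mathcal{F}|_{\widehat P})\le 2\widehat{\mathcal{R}}(\mathcal{F}|_{\widehat P})$, and substituting back produces the stated inequality.

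The main obstacle I anticipate is the reduction of $\widehat{\mathcal{R}}(\max_{i'\neq\tilde{\mathcal{F}}}\mathcal{F}_{[i']}|_{\widehat P})$ to $\widehat{\mathcal{R}}(\mathcal{F}|_{\widehat P})$: the index over which the max is taken depends on $f$ itself through $\tilde{f}(x)$, so a naive contraction does not directly apply. I would handle this by noting that $\max_{i'\neq\tilde{f}(x)}f(x)_{[i']}\le \max_{i'}f(x)_{[i']}$ and conservatively bound by the unrestricted max, or, to be tight, split the supremum over the $K$ possible values of $\tilde{f}(x_j)$ at each sample $x_j$ and absorb the dependence into a union of coordinate classes, which still collapses to $\sum_i\widehat{\mathcal{R}}(\mathcal{F}_{[i]}|_{\widehat P})=\widehat{\mathcal{R}}(\mathcal{F}|_{\widehat P})$ by the definition given in the paper. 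Combining the two steps delivers the advertised $4\widehat{\mathcal{R}}(\mathcal{F}|_{\widehat P})$ factor plus the $O(\sqrt{\log(1/\delta)/n_s})$ concentration term.
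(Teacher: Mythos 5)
Your skeleton matches the paper's proof: symmetrization reduces the problem to bounding $\widehat{\mathcal{R}}(\mathcal{M}\circ\mathcal{F}|_{\widehat P})$, then the decomposition $\mathcal{M}(f(x)) = \max_i f(x)_{[i]} - \max_{i'\neq \tilde f(x)} f(x)_{[i']}$ and subadditivity reduce it further to the two max-classes. But the step you yourself flag as the main obstacle, bounding $\widehat{\mathcal{R}}(\max_{i' \neq \tilde{\mathcal{F}}}\mathcal{F}_{[i']}|_{\widehat P})$, is not resolved by either of your suggestions, and this is a genuine gap. Your first fix, replacing $\max_{i'\neq\tilde f(x)}f(x)_{[i']}$ by the unrestricted maximum because of the pointwise inequality, fails because empirical Rademacher complexity is not monotone under pointwise domination: the signs $\sigma_j$ are negative half the time, so a pointwise bound on the functions says nothing about the supremum of the signed sums (e.g.\ a rich class of functions with values in $\{-1,0\}$ is pointwise dominated by the single function $0$, whose complexity is zero). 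Your second fix, splitting the supremum over the possible values of $\tilde f(x_j)$ at each sample, is a union over $K^{n_s}$ label patterns rather than $K$ coordinate classes, and the resulting functions $x\mapsto\max_{i'\neq\tilde f(x)}f(x)_{[i']}$ are not elements of $\cup_i\mathcal{F}_{[i]}$, so nothing ``collapses to $\widehat{\mathcal{R}}(\mathcal{F}|_{\widehat P})$ by definition.''

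The paper closes this step in Lemma~\ref{lem:rad_bound} by observing that $\max_{i'\neq\tilde f(x)}f(x)_{[i']}$ is exactly the \emph{second-largest coordinate} of $f(x)$, a fixed symmetric function of the $K$ coordinates that can be written as a nested composition of pairwise $\max$ and $\min$ (for three values, $\max\{\min\{a,b\},\min\{\max\{a,b\},c\}\}$). Each pairwise $\max$ or $\min$ equals $\frac{a+b}{2}\pm\frac{|a-b|}{2}$, and the absolute value is handled by Talagrand's contraction lemma, so by iterating one gets $\widehat{\mathcal{R}}(\max_{i' \neq \tilde{\mathcal{F}}}\mathcal{F}_{[i']}|_{\widehat P})\le\sum_i\widehat{\mathcal{R}}(\mathcal{F}_{[i]}|_{\widehat P})=\widehat{\mathcal{R}}(\mathcal{F}|_{\widehat P})$, and similarly for $\max_i\mathcal{F}_{[i]}$; combined with the factor $2$ from symmetrization this gives the advertised $4\widehat{\mathcal{R}}(\mathcal{F}|_{\widehat P})$. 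Incidentally, your ``more elementary'' claim for the first term, $\Exp_{\sigma}\sup_f\sum_j\sigma_j\max_i f(x_j)_{[i]}\le\sum_i\Exp_{\sigma}\sup_f\sum_j\sigma_j f(x_j)_{[i]}$, is also not valid as written, for the same sign reason; use the paper's $\max\{a,b\}=\frac{a+b}{2}+\frac{|a-b|}{2}$ argument, or a vector-contraction inequality at the cost of an extra constant.
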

\begin{proof}[Proof of Lemma~\ref{lem:margin_entropy}]
By standard Rademacher complexity bound (Theorem 7 of \citet{bartlett2002rademacher}), we have 
\begin{align}
\Exp_{(x,y)\sim P}\mathcal{M}(f(x)) \le \Exp_{(x,y)\sim \widehat {P}}\mathcal{M}(f(x)) + 2 \widehat{\mathcal{R}}(\mathcal{M} \circ \mathcal{F}|_{\widehat {P}}) + O\left(\sqrt{\textup{log}(1 / \delta)\ /\ n_s}\right).
\end{align}
Thus it remains to show $\widehat{\mathcal{R}}(\mathcal{M} \circ \mathcal{F}|_{\widehat {P}}) \le 2 \widehat{\mathcal{R}}(\mathcal{F}|_{\widehat{P}})$. In fact,
\begin{align*}
\widehat{\mathcal{R}}(\mathcal{M} \circ \mathcal{F}|_{\widehat {P}}) &= \frac{1}{n_s} \Exp_{\sigma} \sup_{f \in \mathcal{F}} \sum_{i = 1}^{n_s} \sigma_i \max_y \{f(x_i)_{[y]} - \max_{y'\neq y}f(x_i)_{[y']}\}\\
&= \frac{1}{n_s} \Exp_{\sigma} \sup_{f \in \mathcal{F}} \sum_{i = 1}^{n_s} \sigma_i (\max_y f(x_i)_{[y]} - \max_{y'\neq \tilde{f}(x_i)}f(x_i)_{[y']})\\
&\le \frac{1}{n_s} \Exp_{\sigma} \sup_{f \in \mathcal{F}} \sum_{i = 1}^{n_s} \sigma_i \max_y f(x_i)_{[y]} +  \frac{1}{n_s} \Exp_{\sigma} \sup_{f \in \mathcal{F}} \sum_{i = 1}^{n_s} \sigma_i \max_{y'\neq \tilde{f}(x_i)}f(x)_{[y']}\\
&= \widehat{\mathcal{R}}(\max_i \mathcal{F}_{[i]}|_{\widehat {P}}) + \widehat{\mathcal{R}}(\max_{i' \neq \tilde{\mathcal{F}}}\mathcal{F}_{[i']}|_{\widehat {P}}).
\end{align*}
As will be shown in Lemma~\ref{lem:rad_bound}, both $\widehat{\mathcal{R}}(\max_i \mathcal{F}_{[i]}|_{\widehat {P}})$ and $\widehat{\mathcal{R}}(\max_{i' \neq \tilde{\mathcal{F}}}\mathcal{F}_{[i']}|_{\widehat {P}})$ are smaller than $\widehat{\mathcal{R}}(\mathcal{F}|_{\widehat{P}})$, which completes the proof.
\end{proof}

\begin{lemma}\label{lem:margin_consistency}
Suppose $f_s, f_t \in \mathcal{F}$ and $\gamma > 0$, with probability at least $1 - \delta$ over the sampling of $\widehat Q$, the following holds for all $f_s, f_t \in \mathcal{F}$ simultaneously, 
\begin{align}
\Exp_{(x,y)\sim Q}\mathbb{I}(f_t(x)\neq f_s(x)) \le L_{\widehat Q, \gamma}(f_t, f_s) + \frac{2K}{\gamma} \widehat{\mathcal{R}}(\tilde{\mathcal{F}}\times\mathcal{F}|_{\widehat {Q}}) + O\left(\sqrt{\textup{log}(1 / \delta)\ /\  n_t}\right).
\end{align}
\end{lemma}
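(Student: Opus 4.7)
The lemma extends the classical multi-class margin bound of Lemma~\ref{lem:margin_standard} to the setting in which the supervised label $y$ is replaced by the hard prediction $\tilde{f}_s(x)$ of a second classifier drawn from the same family $\mathcal{F}$. My plan is to follow the same template as the proofs of Lemma~\ref{lem:margin_standard} and Lemma~\ref{lem:margin_entropy} above, with a small amount of extra bookkeeping so that the final Rademacher complexity is expressed in terms of the composite class $\tilde{\mathcal{F}}\times\mathcal{F}$ rather than the individual coordinate classes $\mathcal{F}_{[i]}$.

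First, I would establish the pointwise inequality $\mathbb{I}(\tilde{f}_t(x)\ne\tilde{f}_s(x))\le l_\gamma(f_t(x),f_s(x))$: on the disagreement set $f_t(x)_{[\tilde{f}_s(x)]}$ cannot be the argmax of $f_t(x)$, so $\mathcal{M}(f_t(x),\tilde{f}_s(x))\le 0$ and $\psi_\gamma$ saturates at $1$; on the agreement set the right-hand side is already nonnegative. Taking expectation under $Q$ reduces the target to bounding $\Exp_Q l_\gamma(f_t,f_s)$. Because $l_\gamma\in[0,1]$, a standard symmetrization followed by McDiarmid's inequality applied to the loss class $\{x\mapsto l_\gamma(f_t(x),f_s(x)):f_s,f_t\in\mathcal{F}\}$ yields, with probability at least $1-\delta$ and uniformly in $f_s,f_t\in\mathcal{F}$,
\begin{align*}
\Exp_Q l_\gamma(f_t,f_s)\le L_{\widehat Q,\gamma}(f_t,f_s)+2\widehat{\mathcal{R}}(l_\gamma\circ\mathcal{F}\times\mathcal{F}|_{\widehat Q})+O\!\left(\sqrt{\log(1/\delta)/n_t}\right).
\end{align*}

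It then remains to bound the Rademacher complexity by $\tfrac{K}{\gamma}\widehat{\mathcal{R}}(\tilde{\mathcal{F}}\times\mathcal{F}|_{\widehat Q})$. Talagrand's contraction for the $1/\gamma$-Lipschitz ramp strips the outer $\psi_\gamma$, giving $\widehat{\mathcal{R}}(l_\gamma\circ\mathcal{F}\times\mathcal{F}|_{\widehat Q})\le\tfrac{1}{\gamma}\widehat{\mathcal{R}}(\mathcal{M}\circ\mathcal{F}\times\mathcal{F}|_{\widehat Q})$. I would decompose $\mathcal{M}(f_t(x),\tilde{f}_s(x))=f_t(x)_{[\tilde{f}_s(x)]}-\max_{y'\ne\tilde{f}_s(x)}f_t(x)_{[y']}$: the first summand is by construction a function in $\tilde{\mathcal{F}}\times\mathcal{F}$, and for the second I would apply the Mohri identity $\max(a,b)=\tfrac{1}{2}(a+b+|a-b|)$ iterated over the $K-1$ candidate classes, using that $|\cdot|$ is a $1$-Lipschitz contraction to reduce the max to a sum of at most $K-1$ single-coordinate Rademacher complexities. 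Each such single-coordinate term is dominated by $\widehat{\mathcal{R}}(\tilde{\mathcal{F}}\times\mathcal{F}|_{\widehat Q})$ under the permutation-closure and $0\in\mathcal{F}$ clauses of Assumption~\ref{ass:fun}, yielding an overall factor of $K$ and hence the stated $\tfrac{2K}{\gamma}\widehat{\mathcal{R}}(\tilde{\mathcal{F}}\times\mathcal{F}|_{\widehat Q})$.

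The principal obstacle is this last reduction. Unlike in Lemma~\ref{lem:margin_standard}, the class $\tilde{f}_s(x)$ excluded from the max depends on $f_s$ and is itself supremised over jointly with $f_t$, so a supervised max-bound cannot be invoked as a black box. The cleanest route is the one outlined above, in which the iterated max identity keeps every Rademacher contraction constant equal to $1$, and the symmetry/permutation-closure of $\mathcal{F}$ ensures that each single-coordinate function appearing in the $(K-1)$-way max is still realisable as a function in $\tilde{\mathcal{F}}\times\mathcal{F}$. With that verified, the proof closes in direct parallel to Lemma~\ref{lem:margin_entropy}.
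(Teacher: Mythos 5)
Your proposal is correct and follows essentially the same route as the paper's proof: the pointwise bound $\mathbb{I}(\tilde{f}_t(x)\neq\tilde{f}_s(x))\le l_{\gamma}(f_t(x),f_s(x))$, a uniform-convergence (symmetrization) bound on the loss class, Talagrand contraction to strip $\psi_{\gamma}$, and the decomposition of $\mathcal{M}(f_t(x),\tilde{f}_s(x))$ whose second term is handled via the cyclic-permutation closure of $\mathcal{F}$ so that the $(K-1)$-way max lives in $\tilde{\mathcal{F}}\times\mathcal{F}$ and the max-of-classes bound (the paper's Lemma~\ref{lem:rad_bound}, i.e.\ your iterated $\max(a,b)=\tfrac{1}{2}(a+b+|a-b|)$ identity) gives the factor $K-1$. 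This matches the paper's argument step for step, including the final accounting that yields the $\tfrac{2K}{\gamma}\widehat{\mathcal{R}}(\tilde{\mathcal{F}}\times\mathcal{F}|_{\widehat Q})$ term.
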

\begin{proof}[Proof of Lemma~\ref{lem:margin_consistency}]
By the definition of multi-class margin loss, we have $\Exp_{(x,y)\sim Q}\mathbb{I}(f_t(x)\neq f_s(x)) \le L_{Q, \gamma}(f_t, f_s)$. Denote by $\mathcal{G}$ the set of $\{x \rightarrow (-\mathcal{M}(f_t(x),f_s(x))): f_t, f_s \in \mathcal{F}\}$. By standard Rademacher complexity bound, we have,
\begin{align*}
L_{Q, \gamma}(f_t, f_s) \le L_{\widehat Q, \gamma}(f_t, f_s) + 2\widehat{\mathcal{R}}(\psi_{\gamma} \circ \mathcal{G}|_{\widehat {Q}}) + O\left(\sqrt{\textup{log}(1 / \delta)\ /\  n_t}\right).
\end{align*}
By Talagrand contraction Lemma~\cite{talagrand2014upper}, $\widehat{\mathcal{R}}(\psi_{\gamma} \circ \mathcal{G}|_{\widehat {Q}}) \le \frac{1}{\gamma} \widehat{\mathcal{R}}(\mathcal{G}|_{\widehat {Q}})$. Thus, it remains to show $\widehat{\mathcal{R}}(\mathcal{G}|_{\widehat {Q}}) \le K \widehat{\mathcal{R}}(\tilde{\mathcal{F}}\times\mathcal{F}|_{\widehat {Q}})$. We have
\begin{align*}
\widehat{\mathcal{R}}(\mathcal{G}|_{\widehat {Q}}) &= \frac{1}{n_t} \Exp_{\sigma_i} \sup_{f_s,f_t} \sum_{i = 1}^{n_t} \sigma_i \mathcal{M}(f_t(x_i),\tilde{f}_s(x_i))\\
&= \frac{1}{n_t} \Exp_{\sigma_i} \sup_{f_s,f_t} \sum_{i = 1}^{n_t} \sigma_i \left(f_t(x_i)_{[\tilde{f}_s(x_i)]} - \max_{y' \neq \tilde{f}_s(x_i)} f_t(x_i)_{[y']}\right)\\
&\le \frac{1}{n_t} \Exp_{\sigma_i} \sup_{f_s,f_t} \sum_{i = 1}^{n_t} \sigma_i f_t(x_i)_{[\tilde{f}_s(x_i)]} + \frac{1}{n_t} \Exp_{\sigma_i} \sup_{f_s,f_t} \sum_{i = 1}^{n_t} \sigma_i \max_{y' \neq \tilde{f}_s(x_i)} f_t(x_i)_{[y']}\\
& = \widehat{\mathcal{R}}(\tilde{\mathcal{F}}\times\mathcal{F}|_{\widehat {Q}}) + \frac{1}{n_t} \Exp_{\sigma_i} \sup_{f_s,f_t} \sum_{i = 1}^{n_t} \sigma_i \max_{y' \neq \tilde{f}_s(x_i)} f_t(x_i)_{[y']}.
\end{align*}
It remains to show $\frac{1}{n_t} \Exp_{\sigma_i} \sup_{f_s,f_t} \sum_{i = 1}^{n_t} \sigma_i \max_{y' \neq \tilde{f}_s(x_i)} f_t(x_i)_{[y']} \le (K-1) \widehat{\mathcal{R}}(\tilde{\mathcal{F}}\times\mathcal{F}|_{\widehat {Q}})$, which is done by noting the closure of $\mathcal{F}$ under the permutation of coordinates. Consider the permutation $\upsilon : \Real^K \rightarrow \Real^K$: $\upsilon(v)_{[i]} = v_{[i - 1]}$ for $i \in [2,3,\cdots K]$ and $\upsilon(v)_{[1]} = v_{[K]}$.
\begin{align*}
\frac{1}{n_t} \Exp_{\sigma_i} \sup_{f_s,f_t} \sum_{i = 1}^{n_t} \sigma_i \max_{y' \neq \tilde{f}_s(x_i)} f_t(x_i)_{[y']} = \frac{1}{n_t} \Exp_{\sigma_i} \sup_{f_s,f_t} \sum_{i = 1}^{n_t} \sigma_i \max_{k \in [K-1]}\upsilon^k f_t(x_i)_{[\tilde{f}_s(x_i)]}.
\end{align*}
We have $\upsilon\mathcal{F} \subset \mathcal{F}$ by the closure of $\mathcal{F}$. Thus, $\tilde{\mathcal{F}} \times \upsilon\mathcal{F} \subset \tilde{\mathcal{F}} \times \mathcal{F}$. By Lemma~\ref{lem:rad_bound}, the Rademacher complexity of maximum of function classes is bounded with their sum, so we have $\frac{1}{n_t} \Exp_{\sigma_i} \sup_{f_s,f_t} \sum_{i = 1}^{n_t} \sigma_i \max_{k \in [K-1]}\upsilon^k f_t(x_i)_{[\tilde{f}_s(x_i)]} \le (K-1) \widehat{\mathcal{R}}(\tilde{\mathcal{F}}\times\mathcal{F}|_{\widehat {Q}})$.
\end{proof}

The next lemma shows the relationship between function classes. We establish the Rademacher complexity bounds of $\cup \mathcal{F}_{[i]}$, $\max_i \mathcal{F}_{[i]}$, and $\max_{i' \neq \tilde{\mathcal{F}}}\mathcal{F}_{[i']}$. We show that the Rademacher complexity of these function classes can be bounded with $\widehat{\mathcal{R}}(\mathcal{F}|_S) = \sum_{i = 1}^{K}\widehat{\mathcal{R}}(\mathcal{F}_{[i]}|_S)$.

\begin{lemma}\label{lem:rad_bound}
Suppose $\max_i \mathcal{F}_{[i]} = \{\max_i f_{[i]}:f \in \mathcal{F}\}$, $\cup \mathcal{F}_{[i]} = \{f_{[i]}:f \in \mathcal{F}, i \in [K]\}$, and $ \max_{i' \neq \tilde{\mathcal{F}}}\mathcal{F}_{[i']}=\max_{i' \neq \tilde{\mathcal{F}}}\mathcal{F}_{[i']}|_{\widehat {P}} = \{x \rightarrow \max_{i\neq \tilde{f}(x)} f_{[i]}(x):f \in \mathcal{F}\}$.
\begin{align}
\widehat{\mathcal{R}}(\cup\mathcal{F}_{[i]}|_S) \le \widehat{\mathcal{R}}(\mathcal{F}|_S), \ \widehat{\mathcal{R}}(\max_i\mathcal{F}_{[i]}|_S) \le \widehat{\mathcal{R}}(\mathcal{F}|_S) \quad \text{and} \quad \widehat{\mathcal{R}}(\max_{i' \neq \tilde{\mathcal{F}}}\mathcal{F}_{[i']}|_S) \le \widehat{\mathcal{R}}(\mathcal{F}|_S).
\end{align}
\end{lemma}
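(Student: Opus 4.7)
\textbf{Proof plan for Lemma~\ref{lem:rad_bound}.} I would prove the three inequalities separately, each exploiting a different structural feature of $\mathcal{F}$ granted by Assumption~\ref{ass:fun} (closure under coordinate permutations, $0\in\mathcal{F}$, coordinate-wise Lipschitzness). All three reduce to the identity $\widehat{\mathcal{R}}(\mathcal{F}|_S) = \sum_{i=1}^{K}\widehat{\mathcal{R}}(\mathcal{F}_{[i]}|_S)$, which I would also invoke together with the observation that closure under permutations forces $\widehat{\mathcal{R}}(\mathcal{F}_{[i]}|_S)=\widehat{\mathcal{R}}(\mathcal{F}_{[j]}|_S)$ for every $i,j$, hence $\widehat{\mathcal{R}}(\mathcal{F}|_S) = K\,\widehat{\mathcal{R}}(\mathcal{F}_{[1]}|_S)$.

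\textbf{Union bound.} Unfolding the definition, $\widehat{\mathcal{R}}(\cup \mathcal{F}_{[i]}|_S) = \frac{1}{n}\mathbb{E}_\sigma\max_{i\in[K]}\sup_{f\in\mathcal{F}}\sum_j \sigma_j f(x_j)_{[i]}$. Because $0\in\mathcal{F}$, the inner supremum is non-negative for each $i$ (set $f=0$). Hence the outer $\max_i$ is bounded by $\sum_i$, and taking expectations yields $\widehat{\mathcal{R}}(\cup\mathcal{F}_{[i]}|_S)\le\sum_{i}\widehat{\mathcal{R}}(\mathcal{F}_{[i]}|_S) = \widehat{\mathcal{R}}(\mathcal{F}|_S)$.

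\textbf{Max bound.} I would induct on $K$. The base case $K=2$ uses $\max(a,b)=\tfrac{1}{2}(a+b+|a-b|)$, together with (i) linearity/subadditivity of Rademacher complexity on sums, (ii) Talagrand's contraction applied to the $1$-Lipschitz map $|\cdot|$ (noting $|0|=0$), and (iii) $\widehat{\mathcal{R}}(\mathcal{F}_{[1]}-\mathcal{F}_{[2]}|_S)\le\widehat{\mathcal{R}}(\mathcal{F}_{[1]}|_S)+\widehat{\mathcal{R}}(\mathcal{F}_{[2]}|_S)$ via $\sigma\mapsto-\sigma$ symmetry. This yields $\widehat{\mathcal{R}}(\max(\mathcal{F}_{[1]},\mathcal{F}_{[2]})|_S)\le\widehat{\mathcal{R}}(\mathcal{F}_{[1]}|_S)+\widehat{\mathcal{R}}(\mathcal{F}_{[2]}|_S)$. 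The inductive step writes $\max_{i\le K}f_{[i]} = \max(f_{[1]},\max_{i\ge 2}f_{[i]})$, enlarges the class by letting the two arguments vary independently in $\mathcal{F}_{[1]}$ and $\max(\mathcal{F}_{[2]},\ldots,\mathcal{F}_{[K]})$ respectively (which only increases Rademacher complexity), and chains the base-case bound with the inductive hypothesis.

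\textbf{Max over complement of the argmax.} This is the delicate part. Mirroring the cyclic-permutation trick used in the proof of Lemma~\ref{lem:margin_consistency}, let $\upsilon$ be the cyclic shift $\upsilon(v)_{[i]}=v_{[i-1\bmod K]}$. For any fixed $f$ and $x$, I would verify the identity
\begin{align*}
\max_{i'\neq \tilde{f}(x)} f(x)_{[i']} \;=\; \max_{k\in[K-1]} (\upsilon^k f)(x)_{[\tilde{f}(x)]},
\end{align*}
since $(\upsilon^k f)(x)_{[\tilde{f}(x)]} = f(x)_{[\tilde{f}(x)-k\bmod K]}$ and the indices $\tilde{f}(x)-k$ for $k=1,\ldots,K-1$ enumerate exactly $[K]\setminus\{\tilde{f}(x)\}$. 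By closure of $\mathcal{F}$ under permutations, $\upsilon^k f\in\mathcal{F}$, so enlarging the supremum to let the $K-1$ copies vary independently gives an upper bound; then applying the max-of-classes inequality from Part~2 produces $K-1$ identical terms, each of which, by the permutation symmetry $\widehat{\mathcal{R}}(\mathcal{F}_{[i]}|_S)=\widehat{\mathcal{R}}(\mathcal{F}_{[1]}|_S)$, equals $\widehat{\mathcal{R}}(\mathcal{F}_{[1]}|_S)$. Summing gives $(K-1)\widehat{\mathcal{R}}(\mathcal{F}_{[1]}|_S)\le K\,\widehat{\mathcal{R}}(\mathcal{F}_{[1]}|_S)=\widehat{\mathcal{R}}(\mathcal{F}|_S)$.

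\textbf{Main obstacle.} The third bound is the hardest: a direct pointwise inequality $\max_{i'\neq \tilde{f}(x)}f(x)_{[i']}\le\max_i f(x)_{[i]}$ does not propagate to Rademacher complexities, and writing the second-largest as $\min_{i''}\max_{i\neq i''} f(x)_{[i]}$ introduces a $\min$ that frustrates the usual $\max=(a+b+|a-b|)/2$ decomposition. The cyclic-permutation rewriting is essential because it converts a data-dependent index (via $\tilde{f}(x)$) into a data-independent outer max over $K-1$ transformed functions, after which the Part~2 max bound and the equality of single-coordinate complexities cleanly deliver the result.
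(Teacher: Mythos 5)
Your first two bounds are fine: the union bound via $0\in\mathcal{F}$ (so each inner supremum is nonnegative and the max over $i$ is dominated by the sum) is even slightly cleaner than the paper's symmetrization trick, and your max bound—induction on $K$ with $\max(a,b)=\tfrac{1}{2}(a+b+|a-b|)$, Talagrand contraction and decoupling the two arguments—is essentially the paper's argument. The gap is in the third bound. After the rewriting $\max_{i'\neq\tilde f(x)}f(x)_{[i']}=\max_{k\in[K-1]}(\upsilon^k f)(x)_{[\tilde f(x)]}$ and decoupling, each of the $K-1$ terms is of the form $x\mapsto g(x)_{[\tilde{f'}(x)]}$ with a \emph{data-dependent} readout index, i.e.\ a member of $\tilde{\mathcal{F}}\times\mathcal{F}$, not of a coordinate class $\mathcal{F}_{[i]}$. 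Closure under permutations gives $\widehat{\mathcal{R}}(\mathcal{F}_{[i]}|_S)=\widehat{\mathcal{R}}(\mathcal{F}_{[1]}|_S)$ for \emph{fixed} coordinates, but it cannot convert the index $\tilde f(x)$, which changes with the sample point, into a fixed coordinate—the permutation you would need depends on $x$. So the step ``each of which \dots equals $\widehat{\mathcal{R}}(\mathcal{F}_{[1]}|_S)$'' is unjustified. Already for $K=2$, $k=1$, the term class is $\{x\mapsto\min(f(x)_{[1]},f(x)_{[2]}):f\in\mathcal F\}$, for which the available bound is $\widehat{\mathcal{R}}(\mathcal{F}_{[1]}|_S)+\widehat{\mathcal{R}}(\mathcal{F}_{[2]}|_S)$, not $\widehat{\mathcal{R}}(\mathcal{F}_{[1]}|_S)$. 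Indeed, if your reduction were valid, the same reasoning would collapse $\widehat{\mathcal{R}}(\tilde{\mathcal{F}}\times\mathcal{F}|_{\widehat Q})$ in Lemma~\ref{lem:margin_consistency} and Theorem~\ref{theorem_gen1} down to a single-coordinate complexity, which is exactly the quantity the paper is forced to carry as a separate term. What your route honestly yields is $(K-1)\widehat{\mathcal{R}}(\tilde{\mathcal{F}}\times\mathcal{F}|_S)$, which is not the claimed bound.

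The paper sidesteps this entirely by exploiting that here the selector and the evaluated function are the \emph{same} $f$: $\max_{i\neq\tilde f(x)}f(x)_{[i]}$ is just the second-largest coordinate of $f(x)$, a data-independent symmetric function of $(f(x)_{[1]},\dots,f(x)_{[K]})$. It is written as a nested $\max/\min$ combination of the coordinates (for three values, $\max\{\min\{x,y\},\min\{\max\{x,y\},z\}\}$), and then the same $\tfrac{1}{2}(a+b\pm|a-b|)$ decomposition with contraction that you used in your Part 2 applies coordinatewise, with no argmax readout ever appearing. Your cyclic-permutation device is the right tool in Lemma~\ref{lem:margin_consistency}, where two distinct functions $f_s,f_t$ genuinely force the cross class $\tilde{\mathcal{F}}\times\mathcal{F}$ into the bound, but it is the wrong tool for this lemma; to fix your proof, replace Part 3 by the order-statistic (max/min composition) argument.
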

\begin{proof}[Proof of Lemma~\ref{lem:rad_bound}]
Consider the $K=2$ case. Then we can repeat the arguments for $K-1$ times to get the final results.

For the first inequality, consider $\mathcal{F}_{[i]}': = \mathcal{F}_{[i]} \cup -\mathcal{F}_{[i]}=\{x\rightarrow \pm f(x) :f\in\mathcal{F}_{[i]}\}$. Then we have
\begin{align}
\widehat{\mathcal{R}}(\mathcal{F}_{[1]}\cup\mathcal{F}_{[2]}|_S) &= \frac{1}{n}\Exp_{\sigma_j} \sup_{f_{[i]} \in \mathcal{F}_{[1]} \cup \mathcal{F}_{[2]}} \sum_{j = 1}^{n} \sigma_j f_{[i]}(x_j)\\
&= \frac{1}{n}\Exp_{\sigma_j} \sup_{f_{[i]}' \in \mathcal{F}'_{[1]} \cup \mathcal{F}_{[2]}'} \sum_{j = 1}^{n} \left|\sigma_j f_{[i]}'(x_j)\right|\label{eqn:7-1}\\
&\le \frac{1}{n}\Exp_{\sigma_j} \sup_{f_{[i]}' \in \mathcal{F}'_{[1]}} \sum_{j = 1}^{n} \left|\sigma_j f_{[i]}'(x_j)\right| + \frac{1}{n}\Exp_{\sigma_j} \sup_{f_{[i]}' \in \mathcal{F}'_{[2]}} \sum_{j = 1}^{n} \left|\sigma_j f_{[i]}'(x_j)\right|\label{eqn:7-2}\\
& = \frac{1}{n}\Exp_{\sigma_j} \sup_{f_{[i]} \in \mathcal{F}_{[1]}} \sum_{j = 1}^{n} \sigma_j f_{[i]}(x_j) + \frac{1}{n}\Exp_{\sigma_j} \sup_{f_{[i]} \in \mathcal{F}_{[2]}} \sum_{j = 1}^{n} \sigma_j f_{[i]}(x_j)\label{eqn:7-3}\\
& = \widehat{\mathcal{R}}(\mathcal{F}_{[1]}|_S) + \widehat{\mathcal{R}}(\mathcal{F}_{[2]}|_S),
\end{align}
where equation~\eqref{eqn:7-1} and~\eqref{eqn:7-3} hold by the definition of $\mathcal{F}_{[i]}$. 

For the second inequality, note that $\max\{x,y\} = \frac {x + y} {2} + \frac{|x-y|} {2}$. Then we apply Talagrand contraction lemma for the absolute value ($|\cdot|$ is $1$-Lipschitz),
\begin{align*}
\widehat{\mathcal{R}}(\max_i\mathcal{F}_{[i]}|_S) &= \frac{1}{n}\Exp_{\sigma_j} \sup_{f_{[1]} \in \mathcal{F}_{[1]}, f_{[2]} \in \mathcal{F}_{[2]}} \sum_{j = 1}^{n} \sigma_j \left(\frac {f_{[1]}(x_j) + f_{[2]}(x_j) } {2} + \frac{|f_{[1]}(x_j) - f_{[2]}(x_j) |} {2}\right)\\
&\le \frac{1}{2} \widehat{\mathcal{R}}(\mathcal{F}_{[1]}|_S)  + \frac{1}{2} \widehat{\mathcal{R}}(\mathcal{F}_{[2]}|_S) + \frac{1}{2} \widehat{\mathcal{R}}(|\mathcal{F}_{[1]} - \mathcal{F}_{[2]}||_S)\\
&\le \frac{1}{2} \widehat{\mathcal{R}}(\mathcal{F}_{[1]}|_S)  + \frac{1}{2} \widehat{\mathcal{R}}(\mathcal{F}_{[2]}|_S) + \frac{1}{2} \widehat{\mathcal{R}}(\mathcal{F}_{[1]} - \mathcal{F}_{[2]}|_S)\\
&\le \widehat{\mathcal{R}}(\mathcal{F}_{[1]}|_S)  + \widehat{\mathcal{R}}(\mathcal{F}_{[2]}|_S).
\end{align*}
For the third inequality, observe that the second largest of the set $\{x,y,z\}$ can be expressed as $\max\{\min\{x,y\},\min\{\max\{x,y\},z\}\}$. Following argument similar to the second inequality gives the proof.
\end{proof}
Now equipped with the lemmas above, we are ready to prove Theorem~\ref{theorem_gen1}.
\begin{proof}[Proof of Theorem~\ref{theorem_gen1}]
By Lemmas~\ref{lem:margin_standard},~\ref{lem:margin_entropy}, and~\ref{lem:margin_consistency}, we have the following inequalities hold with probability larger than $1 - \frac{\delta} {3}$,
\begin{align}\label{eqn:ff1}
\textup{Err}_P(f) \le L_{\widehat {P}, \gamma}(f) + \frac{4K}{\gamma} \widehat{\mathcal{R}}(\cup \mathcal{F}_{[i]}|_{\widehat {P}}) + O\left(\sqrt{\textup{log}(1 / \delta)\ /\ n_s}\right).
\end{align}
\begin{align}\label{eqn:ff2}
\Exp_{(x,y)\sim \frac{1}{2}(P+Q)}\mathcal{M}(f(x)) &\le \Exp_{(x,y)\sim \frac{1}{2}(\widehat P+ \widehat Q)}\mathcal{M}(f(x)) + 2 \widehat{\mathcal{R}}(\mathcal{F}|_{\widehat{P}}) + 2 \widehat{\mathcal{R}}(\mathcal{F}|_{\widehat{Q}}) \\
&+ O\left(\sqrt{\textup{log}(1 / \delta)\ /\ n_s} + \sqrt{\textup{log}(1 / \delta)\ /\ n_t}\right).\nonumber
\end{align}
\begin{align}\label{eqn:ff3}
\Exp_{(x,y)\sim Q}\mathbb{I}(f_t(x)\neq f_s(x)) \le L_{\widehat Q, \gamma}(f_t, f_s) + \frac{2K}{\gamma} \widehat{\mathcal{R}}(\tilde{\mathcal{F}}\times\mathcal{F}|_{\widehat {Q}}) + O\left(\sqrt{\textup{log}(1 / \delta)\ /\  n_t}\right).
\end{align}
We also have the following due to Lemma~\ref{lem:rob_ent},
\begin{align}\label{eqn:ff4}
R(f_t) \le \frac{1 - \Exp_{(x,y)\sim \frac{1}{2}(P + Q)}\mathcal{M}(f_t(x))}{1 - 2L_f \xi} .
\end{align}
We use \eqref{eqn:ff2} and \eqref{eqn:ff3} as conditions. Plugging \eqref{eqn:ff1}, \eqref{eqn:ff2}, \eqref{eqn:ff3}, and \eqref{eqn:ff4} into Theorem~\ref{theorem_gen} and applying a union bound complete the proof of Theorem~\ref{theorem_gen1}.
\end{proof}

\subsection{Details in Section~\ref{sec:hard}}\label{sec:alg}

We instantiate the domain adaptation setting in a quadratic neural network that allows us to compare various properties of the related algorithms. For a specific data distribution, we prove that (1) cycle self-training recovers target ground truth, and (2) both feature adaptation and standard self-training fail on the same distribution. 

%\vspace{-5pt}
\subsubsection{Setup}

We study a quadratic neural network composed of a feature extractor $\phi\in\Real^{d\times m}$ and a head $\theta\in\Real^{m}$. $f_{\theta,\phi}(x) = g_\theta(h_\phi(x))$, where $g_\theta(z) = \theta^\top z$ and $h_\phi(x) = (\phi^\top x) \odot (\phi^\top x)$, $\odot$ is element-wise product. In training, we use the squared loss $\ell(f(x), y) = (f(x) - y) ^ 2$. In testing, we map $f(x)$ to the nearest point in the output space: $\tilde f (x) := \mathop{\arg\min}_{y \in \{-1,0,1\}}|y - f(x)|$. Denote the expected error by $\text{Err}_Q(\theta,\phi) := \Exp_{(x,y)\sim Q}\mathbb{I}(\tilde f_{\theta,\phi} (x)\neq y)$.

\textbf{Structural Covariate Shift and Label Shift.} In domain adaptation, the source domain can have \textit{multiple solutions} but we aim to learn the solution which works on the target domain~\citep{cite:ICML15DAN}. Recent works also pointed out the source and the target label distributions are often different in real-world applications~\citep{pmlr-v97-zhao19a}. Following these properties, we design the underlying distributions $p$ and $q$ as shown in Table~\ref{table:pq} to allow both structural covariate shift and label shift.

\begin{table}[h]
%\vspace{-10pt}
\addtolength{\tabcolsep}{2pt}
\label{table:pq}
%\vspace{-1pt}
\centering
\caption{Comparison of the design of the source and target.}
\label{table:pq}
\begin{tabular}{l|c|c|c}
\toprule
Distribution & $-1$ & $+1$ & $0$ \\
\midrule
Source $p$ & $\quad0.05\quad$ & $\quad0.05\quad$ & $\quad0.90\quad$\\
Target $q$ & $\quad0.25\quad$ & $\quad0.25\quad$ & $\quad0.50\quad$\\
\bottomrule
\end{tabular}

\end{table} 

We study the following source distribution $P$. $x_{[1]}$ and $x_{[2]}$ are sampled \emph{i.i.d.} from distribution $p$, and for $i\in[3,d]$, $x_{[i]}= \sigma_i \times x_{[2]}$. $\sigma_i \in \{\pm 1\}$ uniformly. In the target domain, $x_{[1]}$ and $x_{[2]}$ are sampled \emph{i.i.d.} from distribution $q$, and for $i\in[3,d]$, $x_{[i]}= \sigma_i \times x_{[1]}$. $\sigma_i \in \{\pm 1\}$ uniformly. We also assume realizability: $y = x_{[1]}^2 - x_{[2]}^2$ for both source and target. For simplicity, we assume access to infinite \emph{i.i.d.} examples of $P$ ($n_s = \infty$) and $n_t$ \emph{i.i.d.} examples of $Q$. Therefore, the empirical loss and the population loss on the source domain are the same $L_P = L_{\widehat P}$.

Note that since $x_{[i]}^2 = x_{[2]}^2$ for all $i\in[3,d]$ in the source domain, $y = x_{[1]}^2 - x_{[i]}^2$ for all $i\in[2,d]$ are solutions to the source domain but only $y = x_{[1]}^2 - x_{[2]}^2$ works on the target domain. We visualize the setting when $d = 3$ in Figure~\ref{fig:th_setting}. 

\subsubsection{Algorithms}\label{algorithms}

We compare the baseline algorithms (feature adaptation and self-training) in Section~\ref{sec:pre} with the proposed CST. We study the norm-constrained versions of these algorithms.

\textbf{Feature Adaptation} chooses the source solution minimizing the distance between source and target feature distributions. We use total variation (TV) distance \cite{cite:ML10DAT}: $d_\textup{TV}(h_\sharp{\widehat P},h_\sharp{\widehat Q}) = \mathop{\sup}_{E\subset\mathcal{Z}}|h_\sharp{\widehat P}(E) - h_\sharp{\widehat Q}(E)|$.
\begin{align}
&\hat\theta_\textup{FA},\hat\phi_\textup{FA} = \mathop{\arg\min}_{\hat\theta_s,\hat\phi_s} \ d_\textup{TV}(h_\sharp{\widehat P},h_\sharp{\widehat Q}),\\ 
\textup{s.t.}\ \hat\theta_s,\hat\phi_s& = \mathop{\arg\min}_{\theta,\phi} \|\theta\|_2^2 + \|\phi\|_F^2,\ \textup{s.t.}\ L_{P}(\theta,\phi)=0.\nonumber
\end{align}
\textbf{Standard Self-Training} first trains a source model,
\begin{align}
\hat\theta_s,\hat\phi_s = \mathop{\arg\min}_{\theta,\phi} \|\theta\|_2^2 + \|\phi\|_F^2,\ \textup{s.t.}\ L_{P}(\theta,\phi)=0.
\end{align}
Then it trains the model on the source and target datasets jointly with source ground-truths and target pseudo-labels,
\begin{align}
&\hat\theta_{\textup{ST}}, \hat\phi_{\textup{ST}}= \mathop{\arg\min}_{\theta,\phi} \|\theta\|_2^2 + \|\phi\|_F^2,\\
\textup{s.t.}\ L_{P}&(\theta,\phi) + \Exp_{x \sim \widehat Q}\ell(f_{\theta,\phi}(x),f_{\hat\theta_s,\hat\phi_s}(x))=0.\nonumber
\end{align}

\textbf{Cycle Self-Training.} Following Section~\ref{sec:cst}, we train the source head $\theta_s$, and then train another head $\hat\theta_t(\phi)$ on the target dataset $\widehat{Q}$ with pseudo-labels generated by $\theta_s$:
\begin{align}
\hat\theta_t(\phi) = \mathop{\arg\min}_{\theta}\|\theta\|_2^2,\ \text{s.t.}\  \Exp_{x\in \widehat Q}\ell(f_{\theta,\phi}(x),f_{\theta_s,\phi}(x))=0.\nonumber
\end{align}
Finally we update the feature extractor $\phi$ to enforce consistent predictions of $\hat\theta_t(\phi)$ and $\theta_s$ on the source dataset:
\begin{align}
&\hat\theta_\textup{CST},\hat\phi_{\textup{CST}} = \mathop{\arg\min}_{\theta_s,\phi}\|\theta_s\|_2^2 + \|\phi\|_F^2 ,\\ \textup{s.t.}\ L_{P}(\theta_s&,\phi)+\Exp_{x\in P}\ell(g_{\theta_s}(h_\phi(x)),g_{\hat\theta_t(\phi)}(h_\phi(x)))=0.\nonumber
\end{align}

The following theorems show that both feature adaptation and standard self-training fail. The intuition is that the \emph{ideal} solution that works on both source and target $y = x_{[1]}^2 - x_{[2]}^2$ has larger distance $d_\textup{TV}$ in the feature space than other solutions $y = x_{[1]}^2 - x_{[i]}^2$, so feature adaptation will not prefer the ideal solution. Standard self-training also fails because it will choose randomly among $y = x_{[1]}^2 - x_{[i]}^2$.

\begin{theorem3}
For any $\epsilon\in(0,0.5)$, the following statements are true for feature adaptation and standard self-training:
\vspace{-5pt}
	\begin{itemize}[leftmargin=*]
		\setlength{\itemsep}{5pt}%
		\setlength{\parskip}{-3pt}
		\item{
	For any failure rate $\xi>0$, and target dataset of size $\nt >\Theta(\log\frac{1}{\xi})$, with probability at least $1-\xi$ over the sampling of target data, the source solution $\hat\theta_\textup{FA},\hat\phi_\textup{FA}$ found by feature adaptation fails on the target domain:
            \vspace{-3pt}
			\begin{align}
			\textup{Err}_Q(\hat\theta_\textup{FA},\hat\phi_\textup{FA})\ge \epsilon.
			\end{align}	}
			\vspace{-5pt}
		\item{
	With probability at least $1-\frac{1}{d-1}$ over the training the source solution, the solution $(\hat\theta_\textup{ST},\hat\phi_\textup{ST})$ of standard self-training satisfies
            \vspace{-3pt}
			\begin{align}
			\textup{Err}_Q(\hat\theta_\textup{ST},\hat\phi_\textup{ST})\ge \epsilon.
			\end{align}	}
	\end{itemize}
\end{theorem3}

In comparison, we show that CST can recover the ground truth with high probability.

\begin{theorem4}
For any failure rate $\xi>0$, and target dataset of size $\nt >\Theta(\log\frac{1}{\xi})$, with probability at least $1-\xi$ over the sampling of target data, the feature extractor $\hat\phi_{\textup{CST}}$ found by CST and the head $\hat\theta_\textup{CST}$ recovers the ground truth of the target dataset: 
	\vspace{-3pt}
	\begin{align}
	\textup{Err}_Q(\hat\theta_\textup{CST}, \hat\phi_{\textup{CST}})= 0.
	\end{align}
\end{theorem4}

Intuitively, CST successfully learns the \emph{transferable feature} $x_{[1]}^2 - x_{[2]}^2$ because it enforces the generalization of the head $\hat\theta_t(\phi)$ on the source data.

\subsection{Proof of Theorem~\ref{theorem_fa}}
We first describe the insights of the proof. As shown in Lemma~\ref{lem:source_gt}, every source solution can be categorized into $d-1$ classes according to the coordinate $l$ of the learned weight $\phi$. Among those $d-1$ classes, only $l = 2$ works on the target domain and $l \in \{3,\cdots d\}$ do not work on the target. We then show that in feature adaptation, $l \in \{3,\cdots d\}$ results in smaller distance between source and target feature distributions as a result of Lemma~\ref{lem:concentration}, thus feature adaptation will choose $l \in \{3,\cdots d\}$. On the other hand, standard self-training with randomly select $l$ in the possible $d - 1$ choices, but only $l = 2$ works.

\begin{lemma}\label{lem:source_gt}
Under the condition of Section~\ref{sec:theory}, any solution $\theta,\phi$ to the Source Only problem
\begin{align}
\mathop{\min}_{\theta,\phi} \|\theta\|_2^2 + \|\phi\|_F^2,\ \textup{s.t.}\ L_{P}(\theta,\phi)=0.
\end{align}
must have the following form: $\exists i,j \in \{2,3,\cdots m\}, \ l \in \{2,3,\cdots d\}, \ \phi_i = 2^ {\frac{1}{6}} e_1, \ \phi_j = 2^ {\frac{1}{6}} e_l , \ \phi_k = 0$ for $k \neq i, j$, and $\theta_i = \theta_j = 2^ {-\frac{1}{3}}, \ \theta_k = 0$ for $k \neq i, j$.
\end{lemma}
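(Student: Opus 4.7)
The plan is to reduce the question to the structure of the induced symmetric matrix $M := \sum_k \theta_k \phi_k \phi_k^\top$, since $f_{\theta,\phi}(x) = x^\top M x$ and the source loss $L_P(\theta,\phi)$ depends on $(\theta,\phi)$ only through $M$. I will first characterize which $M$ are admissible under realizability, then minimize $\|\theta\|_2^2 + \|\phi\|_F^2$ over all signed rank-one decompositions of such an $M$.

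First I would unpack realizability. On the source, $x_{[i]} = \sigma_i x_{[2]}$ for $i \geq 3$ with $\sigma_i$ uniform in $\{\pm 1\}$ independent of $(x_{[1]}, x_{[2]})$, which itself has full support on $\{-1,0,1\}^2$. Expanding $x^\top M x = x_{[1]}^2 - x_{[2]}^2$ as a polynomial in $(x_{[1]}, x_{[2]}, \sigma_3, \ldots, \sigma_d)$ and matching monomials yields $M_{11} = 1$, $M_{12} = 0$, $M_{1i} = M_{2i} = 0$ for $i \geq 3$, $M_{ij} = 0$ for $3 \leq i \neq j$, and $M_{22} + \sum_{i=3}^d M_{ii} = -1$. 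Thus $M$ is diagonal with $M_{11} = 1$ and $\sum_{i \geq 2} M_{ii} = -1$.

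Next I would apply a column-wise AM-GM bound. Setting $\lambda_k := |\theta_k|\|\phi_k\|^2$ and minimizing $\theta_k^2 + \|\phi_k\|^2$ over the rescaling $(\theta_k, \phi_k) \mapsto (c^{-2}\theta_k, c \phi_k)$ (which preserves $\theta_k \phi_k \phi_k^\top$) gives $\theta_k^2 + \|\phi_k\|^2 \geq 3 \cdot 2^{-2/3} \lambda_k^{2/3}$, tight at $|\theta_k| = 2^{-1/3}\lambda_k^{1/3}$ and $\|\phi_k\|^2 = 2^{1/3} \lambda_k^{2/3}$. To control $\sum_k \lambda_k^{2/3}$, I split the columns by the sign of $\theta_k$ and form the PSD matrices $M_+ := \sum_{\theta_k > 0}\theta_k \phi_k\phi_k^\top$ and $M_- := \sum_{\theta_k < 0}(-\theta_k)\phi_k\phi_k^\top$, so that $M = M_+ - M_-$ and $\Tr(M_+) = \sum_{\theta_k > 0} \lambda_k$, $\Tr(M_-) = \sum_{\theta_k < 0} \lambda_k$. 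Since $e_1^\top M e_1 = 1$ and $M_- \succeq 0$, we get $\lambda_{\max}(M_+) \geq e_1^\top M_+ e_1 \geq 1$, whence $\Tr(M_+) \geq 1$; applied symmetrically to any $l \geq 2$ with $M_{ll} < 0$ this gives $\Tr(M_-) \geq 1$. Subadditivity of $t \mapsto t^{2/3}$ within each sign group then yields $\sum_k \lambda_k^{2/3} \geq \Tr(M_+)^{2/3} + \Tr(M_-)^{2/3} \geq 2$, so $\|\theta\|_2^2 + \|\phi\|_F^2 \geq 3 \cdot 2^{1/3}$.

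Finally, I would trace all equalities. Subadditivity is tight only if each sign group contains a single column, so the support of $\theta$ is $\{i,j\}$ with $\theta_i$ and $\theta_j$ of opposite sign. The chain $\Tr(M_+) = \lambda_{\max}(M_+) = e_1^\top M_+ e_1 = 1$ together with $M_+$ rank one forces $M_+ = e_1 e_1^\top$, and analogously $M_- = e_l e_l^\top$ for a single $l \geq 2$ with $M_{ll} = -1$ (all other diagonal entries of $M$ vanishing). The column-wise equality condition then pins $\|\phi_i\|^2 = \|\phi_j\|^2 = 2^{1/3}$ with $\phi_i \parallel e_1$ and $\phi_j \parallel e_l$, giving $\phi_i = 2^{1/6} e_1$, $\phi_j = 2^{1/6} e_l$ (the sign of $\phi_k$ is absorbed into $\theta_k$), and $|\theta_i| = |\theta_j| = 2^{-1/3}$ with signs determined by $M_+ - M_-$. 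I expect the main obstacle to be this final tracing step: simultaneously enforcing tightness of subadditivity, the spectral bound $\lambda_{\max}(M_+) = \Tr(M_+)$, and the column-wise AM-GM to uniquely pin the minimizer down to the permutation of $(i,j)$ and sign ambiguities listed in the lemma.
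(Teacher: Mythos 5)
Your proposal is correct and follows essentially the same route as the paper's proof: pass to the induced matrix $A=\sum_k\theta_k\phi_k\phi_k^\top$, show realizability on the source support forces $A$ to be diagonal with $A_{11}=1$ and $\sum_{i\ge2}A_{ii}=-1$, then combine the per-column AM--GM bound with sign-grouping, subadditivity of $t\mapsto t^{2/3}$, and equality tracing (your monomial-matching step is just a repackaging of the paper's evaluation at test points, and your tracing of the equality cases is a more explicit version of what the paper asserts in one line). Two small remarks. First, your stated justification of $\Tr(M_-)\ge 1$ from a single index $l$ with $M_{ll}<0$ only yields $\Tr(M_-)\ge -M_{ll}$, which need not reach $1$; either sum over all negative diagonal entries, or simply use $\Tr(M)=0$ so that $\Tr(M_-)=\Tr(M_+)\ge 1$. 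Second, your conclusion that $\theta_i$ and $\theta_j$ carry opposite signs is the right one: taking $\theta_i=\theta_j=2^{-1/3}$ literally, as in the lemma statement, gives $f(x)=x_{[1]}^2+x_{[l]}^2$ and violates realizability, so the statement (and the constants $3\cdot2^{2/3}$ and $3\cdot2^{5/3}$ in the paper's proof, which should read $3\cdot2^{-2/3}$ and $3\cdot2^{1/3}$) contain sign/typo slips that your argument silently corrects; likewise your PSD treatment of the sign-grouped traces repairs the paper's claim that $\sum_{\theta_i>0}\theta_i\|\phi_i\|_2^2$ equals $\sum_{i:A_{ii}>0}A_{ii}$, which in general is only an inequality in the needed direction and is justified exactly by your spectral argument.
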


\begin{proof}[Proof of Lemma~\ref{lem:source_gt}]
Define the symmetric matrix $A = \sum_{i = 1} ^ {m} \theta_i \phi_i \phi_i^\top$, then the networks can be represented by $A$: $f_{\theta,\phi}(x) = x^\top A x$. We show that $A_{ij} = 0$ for $i \neq j$ if $f_{\theta,\phi}$ recovers source ground truth.

First, for $i,j > 1$, let $x_1 = \mathbf{1}$, $x_2 = \mathbf{1} - 2e_i - 2e_j$, $x_3 = \mathbf{1} - 2e_i$, and $x_4 = \mathbf{1} - 2e_j$, where $\{e_i\}$ are the standard bases. Since the source ground truth $y = x_{[1]}^2 - x_{[2]}^2$, and $x_{[k]} = \pm 1 x_{[2]}$ for $k\in\{3,4,\cdots d\}$, $y_1 = y_2 = y_3 = y_4$. 
\begin{align}
y_1 + y_2 + y_3 - y_4 =& x_1^\top A x_1 + x_2^\top A x_2 - x_3^\top A x_3 - x_4^\top A x_4 
\\=& 2 \mathbf{1}^\top A \mathbf{1}+ 4 A_{ii} + 4 A_{jj} - 4\mathbf{1}^\top A e_i - 4\mathbf{1}^\top A e_j + 8 A_{ij} \\
&- (\mathbf{1}^\top A \mathbf{1}+ 4 A_{ii} + 4 A_{jj} - 4\mathbf{1}^\top A e_i - 4\mathbf{1}^\top A e_j)\\
= & 8 A_{ij} = 0. \label{eqn:aij}
\end{align}
We then show $A_{1,j} = 0$ for $j \in \{2,3,\cdots d\}$ using the fact that $y_1 = y_4$.
\begin{align}
y_1 - y_4 =& x_1^\top A x_1 - x_4^\top A x_4
\\ =& \mathbf{1}^\top A \mathbf{1} - (\mathbf{1}^\top A \mathbf{1} - 4\mathbf{1}^\top A e_j + 4 A_{jj})
\\ =& 4\mathbf{1}^\top A e_j - 4 A_{jj} 
\\ =& 0.
\end{align}
From \eqref{eqn:aij} we know $A_{ij} = 0$ if $i \neq 1$. Then we also have $A_{1j} = A_{j1} = 0$. Therefore we can write $y$ in the following form: $y = x^\top A x = \sum_{i = 1}^d A_{ii}x_{[i]}^2$
We also have the source ground truth $y = x_{[1]}^2 - x_{[2]}^2$, and $x_{[k]} = \pm 1 x_{[2]}$ for $k\in\{3,4,\cdots d\}$. Then $A$ must satisfy $A_{11} = 1$, $\sum_{i = 2}^d A_{ii} = -1$, and all other entries of $A$ equals to $0$.

We have found the form of source ground truth matrix $A$. It suffices to show that the minimal norm solution of $\theta$ and $\phi$ subject to the form of $A$ must be in the form of Lemma~\ref{lem:source_gt}. 
\begin{align}
\|\theta\|_2^2 + \|\phi\|_F^2 =& \sum_{i}^m \theta_i^2 + \frac{1}{2} \|\phi_i\|_2^2 + \frac{1}{2} \|\phi_i\|_2^2
\\\ge& \sum_{i}^m 3 \cdot 2^\frac{2}{3} \left(|\theta_i| \|\phi_i\|_2^2 \right)^\frac{2}{3}
\\\ge& 3 \cdot 2^\frac{2}{3} \left(\sum_{i: \theta_i>0} \theta_i \|\phi_i\|_2^2 \right)^\frac{2}{3} + 3 \cdot 2^\frac{2}{3}\left(\sum_{i: \theta_i \le 0} -\theta_i \|\phi_i\|_2^2 \right)^\frac{2}{3}
\\=& 3 \cdot 2^\frac{2}{3} \left(\sum_{i: A_{ii} > 0} A_{ii}\right)^\frac{2}{3} + 3 \cdot 2^\frac{2}{3} \left(\sum_{i: A_{ii} \le 0} -A_{ii}\right)^\frac{2}{3} = 3 \cdot 2^\frac{5}{3}.
\end{align}
The first inequality holds due to AM-GM inequality, where it takes equality iff $\theta_i^2 = \frac{1}{2} \|\phi_i\|_2^2$ for all $i$. The second inequality holds due to Jensen inequality. The situation where both inequality take equality is exactly the form of Lemma~\ref{lem:source_gt}.
\end{proof}

\begin{lemma}\label{lem:concentration} Suppose $\widehat{Q}=\{x_i^t\}_{i=1}^{n_t}$ are i.i.d. samples from target distribution $Q$, then with high probability, $\Exp_{(x,y) \sim \widehat Q} \mathbb{I}(x_{[l]} = 0)$ is close to $0.5$:
\begin{align}
P\left( \left| \Exp_{(x,y) \sim \widehat Q} \mathbb{I}(x_{[l]} = 0) - 0.5\right| > t \right) \le e^{-2n_t t^2}
\end{align}
\end{lemma}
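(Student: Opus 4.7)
The plan is to recognize this as a one-line application of Hoeffding's inequality, after first verifying that under the target distribution $Q$ the event $\{x_{[l]} = 0\}$ has probability exactly $1/2$ for every coordinate $l \in [d]$.

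First I would split on the coordinate index. For $l \in \{1,2\}$, the construction in Section~\ref{sec:hard} states that $x_{[l]}$ is drawn i.i.d.\ from the distribution $q$ in Table~\ref{table:pq}, and by inspection $q(\{0\}) = 0.5$, giving $\Pr_Q(x_{[l]} = 0) = 1/2$ directly. For $l \in \{3,\dots,d\}$, the construction sets $x_{[l]} = \sigma_l x_{[1]}$ with $\sigma_l \in \{\pm 1\}$, so $x_{[l]} = 0$ if and only if $x_{[1]} = 0$, which again occurs with probability $1/2$. Thus in every case the Bernoulli random variable $Z_i := \mathbb{I}((x_i^t)_{[l]} = 0)$ has mean $1/2$.

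Next, because the target samples $\{x_i^t\}_{i=1}^{n_t}$ are i.i.d., the variables $Z_1,\dots,Z_{n_t}$ are i.i.d.\ Bernoulli$(1/2)$, in particular bounded in $[0,1]$ with common mean $1/2$. The quantity $\Exp_{(x,y)\sim \widehat Q}\mathbb{I}(x_{[l]}=0)$ is exactly the empirical average $\tfrac{1}{n_t}\sum_{i=1}^{n_t} Z_i$. Applying the (one-sided) Hoeffding inequality to $\sum_i(Z_i - 1/2)$ and to $\sum_i(1/2 - Z_i)$ and taking a union bound yields
\begin{align}
\Pr\!\left(\left|\tfrac{1}{n_t}\sum_{i=1}^{n_t} Z_i - \tfrac{1}{2}\right| > t\right) \le 2 e^{-2 n_t t^2},
\end{align}
which is the stated conclusion (up to the constant factor in front of the exponential, which is absorbed by using the two-sided form of Hoeffding or dropping a factor of $2$).

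There is no real obstacle here: the only point to be careful about is verifying that the probability of $x_{[l]} = 0$ equals $1/2$ uniformly in $l$, since the coordinates for $l \geq 3$ are deterministic functions of $x_{[1]}$ rather than fresh draws from $q$. Once that is done, the bound is an immediate invocation of Hoeffding on bounded i.i.d.\ Bernoullis.
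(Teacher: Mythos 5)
Your proposal is correct and follows essentially the same route as the paper: note that $\mathbb{I}(x_{[l]}=0)$ is a bounded (Bernoulli) variable with mean $1/2$ under $Q$ — your explicit check that this also holds for $l\ge 3$, where $x_{[l]}=\sigma_l x_{[1]}$, is a welcome bit of care the paper glosses over by noting each coordinate's marginal is $q$ — and then apply Hoeffding's inequality to the empirical mean. The factor-of-$2$ mismatch you flag is present in the paper's statement as well (the two-sided bound carries the $2$) and is immaterial to how the lemma is used downstream.
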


\begin{proof}[Proof of Lemma~\ref{lem:concentration}]
Since each coordinate of $x$ follows $q$, $\mathbb{I}(x_{[l]} = 0) - 0.5$ is a sub-Gaussian variable with $\sigma = 0.5$. We then apply standard Hoeffding's inequality to complete the proof.
\end{proof}

\begin{proof}[Proof of Theorem~\ref{theorem_fa}]
We have the conclusion of Lemma~\ref{lem:source_gt}. For simplicity we suppose without loss of generality that the source solution has the following form: $\exists \ l \in \{2,3,\cdots d\}, \ \phi_1 = 2^ {\frac{1}{6}} e_1, \ \phi_2 = 2^ {\frac{1}{6}} e_l , \ \phi_k = 0$ for $k \in \{3,\cdots m\}$, and $\theta_1 = \theta_2 = 2^ {-\frac{1}{3}}, \ \theta_k = 0$ for $k \in \{3,\cdots m\}$. Then these solutions can be categorized into two classes: (1) When $l = 2$, the source solution also works on the target, i.e. $L_Q(\theta, \phi) = 0$. (2) When $l \in \{3,\cdots d\}$, the source solution does not work on the target,
\begin{align}
\textup{Err}_Q(\theta, \phi) &= 1 - \Exp_{(x,y) \sim Q} \mathbb{I}(f_{\theta,\phi}(x) = y) 
\\&= 1 - \Exp_{(x,y) \sim Q} \mathbb{I}(x_{[l]}^2 = x_{[2]}^2)
\\&= 1 - \Exp_{(x,y) \sim Q} \mathbb{I}(x_{[l]} = 0 \ \textup{and}\  x_{[2]} = 0) - \Exp_{(x,y) \sim Q} \mathbb{I}(x_{[l]} \neq 0 \ \textup{and}\  x_{[2]} \neq 0) 
\\&= 0.5
\end{align}
To prove that feature adaptation learns the solution that does not work on the target domain, we show that with high probability, the solution belonging to situation (1) has larger total variation between source and target feature distributions $h_\sharp P$ and $h_\sharp \widehat Q$. In fact, the distributions of $h_\sharp P$ are the same for solutions in situation (1) and situation (2):
\begin{align*}
\Exp_{(x,y) \sim P} \left( h_ \phi (x) = (z_1,z_2)\right) = 
\left\{
             \begin{array}{lr}
             0.81, &(z_1,z_2) = 2^ {\frac{1}{3}}(0,0)  \\
             0.09, &(z_1,z_2) = 2^ {\frac{1}{3}}(0,1)\\
             0.09, &(z_1,z_2) = 2^ {\frac{1}{3}}(1,0)\\  
             0.01, &(z_1,z_2) = 2^ {\frac{1}{3}}(1,1)
             \end{array}
\right.
\end{align*}
For the target dataset, the distribution of features is different for solutions from situation (1) and situation (2). When $l = 2$, denote by $h_{1\sharp} \widehat Q$ the feature distribution.
\begin{align*}
\Exp_{(x,y) \sim \widehat Q} \left( h_ \phi (x) = (z_1,z_2)\right) = 
\left\{
             \begin{array}{lr}
             \Exp_{(x,y) \sim \widehat Q} ^ 2 \mathbb{I}(x_{[l]} = 0), &(z_1,z_2) = 2^ {\frac{1}{3}}(0,0)  \\
             \Exp_{(x,y) \sim \widehat Q} \mathbb{I}(x_{[l]} = 0) \Exp_{(x,y) \sim \widehat Q}  \mathbb{I}(x_{[l]} \neq 0), &(z_1,z_2) = 2^ {\frac{1}{3}}(0,1)\\
             \Exp_{(x,y) \sim \widehat Q} \mathbb{I}(x_{[l]} = 0) \Exp_{(x,y) \sim \widehat Q} \mathbb{I}(x_{[l]} \neq 0), &(z_1,z_2) = 2^ {\frac{1}{3}}(1,0)\\  
             \Exp_{(x,y) \sim \widehat Q} ^ 2 \mathbb{I}(x_{[l]} \neq 0), &(z_1,z_2) = 2^ {\frac{1}{3}}(1,1)
             \end{array}
\right.
\end{align*}
When $l \in \{3,\cdots d\}$, denote by $h_{2\sharp} \widehat Q$ the feature distribution. since $x_{[l]} = x_{[1]}$ in the target domain, $(z_1,z_2)$ can only be $2^ {\frac{1}{3}}(0,0)$ or $2^ {\frac{1}{3}}(1,1)$,
\begin{align*}
\Exp_{(x,y) \sim \widehat Q} \left( h_ \phi (x) = (z_1,z_2)\right) = 
\left\{
             \begin{array}{lr}
             \Exp_{(x,y) \sim \widehat Q} \mathbb{I}(x_{[l]} = 0), &(z_1,z_2) = 2^ {\frac{1}{3}}(0,0)  \\
             \Exp_{(x,y) \sim \widehat Q} \mathbb{I}(x_{[l]} \neq 0), &(z_1,z_2) = 2^ {\frac{1}{3}}(1,1)
             \end{array}
\right.
\end{align*}
We then instantiate Lemma~\ref{lem:concentration} with $t = 0.14$: With probability at least $1 - \delta$, $0.36<\Exp_{(x,y) \sim \widehat Q} \mathbb{I}(x_{[l]} = 0) < 0.64$ for any $n_t \ge C \textup{log}\left(\frac{1}{\delta}\right)$, where $C > 26$ is a constant. Finally, we show that $d_\textup{TV} (h_\sharp P, h_{2\sharp} \widehat Q) < d_\textup{TV} (h_\sharp P, h_{1\sharp} \widehat Q)$ as long as $0.36<\Exp_{(x,y) \sim \widehat Q} \mathbb{I}(x_{[l]} = 0) < 0.64$ to prove that feature adaptation will select solutions in situation (2).
\begin{align} 
d_\textup{TV} (h_\sharp P, h_{1\sharp} \widehat Q) =& \frac{1}{2} \left|\Exp_{(x,y) \sim \widehat Q} ^ 2 \mathbb{I}(x_{[l]} = 0) - 0.81\right| + \frac{1}{2} \left|\Exp_{(x,y) \sim \widehat Q} ^ 2 \mathbb{I}(x_{[l]} \neq 0) - 0.01\right| 
\\ & + \left|\Exp_{(x,y) \sim \widehat Q} \mathbb{I}(x_{[l]} = 0) \Exp_{(x,y) \sim \widehat Q} \mathbb{I}(x_{[l]} \neq 0) - 0.09\right|
\\ =&  0.81 - \Exp^2_{(x,y) \sim \widehat Q} \mathbb{I}(x_{[l]} = 0)
\\ >& 0.9 - \Exp_{(x,y) \sim \widehat Q} \mathbb{I}(x_{[l]} = 0)
\\ = & \frac{1}{2} \left|\Exp_{(x,y) \sim \widehat Q} \mathbb{I}(x_{[l]} = 0) - 0.81\right| + \frac{1}{2} \left|\Exp_{(x,y) \sim \widehat Q} \mathbb{I}(x_{[l]} \neq 0) - 0.01\right| 
\\ =& d_\textup{TV} (h_\sharp P, h_{2\sharp} \widehat Q),
\end{align}
when $0.36<\Exp_{(x,y) \sim \widehat Q} \mathbb{I}(x_{[l]} = 0) < 0.64$, which completes the proof of feature adaptation.

In standard self-training, when training the source solution, the probability of $l$ equalling each value in $\{2,3,\cdots d\}$ is the same, but only $l = 2$ is the solution working on the source domain. Then when training on the source ground truth and target pseudo-labels, the model will make $l$ unchanged. Thus the probability of recovering the target ground truth is only $\frac{1}{d-1}$.  
\end{proof}

\subsection{Proof of Theorem~\ref{theorem_transfer}}
Similar to the proof of Theorem~\ref{theorem_fa}, we use the conclusion of Lemma~\ref{lem:source_gt} to show that $l = 2$ indicates the source solution that works on the target domain, while the solutions corresponding to $l \in \{3,\cdots d\}$ will have large error on the target domain. Then we show that only $l = 2$ makes the training objective of cycle self-training $L_\textup{CST} = 0$. This is due to the fact that $l = 2$ will make the spans of source and target features identical, while $l \in \{3,\cdots d\}$ makes the spans of source and target features different and thus $\hat\theta_t(\phi) \neq \theta_s$.
\begin{proof}[Proof of Theorem~\ref{theorem_transfer}]
We still use Lemma~\ref{lem:source_gt}. To prove that cycle self-training recovers the target ground truth, it suffices to show that $\Exp_{x\in P}\ell(g_{\theta_s}(h_\phi(x)),g_{\hat\theta_t(\phi)}(h_\phi(x)))=0$ when $l = 2$, and $\Exp_{x\in P}\ell(g_{\theta_s}(h_\phi(x)),g_{\hat\theta_t(\phi)}(h_\phi(x))) \neq 0$ when $l \in \{3,\cdots d\}$. 

When $l \in \{3,\cdots d\}$, since $x_{[1]}^2 = x_{[l]}^2$ in the target domain, the target pseudo-labels are all $0$. Then we solve the problem $\hat\theta_t(\phi) = \mathop{\arg\min}_{\theta}\|\theta\|_2^2,\ \text{s.t.}\  \Exp_{x\in \widehat Q}\ell(f_{\theta_s,\phi}(x),f_{\theta,\phi}(x))$ to get the target classifier $\hat\theta_t(\phi)$. Since we want the target solution with minimal norm, $\hat\theta_t(\phi) = 0$, and we can calculate $L_\textup{CST}$
as follows:
\begin{align}
\Exp_{x\in P}\ell(g_{\theta_s}(h_\phi(x)),g_{\hat\theta_t(\phi)}(h_\phi(x))) = \Exp_{(x,y)\sim P} (y - f_{\hat\theta_t(\phi),\phi}(x))^2 = \Exp_{(x,y)\sim P} y^2 = 0.18.
\end{align}
When $l = 2$, we show that $2^ {\frac{1}{3}}e_1 + 2^ {\frac{1}{3}}e_2, 2^ {\frac{1}{3}}e_1, 2^ {\frac{1}{3}}e_2$ and $0$ all appear in the target feature set with high probability. The probability that the target feature set does not contain each one in $2^ {\frac{1}{3}}e_1 + 2^ {\frac{1}{3}}e_2, 2^ {\frac{1}{3}}e_1, 2^ {\frac{1}{3}}e_2$ and $0$ equals to $\left( \frac{3}{4} \right) ^{n_t}$. Therefore with a union bound we can show that $2^ {\frac{1}{3}}e_1 + 2^ {\frac{1}{3}}e_2, 2^ {\frac{1}{3}}e_1, 2^ {\frac{1}{3}}e_2$ and $0$ all appear in the target feature set with probability at least $1 - 4\left( \frac{3}{4} \right) ^{n_t}$. In this case, $\hat\theta_t(\phi) = \mathop{\arg\min}_{\theta}\|\theta\|_2^2,\ \text{s.t.}\  \Exp_{x\in \widehat Q}\ell(f_{\theta_s,\phi}(x),f_{\theta,\phi}(x))$ results in $\hat\theta_t(\phi) = \theta_s$, which means if $\nt >\Theta(\log\frac{1}{\xi})$, with probability at least $1-\xi$ over the sampling of target data, 
\begin{align}
L_\textup{CST} = \Exp_{x\in P}\ell(g_{\theta_s}(h_\phi(x)),g_{\hat\theta_t(\phi)}(h_\phi(x))) = 0.
\end{align}
\end{proof}
\newpage
\section{Implementation Details}\label{sec:implementation}
We use PyTorch~\citep{NEURIPS2019_bdbca288} and run each experiment with 2080Ti GPUs. CBST, KLD, and IA results are from their original papers. We use the highest results in the literature for DANN, MCD, CDAN, and MDD. VAT, FixMatch, MixMatch and DIRT-T are adapted to our datasets from the official code. We adopt the pre-trained ResNet models provided in torchvision. For BERT implementation, we use the official checkpoint and PyTorch code from \url{https://github.com/huggingface/transformers}.
 
\subsection{Dataset Details}
\textbf{OfficeHome}~\citep{cite:CVPR17OfficeHome} \url{https://www.hemanthdv.org/officeHomeDataset.html} is an object recognition dataset which contains images from 4 domains. It has about 15500 images organized into 65 categories. The dataset was collected using a python web-crawler that crawled through several search engines and online image directories. The authors provided a Fair Use Notice on their website.

\textbf{VisDA-2017}~\citep{DBLP:journals/corr/abs-1710-06924} \url{https://github.com/VisionLearningGroup/taskcv-2017-public/tree/master/classification} uses synthetic object images rendered from CAD models as the training domain and real object images cropped from the COCO dataset as the validation domain. The authors provided a Term of Use on the website.

\textbf{DomainNet}~\citep{9010750} \url{http://ai.bu.edu/M3SDA/#dataset} contains images from clipart, infograph, painting, real, and sketch domains collected by searching a category name combined with a domain name from searching engines. The authors provided a Fair Use Notice on their website.

\textbf{Amazon Review}~\citep{blitzer-etal-2007-biographies} \url{https://www.cs.jhu.edu/~mdredze/datasets/sentiment/} contains product reviews taken from Amazon.com from many product types (domains). Some domains (books and dvds) have hundreds of thousands of reviews. Others (musical instruments) have only a few hundred. Reviews contain star ratings (1 to 5 stars) that can be converted into binary labels if needed. 
 
\iffalse
\subsection{Pre-processing.}
We use RandAugment~\citep{9150790} for target images following \citet{fixmatch}. We set 10 image transformations as all pre-processing operations and randomly sample 5 out of them for images in the target domain. For source images, we follow standard image pre-processing as ~\citet{cite:CVPR16DRL}. 
\fi

\subsection{Bi-level Optimization}
\label{sec:bilevel}
In Section~\ref{sec:cst}, we highlight that the optimization of CST involves bi-level optimization. In the inner loop (\eqref{eq:target}), we train the target classifier $\theta_t(\phi)$ on top of the shared representations $\phi$, thus $\theta_t(\phi)$ is a function of $\phi$. Moreover, the target classifier $\theta_t(\phi)$ is trained with target pseudo-labels $y'$, which are the sharpened version of the outputs of the source classifier $\theta_s$ on top of the shared representations $\phi$. In this sense, $\theta_t(\phi)$ relies on $\theta_s$ and $\phi$ through $y'$ implicitly, too. In the outer loop (\eqref{eq:overall}), we update the shared representations $\phi$ and the source classifier $\theta_s$ to make both the source classifier $\theta_s$ and the target classifier $\theta_t(\phi)$ perform well on the source domain. Since $\theta_t(\phi)$ relies on $\phi$ and $\theta_s$, the objective of \eqref{eq:overall} is a bi-level optimization problem. We can derive the gradient of the loss w.r.t. $\phi$ and $\theta_s$ as follows:
\begin{align}\label{dphi}
\nabla_{\phi} [L_{\widehat P}(\theta_s,\phi)
	&+L_{\widehat P}(\hat\theta_t(\phi),\phi)]\\ = &\nabla_{\phi} L_{\widehat P}(\theta_s,\phi) + \frac{ \partial L_{\widehat P}(\hat\theta_t(\phi),\phi)} {\partial \phi} + \frac{ \partial L_{\widehat P}(\hat\theta_t(\phi),\phi)} {\partial \hat\theta_t(\phi)}\frac{\dif \hat\theta_t(\phi) } {\dif \phi}\nonumber\\
	= & \nabla_{\phi} L_{\widehat P}(\theta_s,\phi) + \frac{ \partial L_{\widehat P}(\hat\theta_t(\phi),\phi)} {\partial \phi} + \frac{ \partial L_{\widehat P}(\hat\theta_t(\phi),\phi)} {\partial \hat\theta_t(\phi)}\left[ \frac{ \partial \hat\theta_t(\phi) } {\partial \phi} + \frac{ \partial \hat\theta_t(\phi) } {\partial y'}\frac{ \partial y' } {\partial \phi}\right].\nonumber
\end{align}
\begin{align}\label{dthetas}
\nabla_{\theta_s} [L_{\widehat P}(\theta_s,\phi)
	+L_{\widehat P}(\hat\theta_t(\phi),\phi)] 
	= \nabla_{\theta_s} L_{\widehat P}(\theta_s,\phi) + \frac{ \partial L_{\widehat P}(\hat\theta_t(\phi),\phi)} {\partial \hat\theta_t(\phi)}\frac{ \partial \hat\theta_t(\phi) } {\partial y'}\frac{ \partial y' } {\partial \theta_s}.
\end{align}
However, following the standard practice in self-training, we use label-sharpening to obtain target pseudo-labels $y'$, i.e. $y' = \mathop{\arg\max}_i\{f_{\theta_s,\phi}(x)_{[i]}\}$. Thus, $y'$ is not differentiable w.r.t. $\theta_s$ and $\phi$. We treat the gradient of $y'$ w.r.t. $\theta_s$ and $\phi$ as $0$ in \eqref{dphi} and \eqref{dthetas}, making optimization easier. This modification leads to exactly \eqref{opt1} and \eqref{opt2} in Algorithm~\ref{alg:CST} together with the Tsallis entropy loss.

\textbf{Speeding up bi-level optimization with MSE loss.} Standard methods of bi-level optimization back-propagate through the inner loop algorithm, which requires computing the second-order derivative (Hessian-vector products) and can be unstable. We propose to use MSE loss instead of cross entropy in the inner loop when training the head $\hat \theta(\phi)$ to calculate the analytical solution with least square and directly back-propagate to the outer loop without calculating second-order derivatives. The framework is as fast as training the two heads jointly. To adopt MSE loss in multi-class classification, we use the one-hot embedding as the output and train a multi-variate regressor following the protocol of \citet{NIPS2019_9025}. We calculate the least square solution of $\theta_t(\phi)$ based on one minibatch following the protocol of~\citet{bertinetto2018metalearning}. We also provide results of varying batchsize to verify the performance of this approximation in Table~\ref{batch}. Results indicate that the performance of CST is stable in a wide range of batchsizes.

\begin{table}[htbp]
\vskip -0.1in
\addtolength{\tabcolsep}{1pt} 
\centering 
\caption{Accuracy (\%) on VisDA-2017 with ResNet-50}
\label{batch}
\begin{small}
\begin{tabular}{l|r}
\toprule
Method & Accuracy \\
\midrule
CST (batchzize 32)& 79.9 $\pm$ 0.6 \\  
CST (batchzize 64)& 79.9 $\pm$ 0.5\\
CST (batchzize 128)& 79.6 $\pm$ 0.4\\
CST (batchzize 256)& 79.0 $\pm$ 0.6\\
\bottomrule
\end{tabular}
\end{small}
\vskip -0.1in
\end{table}

\subsection{Selection of $\alpha$}\label{sec:select}
We can also update $\alpha$ with gradient methods auto-differentiation tools as we treat $\phi$. However, since $\alpha$ has only one parameter but many other parameters ($\theta_{s,\alpha}$ and $\theta_{t,\alpha}$) rely on it, using gradient methods is costly. To ease the computational cost, we choose to discretize the feasible region of $\alpha \in [1,2]$ with $\alpha \in \{1.0,1.1,\cdots 1.9,2.0\}$, and train ${\theta}_{s,\alpha}$ with each $\alpha \in \{1.0,1.1,\cdots 1.9,2.0\}$ to generate pseudo-labels and train $\hat{\theta}_{t,\alpha}$ on pseudo-labels corresponding to each value of $\alpha$. Then we select the $\alpha \in \{ 1.0,1.1,\cdots 1.9,2.0\}$ with best performance on the source dataset following \eqref{eq:alpha}. We also update $\alpha$ at the start of each epoch, since we found more frequent update leads to no performance gain. Since we only need to select $\alpha$ once at the start of each epoch, the resulting additional computational cost only relates to training the linear head on the source and target datasets for additional 11 times per epoch, which is negligible compared to training the backbone. 

We plot the change of $\alpha$ throughout training in Figure~\ref{fig:alpha_epoch}. $\alpha$ converges to smaller value at the end of training, indicating that the penalization on uncertainty is increasing. Also note that $\alpha$ tends decrease slower for ``heuristically distant'' source and target domains. This corroborates the intuition that we need to penalize uncertain predictions mildly especially when the domain gap is large.

\begin{figure*}[htbp]
  \centering 
   \subfigure{
    \includegraphics[width=0.48\columnwidth]{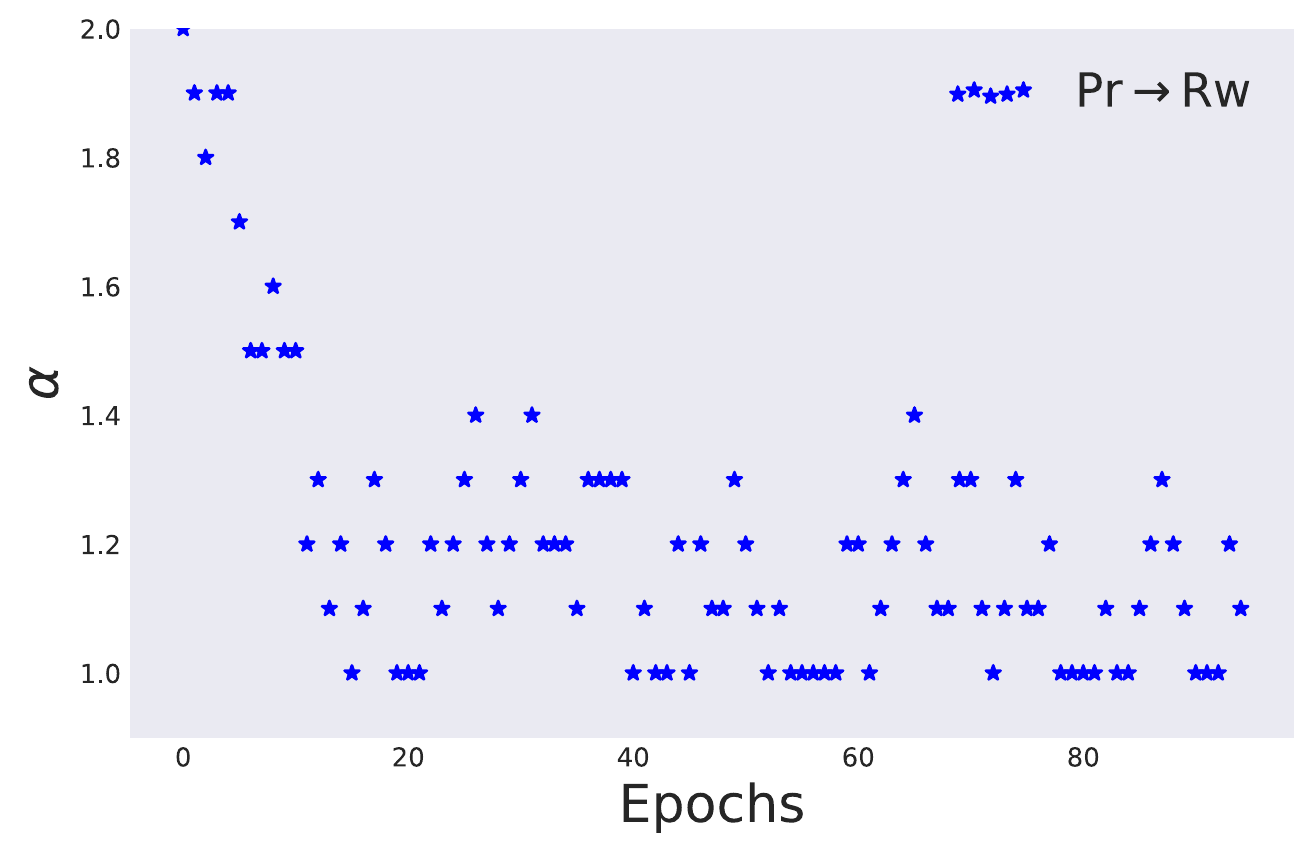}
    }
    \hfill
   \subfigure{
    \includegraphics[width=0.48\columnwidth]{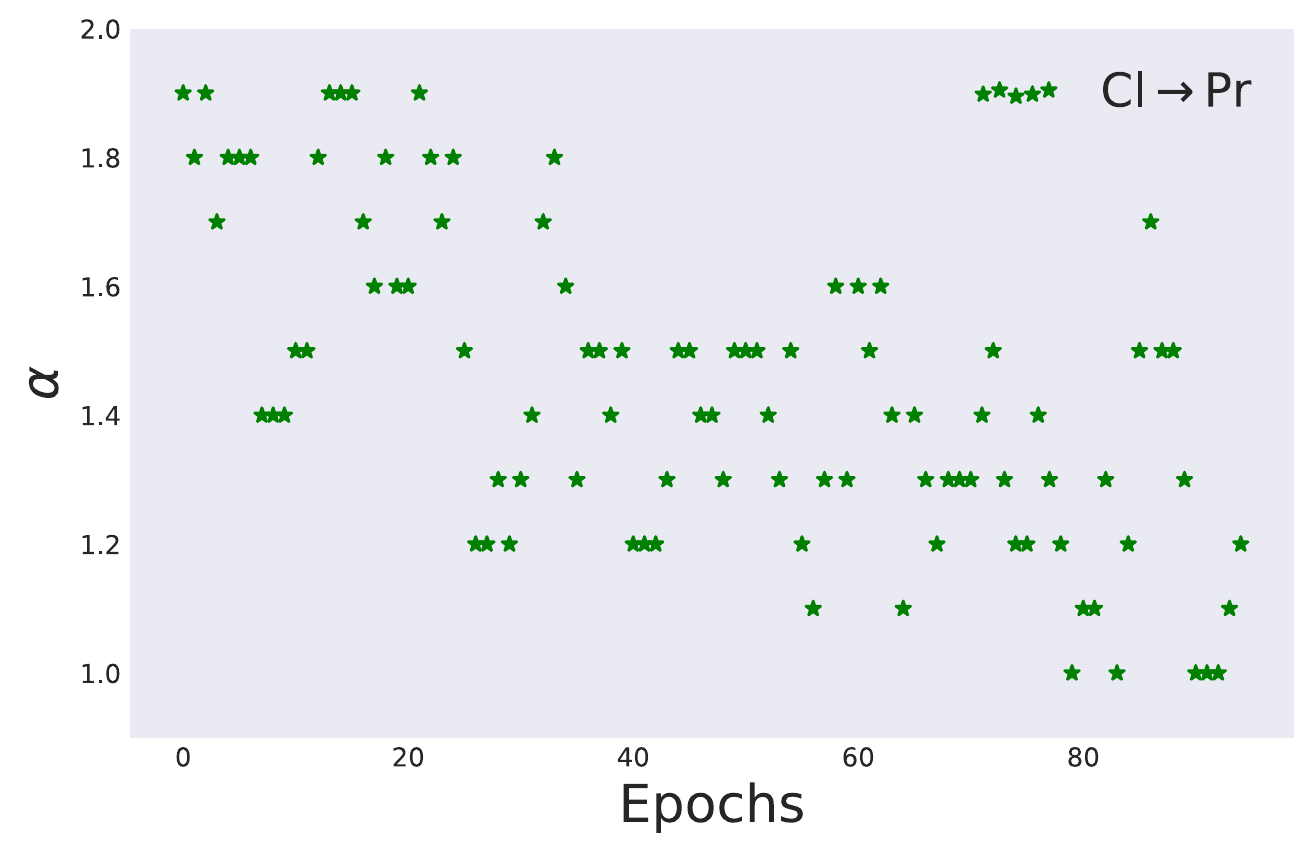}
    }
  \caption{\small \textbf{Change of $\alpha$ during training.}} 
  \label{fig:alpha_epoch}
\end{figure*}

\textbf{Difference between $\theta_s$ and $\theta_{s,\alpha}$.} We use $\theta_s$ to update the feature extractor $\phi$ and test on the target domain after training. $\theta_{s,\alpha}$ is only used to search the optimal $\alpha$, and the gradient does not back-propagate to $\phi$.

\section{Additional Experiment Details}\label{sec:add_exp}

\subsection{Additional Details of Section~\ref{sec:limitations}}\label{sec:add11}
In Figure~\ref{fig:pseudo} (Left), we use VisDA-2017 with 12 classes. To simulate the i.i.d., covariate shift and label shift setup, we resample the original Synthetic (source) and Real (target) datasets. In the i.i.d. setting, the labeled dataset consists of $1000$ random samples per class from Real, and the unlabeled dataset consists of $1000$ random samples (no overlapping with the labeled dataset) per class from Real. In the covariate shift setting, the labeled dataset consists of $1000$ random samples per class from Synthetic, and the unlabeled dataset consists of $1000$ random samples per class from Real. In the label shift setting, the number of examples is $[1800,1440,1152,922,737,590,472,377,302,240,193,154]$ for each class of the labeled dataset and $1000$ for each class of the unlabeled dataset, with both labeled and unlabeled datasets sampled randomly from Real. We train the model on labeled data until convergence to generate pseudo-labels for the unlabeled data. Then we calculate the ratio of classes in pseudo-labels and ground truth.

In Figure~\ref{fig:pseudo} (Middle), we also visualize the change of pseudo-label accuracy and the distance $d_\textup{TV}$ throughout standard self-training on original Synthetic and Real datasets. Here standard self-training refers to \eqref{eqn:Ljoint} with label-sharpening. 

In both Figure~\ref{fig:pseudo} (Right) and this subsection, confidence refers to the maximum soft-max output value, and entropy is defined as $\sum_i -y_i\textup{log}(y_i)$. We change the confidence threshold from $0$ to $1$ and entropy threshold from $0$ to $\textup{log}$\texttt{(numclasses)}. Then we plot the point (False Positive Rate, True Positive Rate) in the plane. 

In Section~\ref{sec:limitations}, we measure the quality of pseudo-labels with the total variation between pseudo-label distribution and ground-truth distribution. We show this quantity is upper-bounded by the accuracy of pseudo-labels. Intuitively, when the pseudo-label distribution and ground-truth distribution are the same, the output can still be incorrect. (e.g., in a binary problem, $P(Y=1) = P(Y=0) = 0.5$, and $\hat Y \sim \textup{uniform}[0,1]$ but is independent of $X$.) 
Recall that $d_{\textup{TV}}(Y,\hat Y) = \textup{sup}_{E\subset[C]}|P(Y\in E) - P(\hat Y \in E)|$. Suppose the supremum is reached by $\hat E$. Without loss of generality, assume $P(Y = c) > P(\hat Y = c)$ for all $c \in \hat E$. Then $P(Y = c) \le P(\hat Y = c)$ for all $c \notin \hat E$.
\begin{align*}
P(Y \neq \hat Y) =& \sum_{c=1}^C P(Y = c,\ \hat Y \neq c)
\\ =& \sum_{c\in\hat E} P(Y = c,\ \hat Y \neq c) + \sum_{c\notin\hat E} P(Y = c,\ \hat Y \neq c)
\\ \ge & \sum_{c\in\hat E} P(Y = c) - P(\hat Y = c)
\\ =& d_\textup{TV}(Y,\hat Y).
\end{align*}
In the equality, we use $P(Y = c) > P(\hat Y = c)$ when $c \in \hat E$, so $P(Y = c,\ \hat Y \neq c) \ge P(Y = c) - P(\hat Y = c)$ if $c \in \hat E$. Also note that $P(Y = c,\ \hat Y \neq c)\ge 0$ if $c \notin \hat E$.

In this subsection, we provide additional results of Section~\ref{sec:limitations}. We visualize the distributions of pseudo-labels and ground-truth with ResNet-50 backbones on Art$\rightarrow$Clipart, Product$\rightarrow$Art, Clipart$\rightarrow$Real World, Art$\rightarrow$Real World and Real World $\rightarrow$Product tasks (without resampling) in Figures~\ref{fig:pseudo1},  \ref{fig:pseudo2}, \ref{fig:pseudo3}, \ref{fig:pseudo4}, and \ref{fig:pseudo5} respectively. We also visualize the ROC curve of pseudo-label selection with confidence threshold. Results on Art$\rightarrow$Clipart, Product$\rightarrow$Art, Clipart$\rightarrow$Real World, Art$\rightarrow$Real World and Real World $\rightarrow$Product are similar to VisDA-2017. When the pseudo-labels are generated from models trained on different distributions, they can become especially unreliable in that examples of several classes are almost misclassified into other classes. Domain shift also makes the selection of correct pseudo-labels more difficult than standard semi-supervised learning.

\newpage
\begin{figure*}[!htbp]
  \centering
   \subfigure{
    \includegraphics[width=0.38\textwidth]{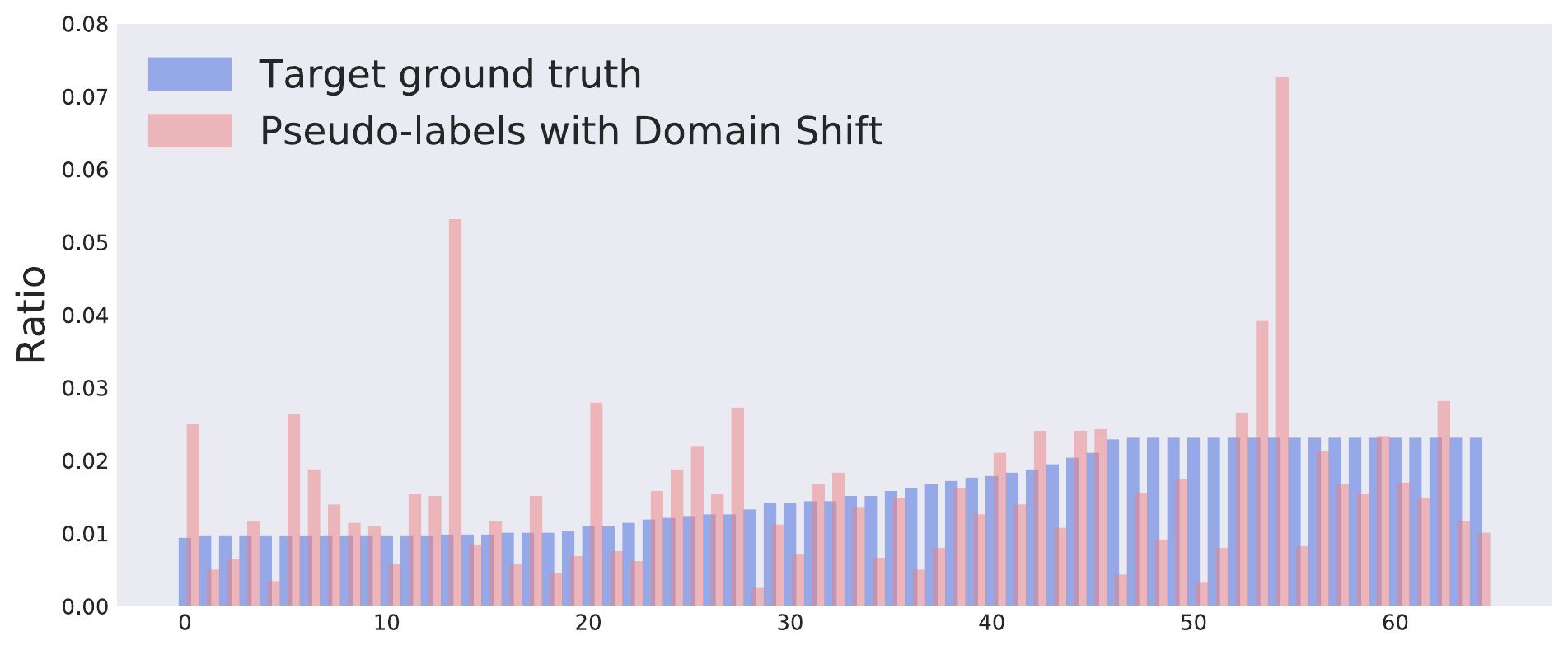} 
    }
   \subfigure{
   \includegraphics[width=0.38\textwidth]{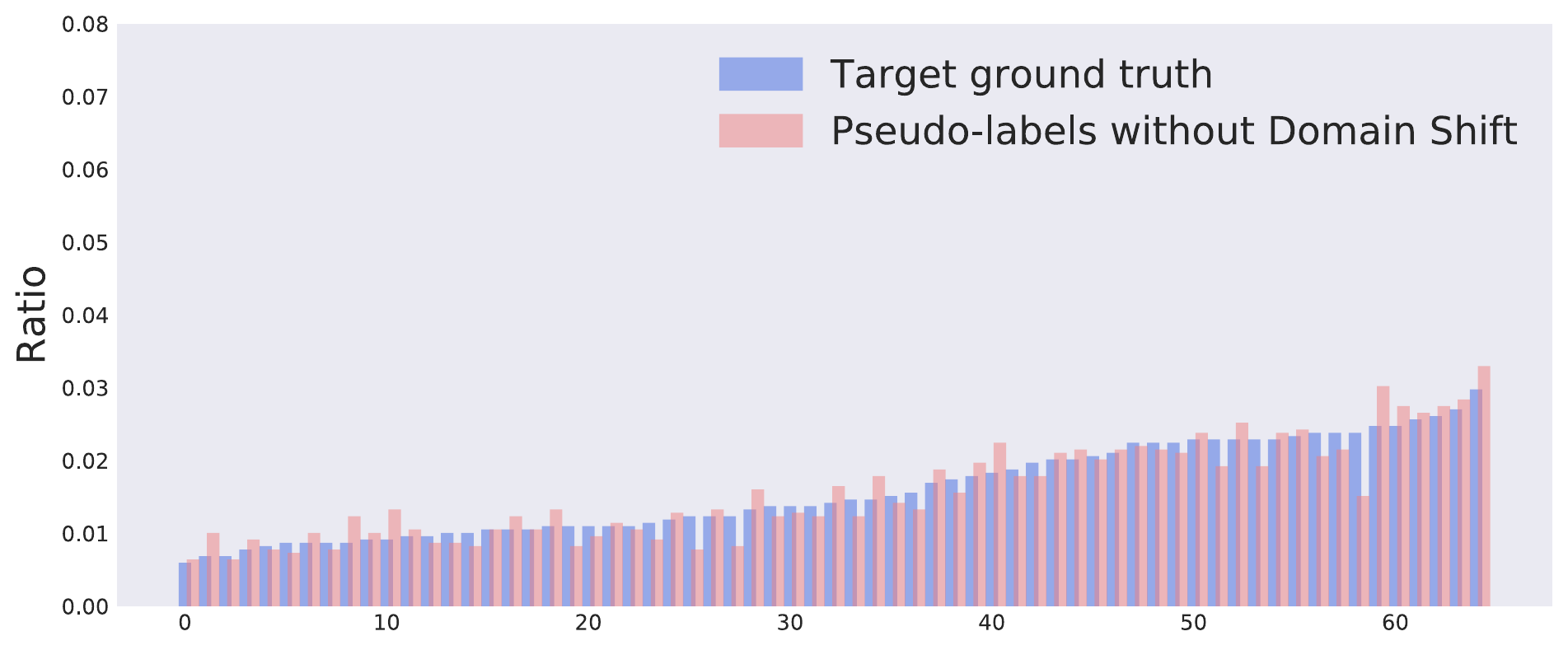} 
    }
   \subfigure{
    \includegraphics[width=0.16\textwidth]{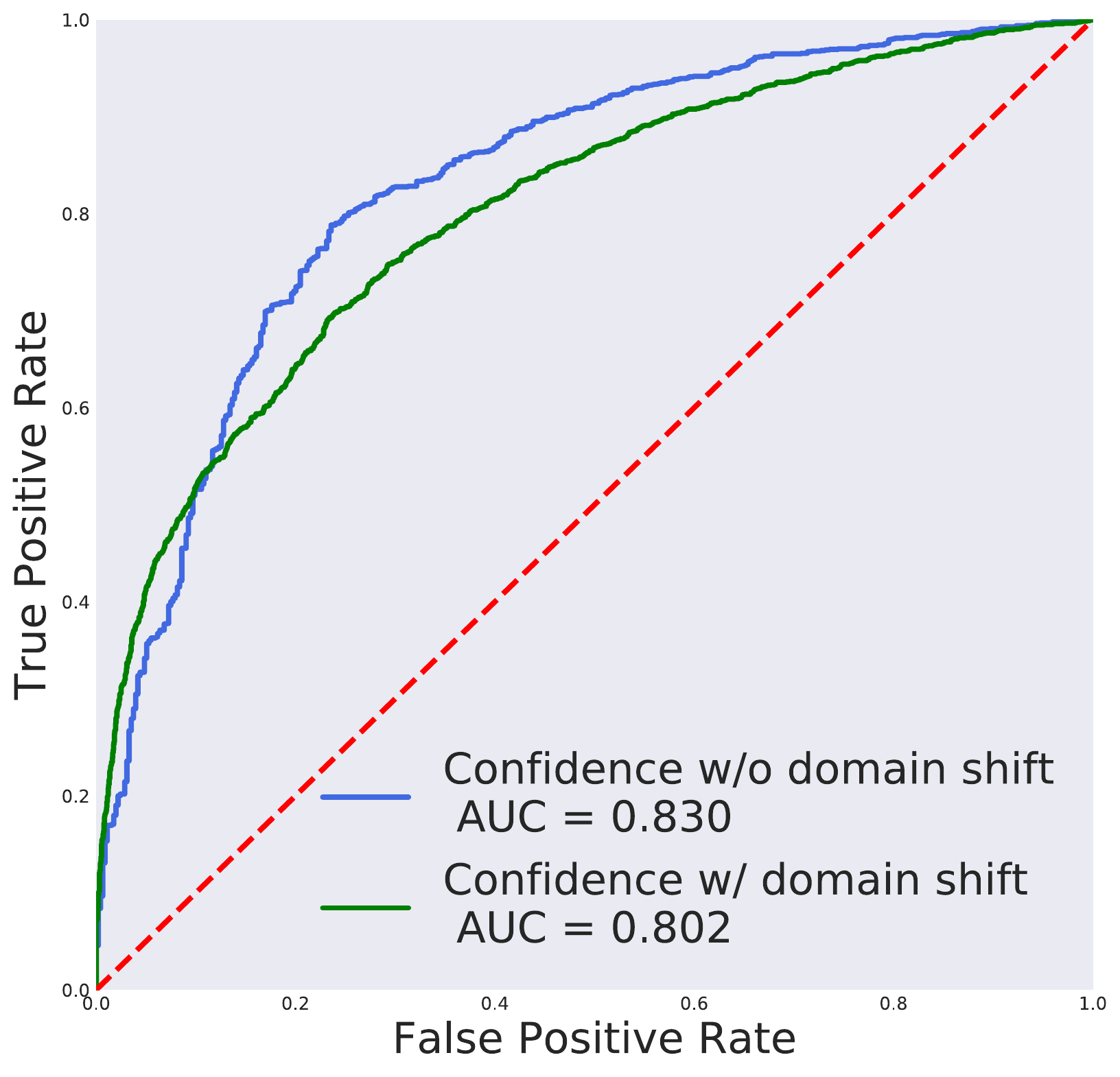}
    }
    \vskip -0.08in\
    
    \vspace{-5pt}
  \caption{\textbf{Analysis of pseudo-labels under domain shift on Art$\rightarrow$Clipart.} Left: Comparison of pseudo-label distributions with and without domain shift. Right: Comparison of pseudo-label selection with and without domain shift. } 
  \label{fig:pseudo1}
\end{figure*}

\begin{figure*}[!htbp]
  \centering
   \subfigure{
    \includegraphics[width=0.38\textwidth]{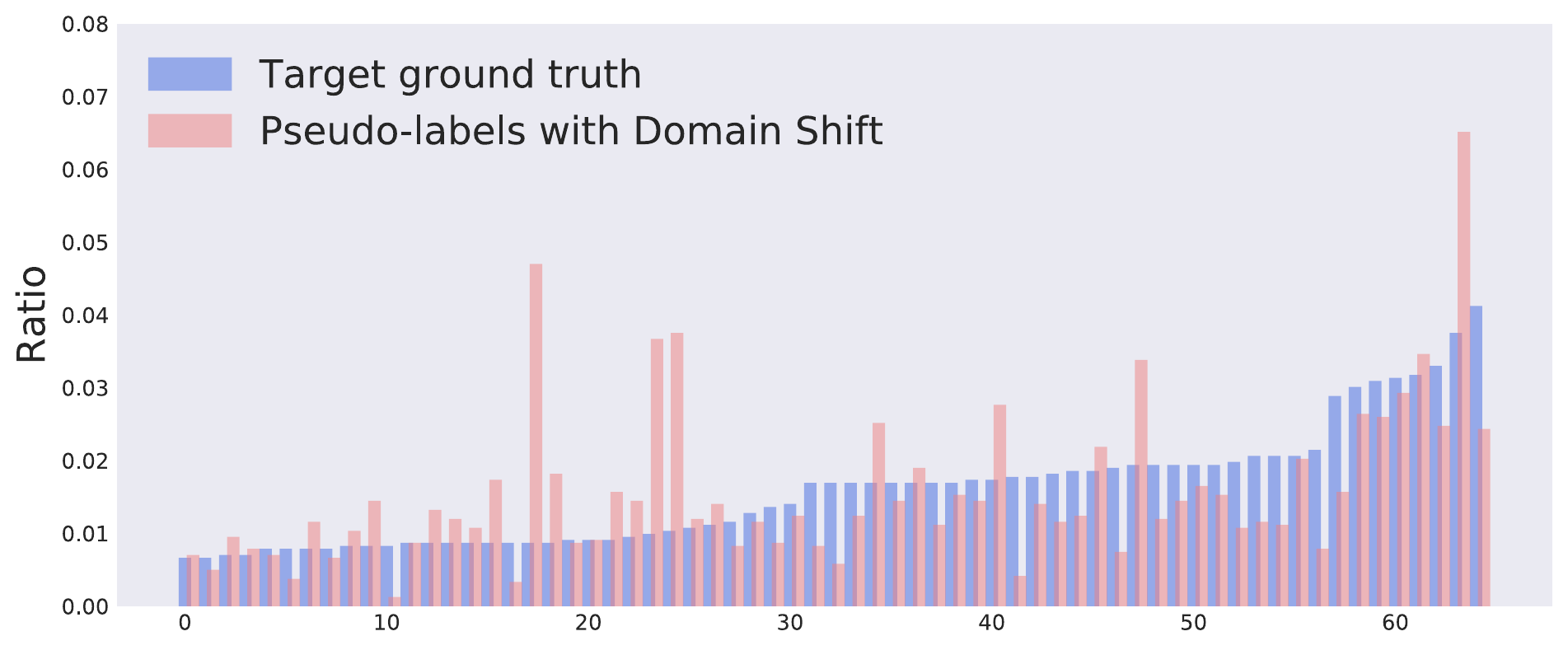} 
    }
   \subfigure{
   \includegraphics[width=0.38\textwidth]{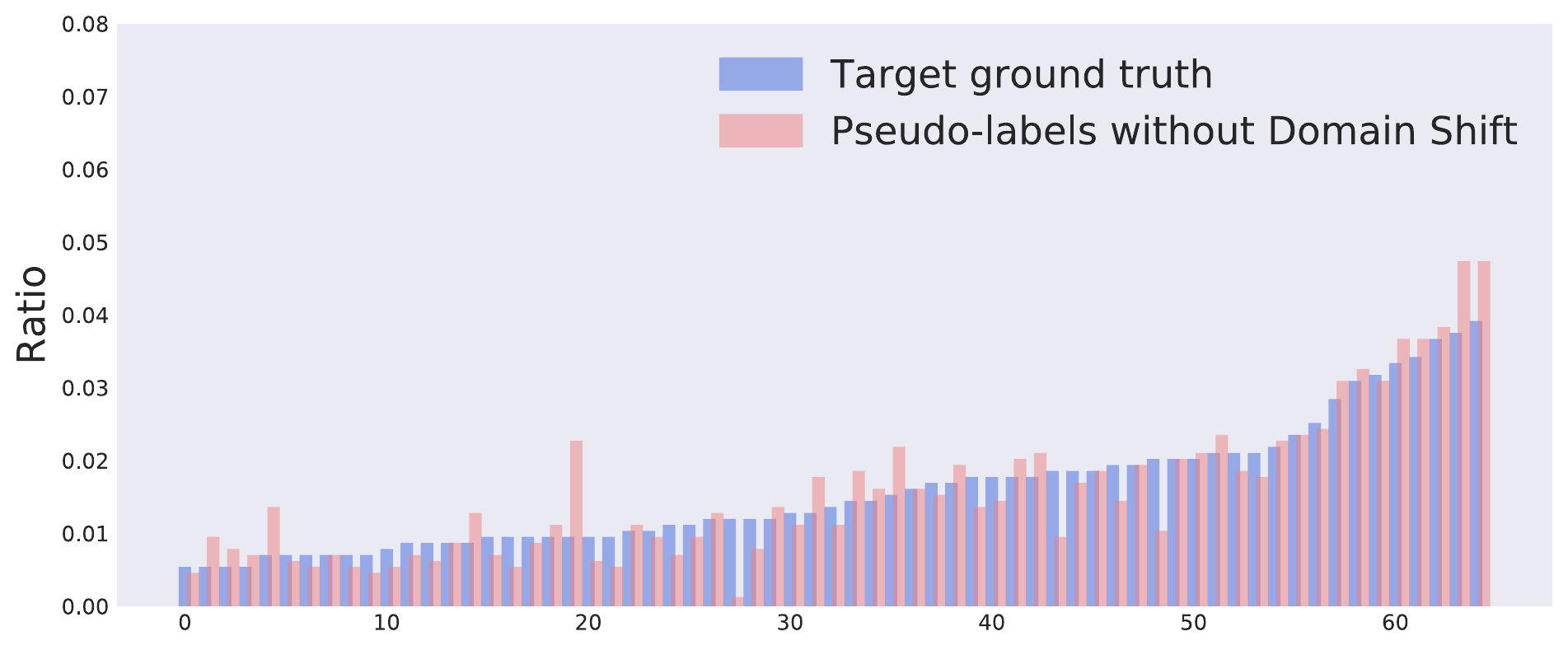} 
    }
   \subfigure{
    \includegraphics[width=0.16\textwidth]{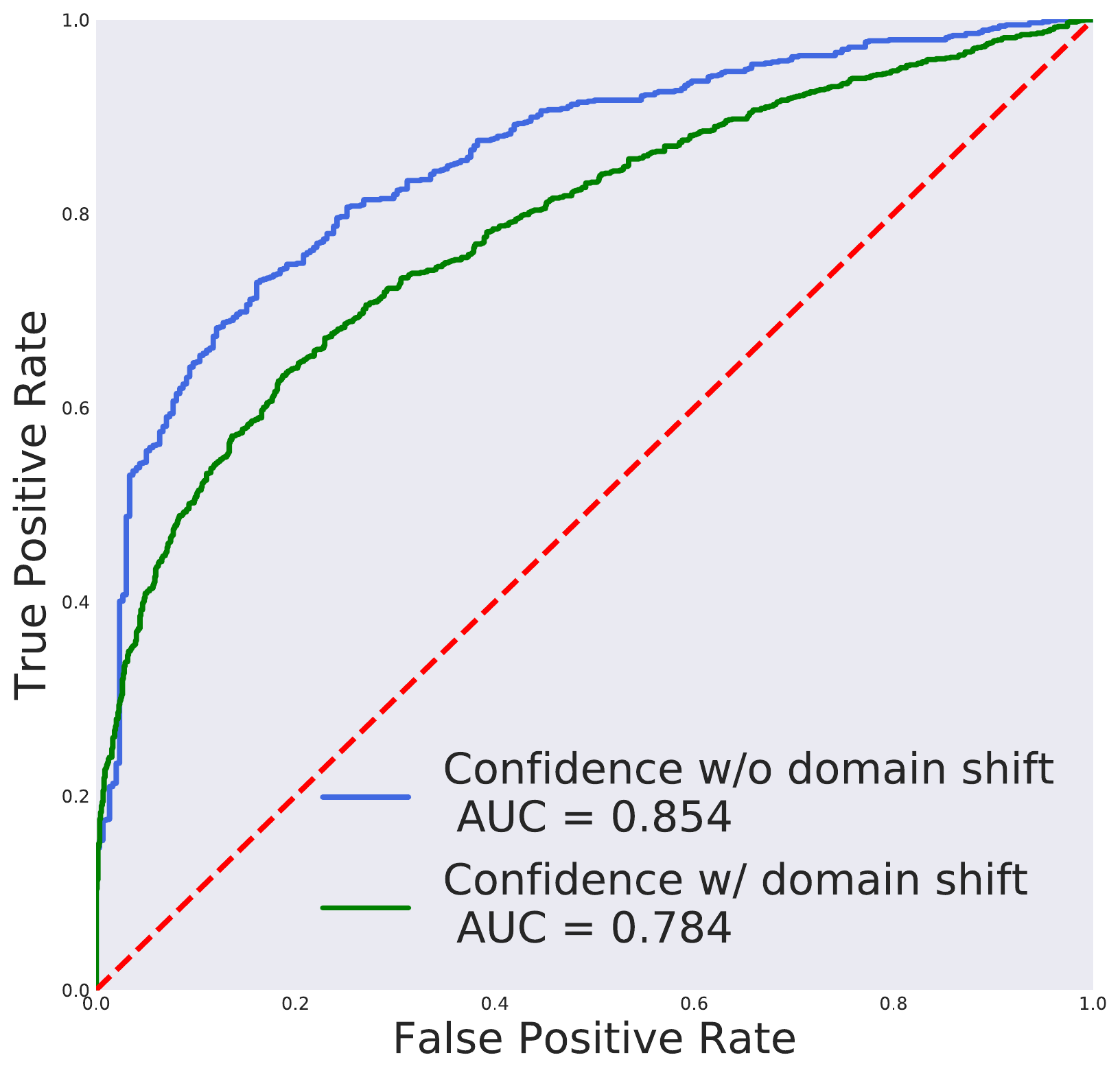}
    }
    \vskip -0.08in\
        \vspace{-5pt}
  \caption{\textbf{Analysis of pseudo-labels under domain shift on Product$\rightarrow$Art.} Left: Comparison of pseudo-label distributions with and without domain shift. Right: Comparison of pseudo-label selection with and without domain shift. } 
  \label{fig:pseudo2}
\end{figure*}

\begin{figure*}[!htbp]
  \centering
   \subfigure{
    \includegraphics[width=0.38\textwidth]{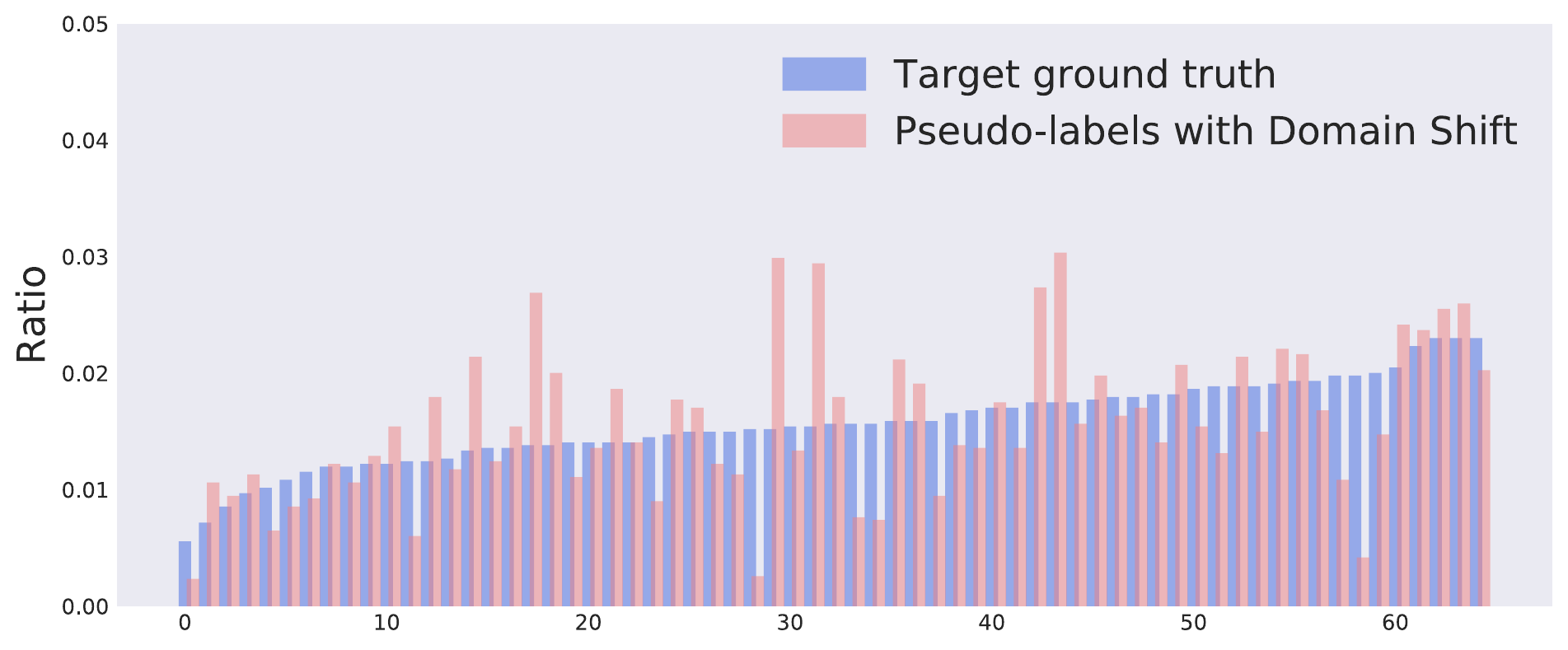} 
    }
   \subfigure{
   \includegraphics[width=0.38\textwidth]{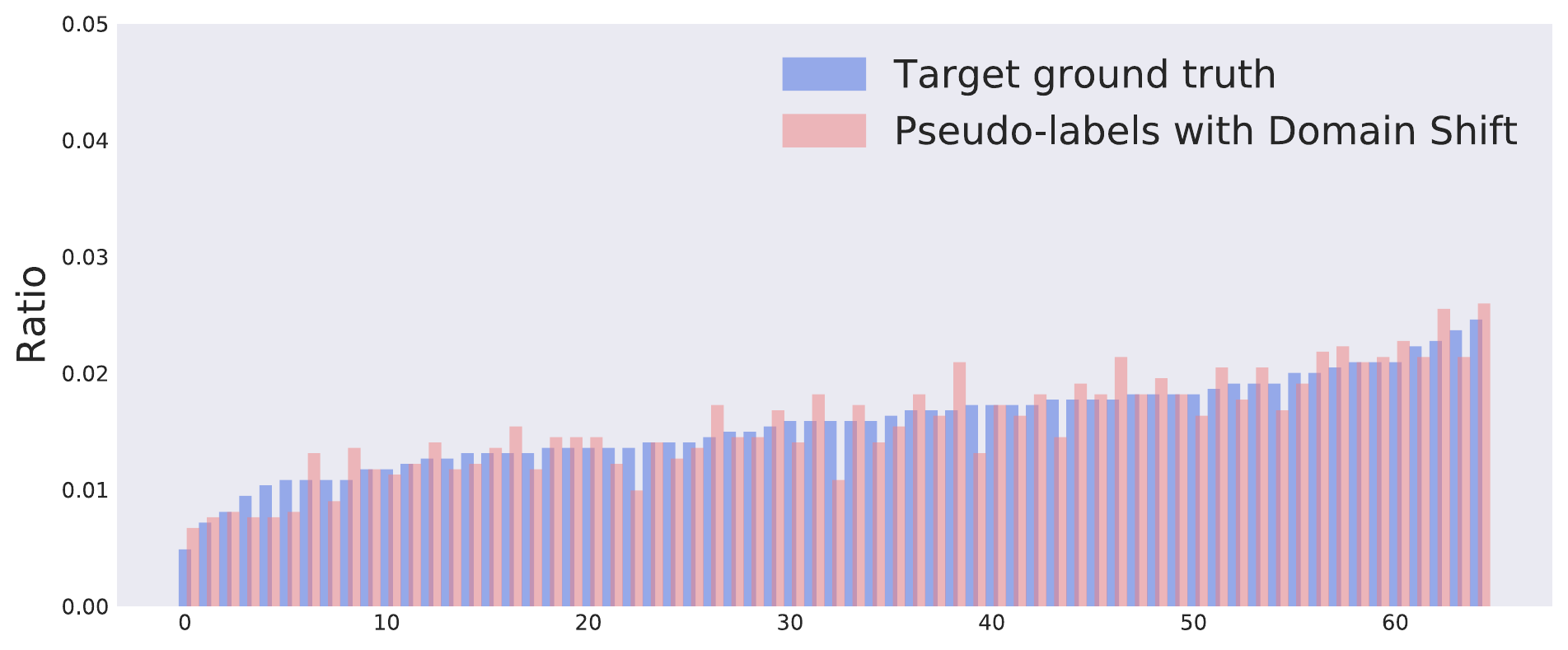} 
    }
   \subfigure{
    \includegraphics[width=0.16\textwidth]{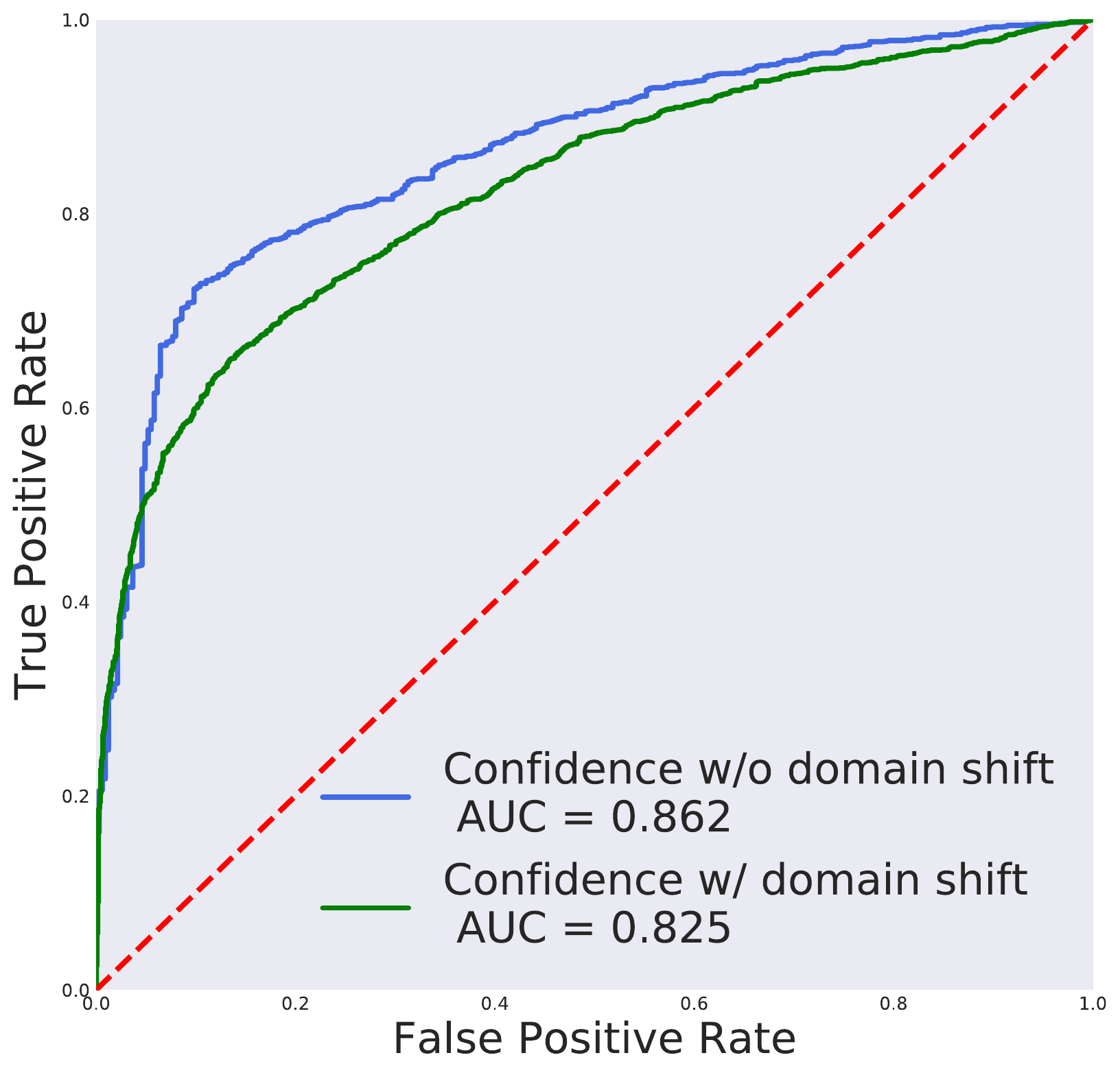}
    }
    \vskip -0.08in\
        \vspace{-5pt}
  \caption{\textbf{Analysis of pseudo-labels under domain shift on Clipart$\rightarrow$Real World.} Left: Comparison of pseudo-label distributions with and without domain shift. Right: Comparison of pseudo-label selection with and without domain shift. } 
  \label{fig:pseudo3}
\end{figure*}

\begin{figure*}[!htbp]
  \centering
   \subfigure{
    \includegraphics[width=0.38\textwidth]{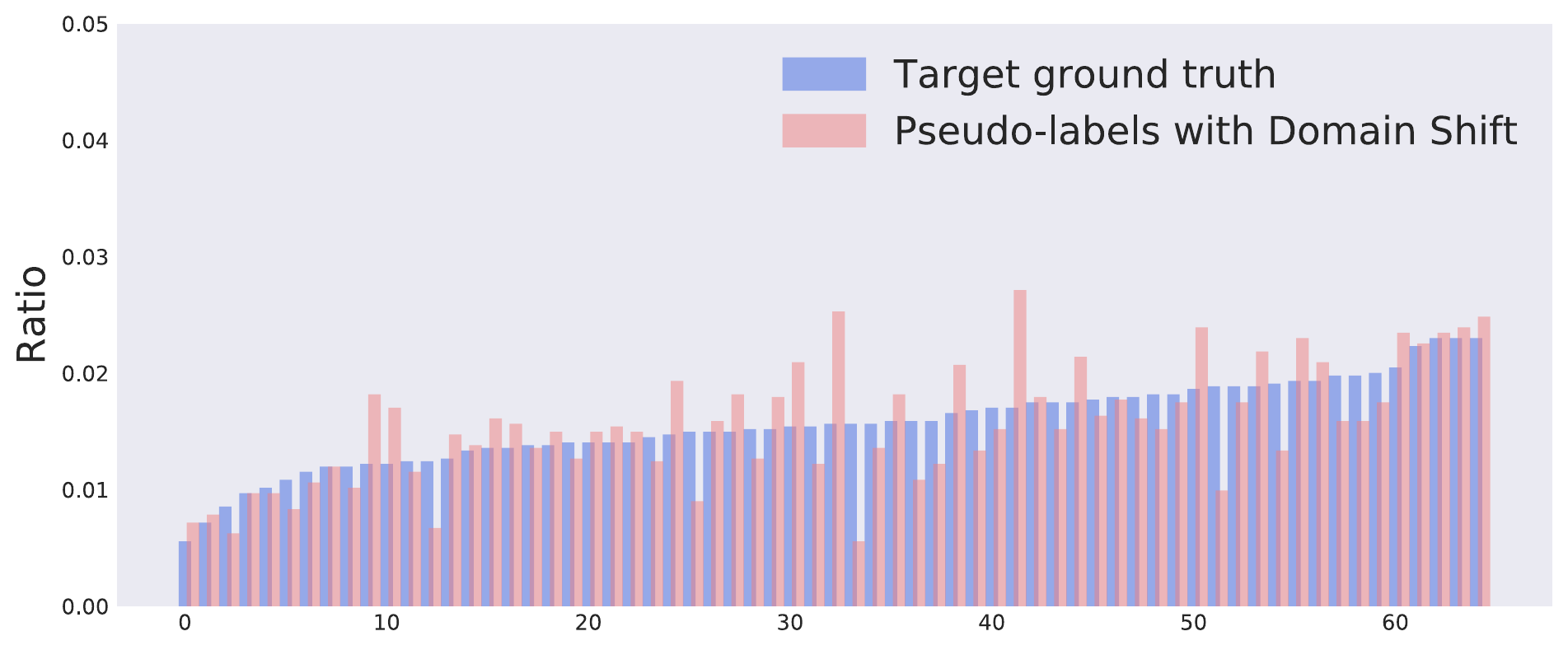} 
    }
   \subfigure{
   \includegraphics[width=0.38\textwidth]{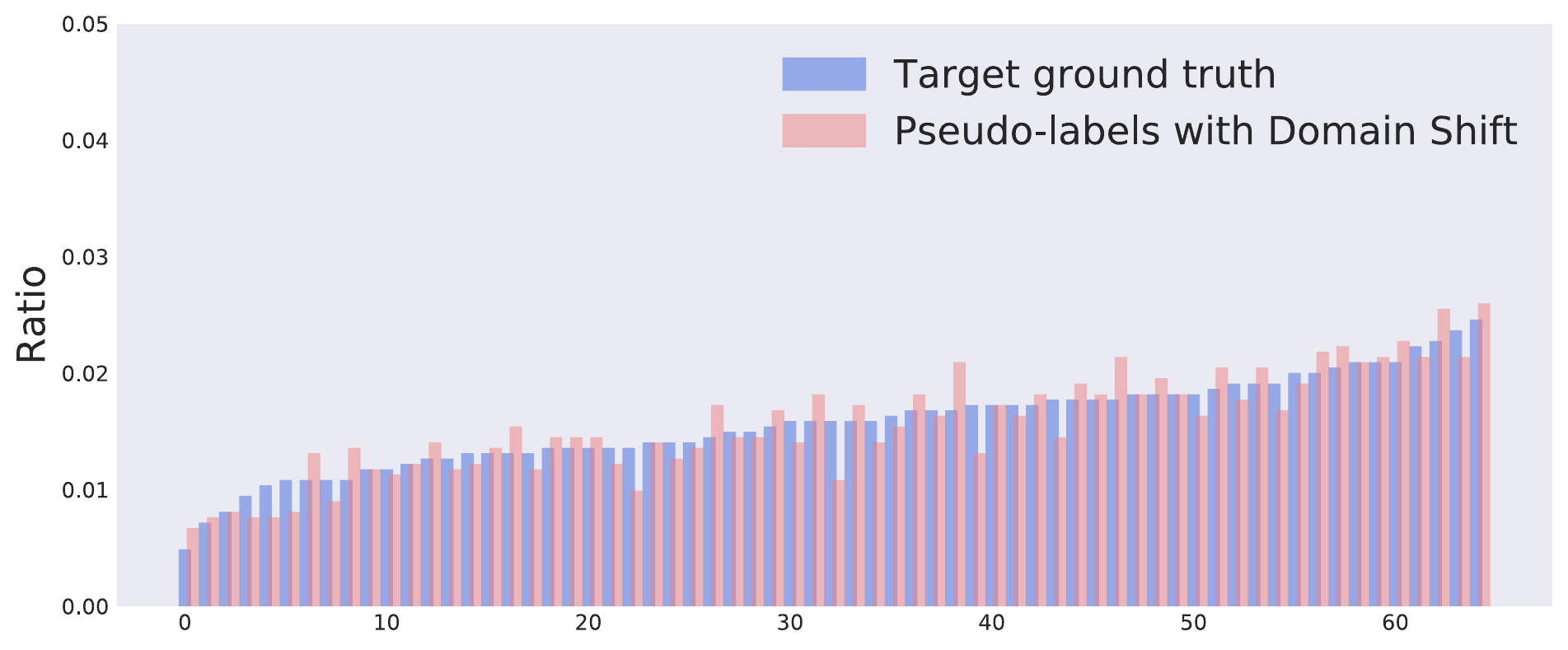} 
    }
   \subfigure{
    \includegraphics[width=0.16\textwidth]{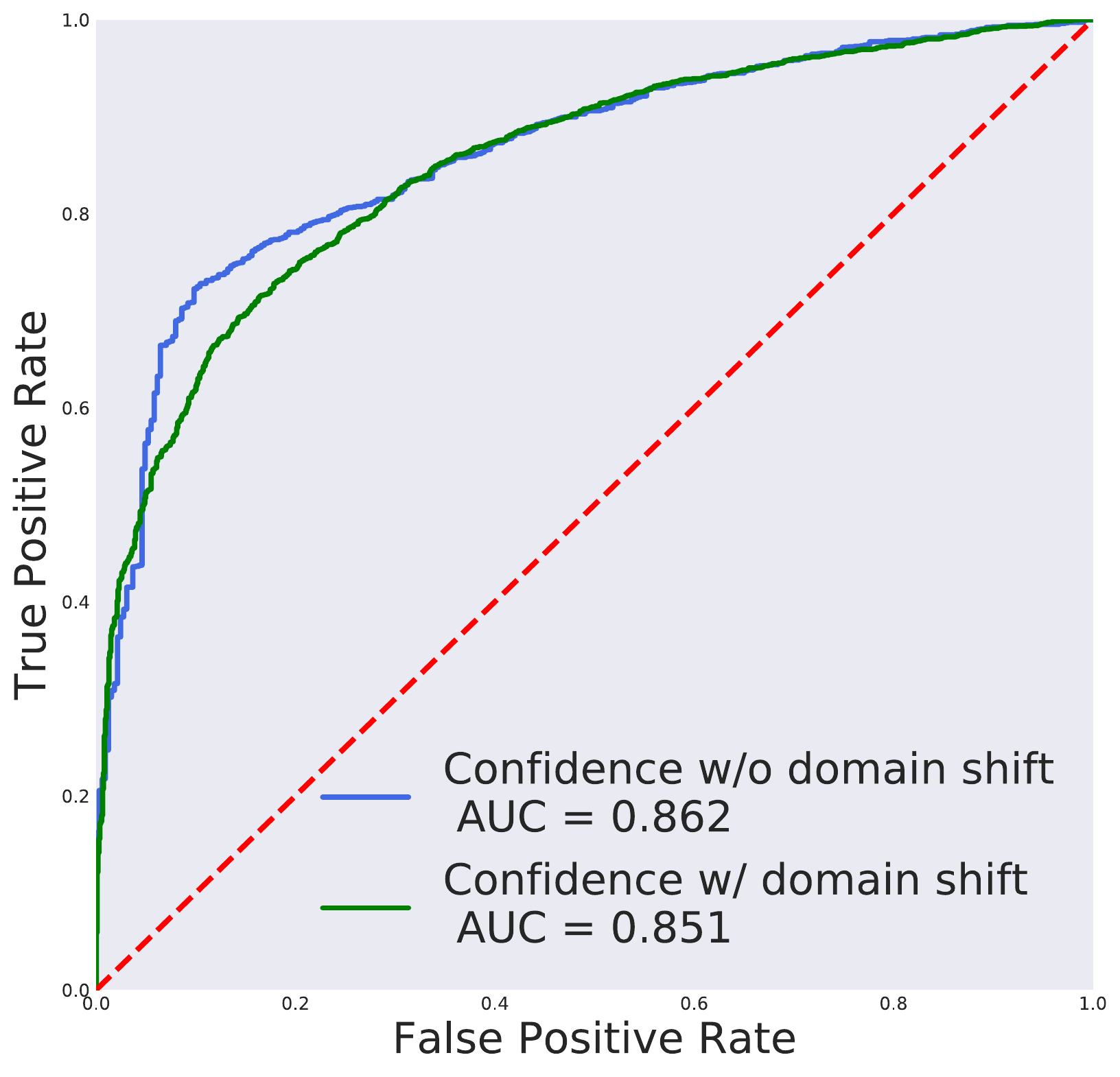}
    }
    \vskip -0.08in\
        \vspace{-5pt}
  \caption{\textbf{Analysis of pseudo-labels under domain shift on Art$\rightarrow$Real World.} Left: Comparison of pseudo-label distributions with and without domain shift. Right: Comparison of pseudo-label selection with and without domain shift. } 
  \label{fig:pseudo4}
\end{figure*}

\begin{figure*}[!htbp]
  \centering
   \subfigure{
    \includegraphics[width=0.38\textwidth]{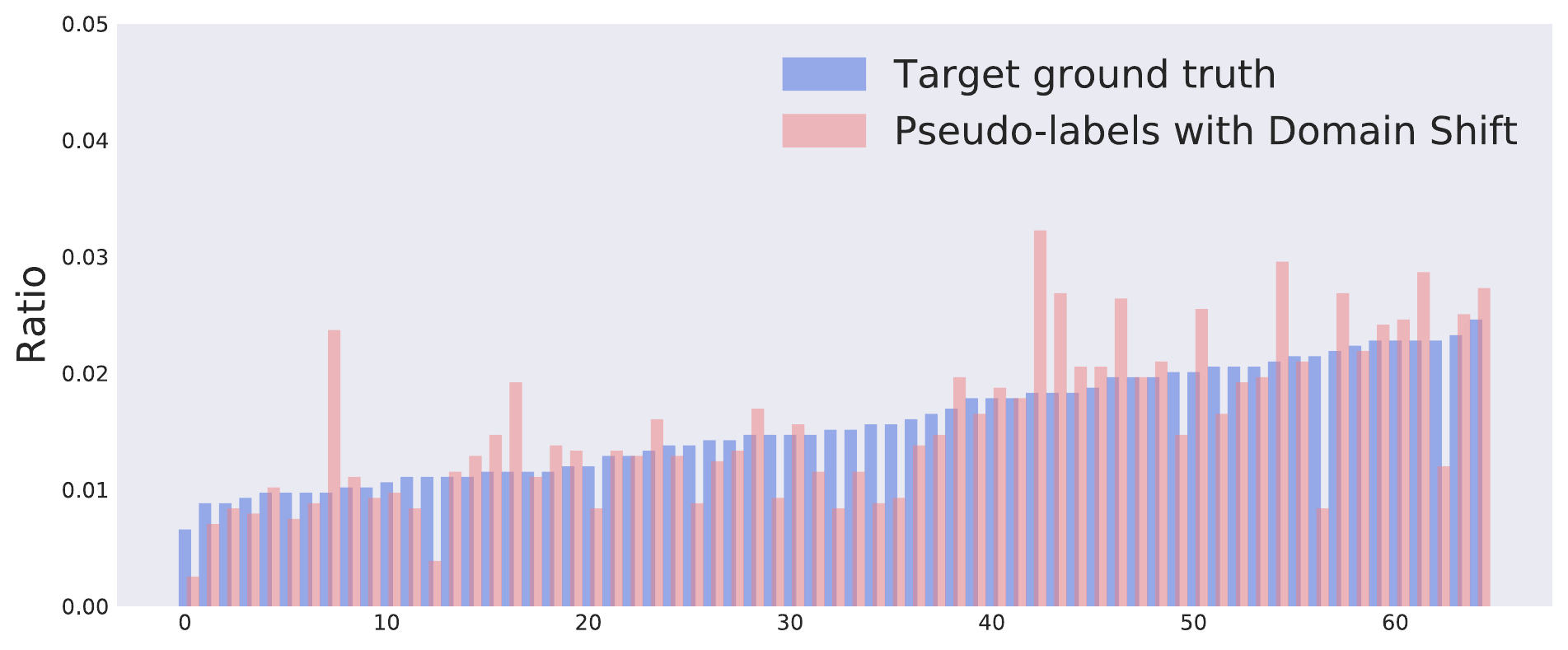} 
    }
   \subfigure{
   \includegraphics[width=0.38\textwidth]{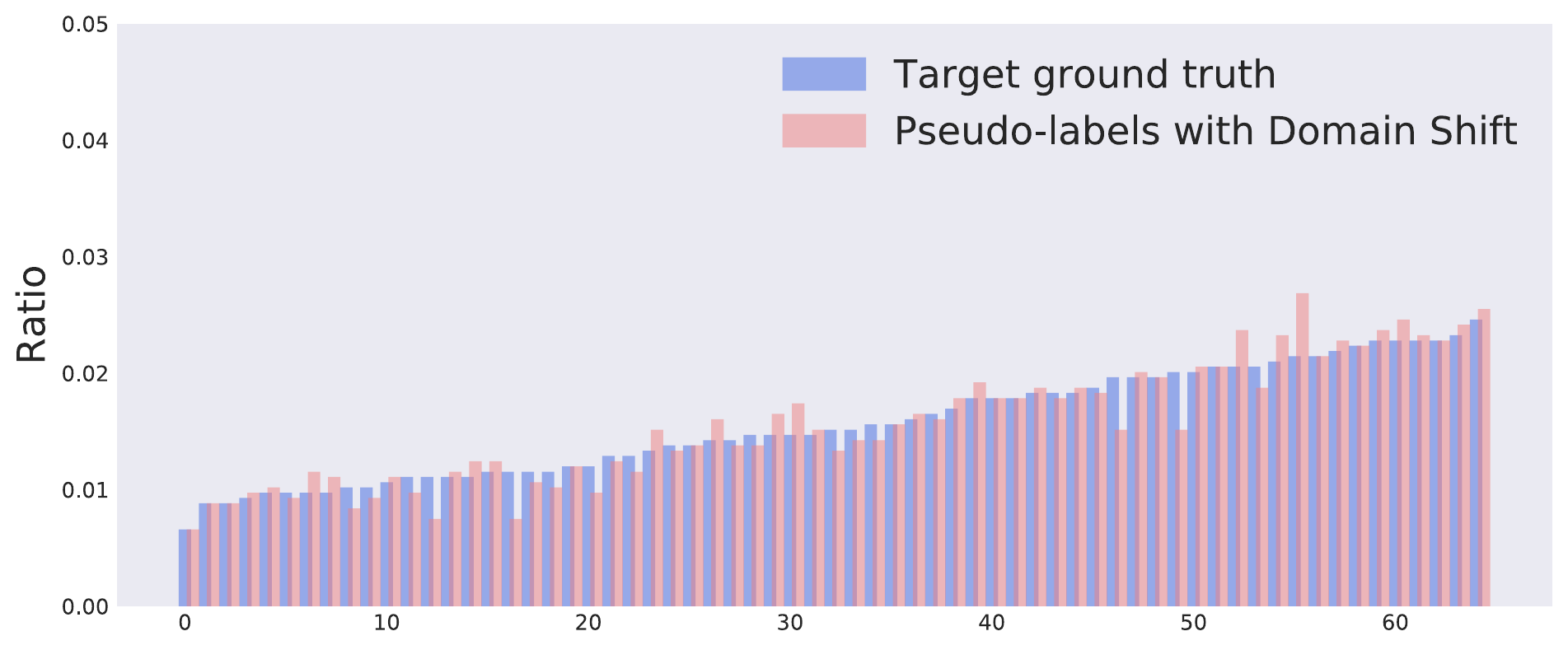} 
    }
   \subfigure{
    \includegraphics[width=0.16\textwidth]{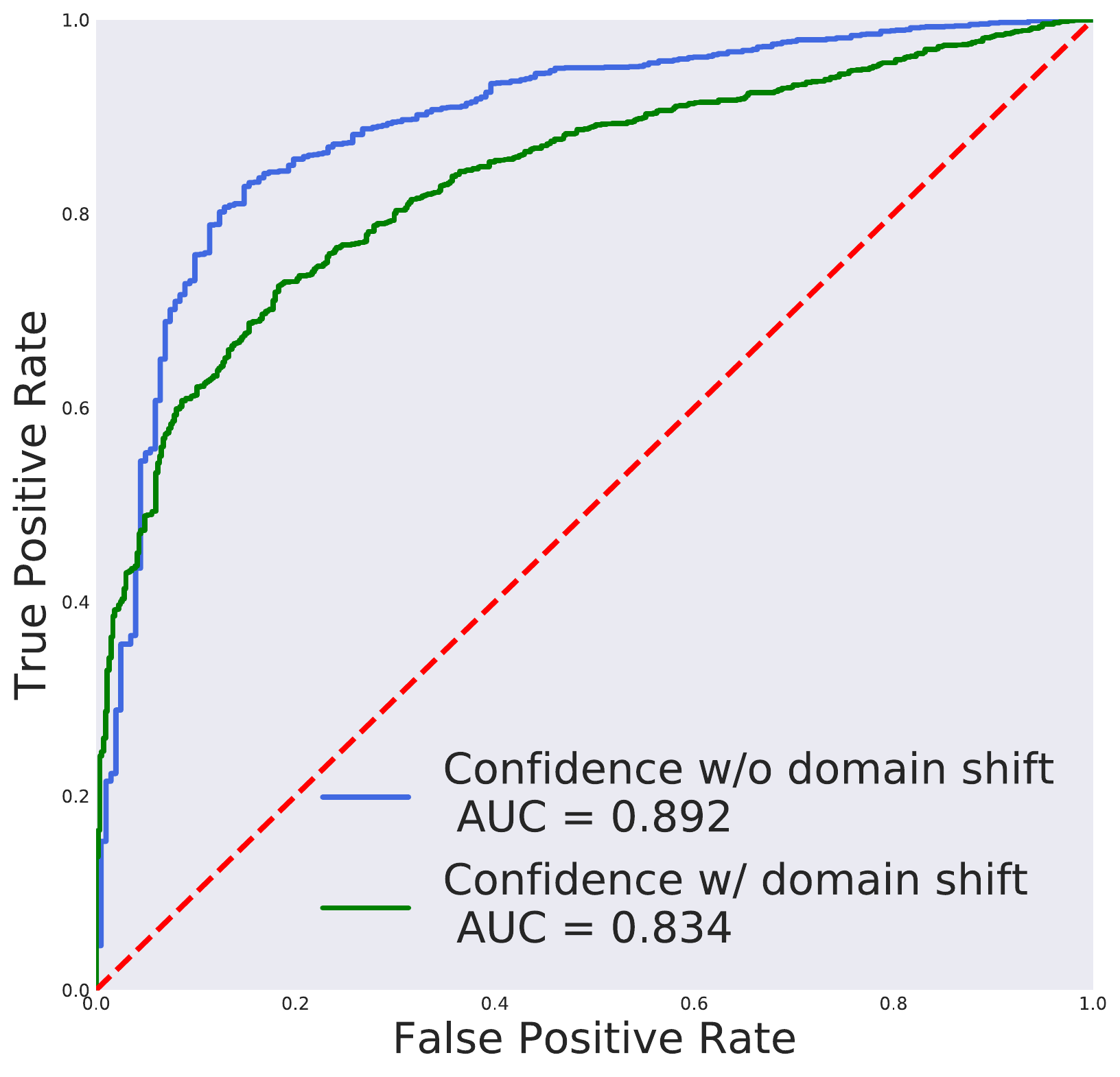}
    }
    \vskip -0.08in\
        \vspace{-5pt}
  \caption{\textbf{Analysis of pseudo-labels under domain shift on Real World$\rightarrow$Product.} Left: Comparison of pseudo-label distributions with and without domain shift. Right: Comparison of pseudo-label selection with and without domain shift. } 
  \label{fig:pseudo5}
\end{figure*}

\newpage
\subsection{Results on digit datasets}
We provide results on the digit datasets to test the performance of the proposed method without using pre-training. We use DTN architecture following~\citet{cite:NIPS18CDAN}. Results in Table~\ref{digit} indicate that CST achieve comparable performance to state-of-the-art.

\begin{table}[htbp]
\addtolength{\tabcolsep}{1pt} 
\centering 
\caption{Accuracy (\%) on digits datasets with DTN}
\label{digit}
\begin{small}
\begin{tabular}{l|cc}
\toprule
Method & MNIST$\rightarrow$USPS & SVHN$\rightarrow$MNIST \\
\midrule
CDAN & 95.6 $\pm$ 0.2 & 96.9 $\pm$ 0.2 \\  
RWOT (CVPR 2020) & 98.5 $\pm$ 0.2 & 97.5 $\pm$ 0.2 \\  
\midrule
\textbf{CST} & 98.5 $\pm$ 0.2 & \textbf{98.2} $\pm$ 0.2 \\  
\bottomrule
\end{tabular}
\end{small}
\end{table}

\subsection{Results on DomainNet}
We test the performance of the proposed method on the 40-class DomainNet~\citep{9010750} subset following the protocol of~\citet{2019Class}. Results in Table~\ref{DomainNet} indicate that CST outperforms MDD by a large margin.
\begin{table*}[htbp]
\addtolength{\tabcolsep}{-1.1pt} 
\centering
\caption{Accuracy (\%) on {DomainNet} for unsupervised domain adaptation (\texttt{ResNet-50}).}
\label{DomainNet}
\begin{small}
\begin{tabular}{l|cccccccccccc|c}
\toprule
Method & R-C & R-P & R-S & C-R & C-P & C-S & P-R & P-C & P-S & S-R & S-C & S-P & Avg. \\
\midrule
DANN \citep{cite:JMLR17DANN} & 63.4 & 73.6 & 72.6 & 86.5 & 65.7 & 70.6 & 86.9 & 73.2 & 70.2 & 85.7 & 75.2 & 70.0 & 74.5\\
COAL~\citep{2019Class} & 73.9 & 75.4 & 70.5 & 89.6 & 70.0 & 71.3 & 89.8 & 68.0 & 70.5 & 88.0 & 73.2 & 70.5 & 75.9\\
MDD~\citep{pmlr-v97-zhang19i} & 77.6 & 75.7 & 74.2 & 89.5 & 74.2 & 75.6 & 90.2 & 76.0 & 74.6 & 86.7 & 72.9 & 73.2 & 78.4\\
\midrule
\textbf{CST} & \textbf{83.9} & \textbf{78.1} & \textbf{77.5} & \textbf{90.9} & \textbf{76.4} & \textbf{79.7} & \textbf{90.8} & \textbf{82.5} & \textbf{76.5} & \textbf{90.0} &  \textbf{82.8}& \textbf{74.4} &\textbf{82.0}\\
\bottomrule
\end{tabular}
\end{small}
\end{table*}

\subsection{Standard deviations of Tables}
We visualize the performance of CST and best baselines in Table~\ref{Sentiment} with standard deviations. Results indicate that the improvement of CST over previous methods is significant.

\begin{figure*}[h]
  \centering 
      \subfigure{
    \includegraphics[width=0.9\textwidth]{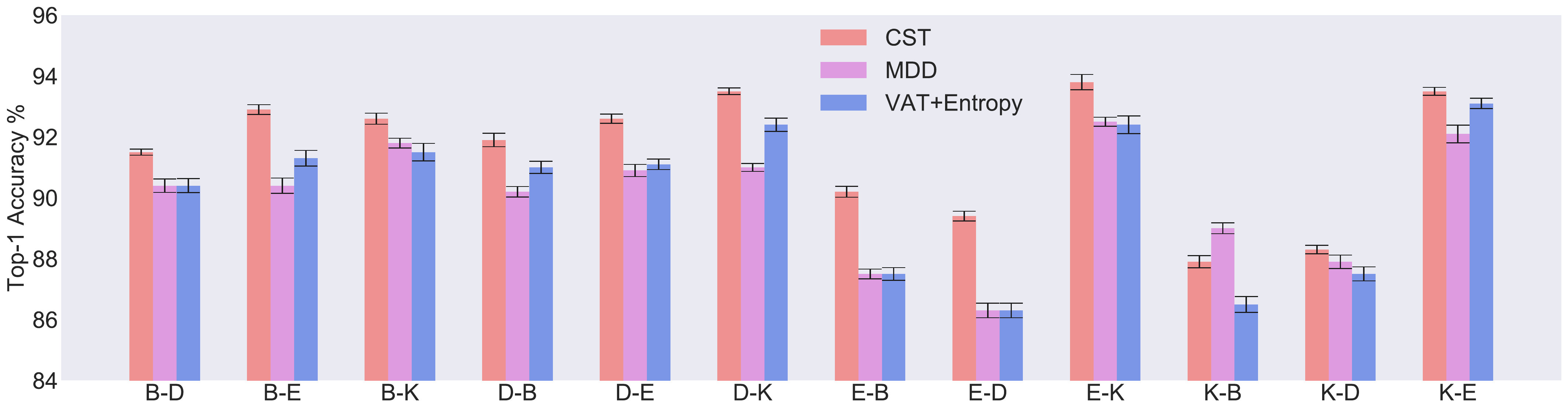}
    }
        \vskip -0.12in\
    \label{fig:err_bar}
  \caption{\textbf{Visualization of standard deviations of CST and baselines.} CST outperforms baselines significantly on all tasks except K$\rightarrow$B.} 
\end{figure*}

\newpage
\section{Limitations of CST and Future Directions}\label{sec:lim}
CST overcomes the drawbacks of standard pseudo-labeling in domain adaptation by dealing with the domain discrepancy explicitly with the cycle loss. However, pseudo-labeling is only one main direction of semi-supervised learning. Consistency regularization~\citep{cite:TPAMI18VAT} and self-ensembling~\citep{NIPS2014_66be31e4} are also important paradigms in semi-supervised learning. How to apply them to the setting with distribution shift and achieve consistent performance gain is still an open question. More recently, \citet{carlini2021poisoning} investigated the effect of adversarial unlabeled data poisoning to self-training. Future works can extend CST to this setting and extend consistency regularization as a potential way of defense.  

\section{Broader Impact}\label{sec:ethic}
This work studies and improves self-training in the unsupervised domain adaptation setting. When deployed in real-world applications, distribution shift between labeled and unlabeled data can come in various ways. Although the quality of labeled datasets can be monitored, enabling the mitigation of bias in pre-processing, bias in unlabeled datasets can be intractable. Self-training with biased unlabeled data is highly risky since it may potentially amplify the biased models predictions. This work explores how to mitigate the effect of dataset bias in unlabeled data, and can potentially promote fair self-training systems.

\end{document}